\pdfoutput=1

\documentclass[11pt,Chicago]{uuthesis2e}


\usepackage {amsthm}


\usepackage{mathpazo}


\fourlevels

\usepackage[letterpaper]{geometry}
\usepackage {amsmath}
\usepackage {amssymb}
\usepackage {bm}
\usepackage {bibnames}

\usepackage {citesort}
\usepackage {graphicx}
\usepackage {graphpap}
\usepackage {longtable}
\usepackage {multirow}
\usepackage {tikz}
\usetikzlibrary {decorations.markings}
\usepackage {varioref}
\usepackage {graphics}
 \geometry{
 left=1.25in,
 right=1.25in,
 top=1in,
 bottom=1in
 }

\setcounter{tocdepth}{4} 




\usepackage {uuthesis-2016-h}  

\usepackage {mythesis}
\usepackage{mystyle}

\author                 {Sunipa Dev}
\title                  {The Geometry of Distributed Representations for Better Alignment, Attenuated Bias, and Improved Interpretability}
\thesistype             {dissertation}

\dedication             {For my family.}

\degree                 {Doctor of Philosophy}

\approvaldepartment     {Computing}
\department             {School of Computing}
\graduatedean           {David B Kieda}
\departmentchair        {Mary Hall}

\committeechair         {Jeffrey Phillips}
\firstreader            {Vivek Srikumar}
\secondreader           {Suresh Venkatasubramanian}
\thirdreader            {Kai-Wei Chang}
\fourthreader           {Yan Zheng}
\chairtitle             {Professor}


\submitdate             {December 2020}
\copyrightyear          {2020}

\chairdateapproved      {10/02/2020}
\firstdateapproved      {10/02/2020}
\seconddateapproved     {10/02/2020}
\thirddateapproved      {10/02/2020}
\fourthdateapproved     {10/02/2020}


\begin{document}

\frontmatterformat
\titlepage
\copyrightpage
\dissertationapproval
\setcounter {page}     {2}             
\preface    {abstract} {Abstract}
\dedicationpage
\tableofcontents
\listoffigures
\listoftables
%



\preface{acknowledge}{Acknowledgements}


\maintext       

\pagestyle{headings} 


\chapter{Introduction}
\label{chap: intro}
Distributed embeddings are an efficient way to represent feature spaces in Natural Language Processing (NLP), computer vision and other domains in data mining and machine learning and have been used extensively for over a decade now. They meaningfully compress large scale data with many features into relatively few dimensions.
This efficiency of distributed embeddings in space and dimensionality reduction is specifically appreciated in domains like NLP, where, to represent the vocabulary of most languages in the form of simple bags of words models or one hot encodings would mean word representations of about $10^5$ or more dimensions per word. This leads to immense inefficiency in terms of using or storage of very large and sparse vectors for large vocabularies. The trade-off in using efficient distributed embeddings, however, lies in losing the interpretability that the full $10^5$ or so dimensional embedding would have had. The exact information encoded by a specific dimension is occluded and no longer known. Similar notions of a lack of interpretability in representations are also seen in other such embeddings, such as interaction or transaction based user and merchant representations and knowledge graph representations. Thus, understanding the structure and relative geometry of points after being represented in a distributed fashion is an important challenge that needs to be addressed. An increase in the explainability of these representations plays a direct and significant role in enriching respective downstream tasks along with making them more coherent, as we show in this work.

This dissertation strategically works towards achieving a greater understanding of the structure of distributed embeddings. Some direct implications are (i) understand and leverage the stability of relative associations between points for different target tasks, (ii) enhanced interpretability, or an understanding of where concept subspaces are located in the embedding space, and (ii) fairness by isolation and disentanglement of invalid and pernicious associations between concept subspaces in such embeddings. The goal here is to identify, isolate and decouple with precision, subspaces capturing specific features in the embedding. 
This also serves an important purpose of enriching different embeddings by selectively retaining or removing features as suited for specific downstream tasks, in a cost efficient, post-processing manner. 

The questions and methods described in this dissertation are applicable to a wide variety of distributed representations as demonstrated in Chapter \ref{chap: visa} . However, a primary focus here has been text representations. The motivation is as follows:

\begin{enumerate}

\item  They are the most abundant form of consistent embeddings. There are multiple embeddings of the same vocabulary (from different data sources and embedding mechanisms) and also embeddings for multiple languages and vocabularies. 

\item When created from sufficiently large and clean corpora such as the Wikipedia dumps of languages, they also have lesser noise than other embeddings such as points of interest embeddings from FourSquare data or merchant transaction based embeddings or online user activity data as described in a later chapter. The latter type is user-generated and is more prone to erratic changes as compared to language, making language representations a more controlled setting. 

\item Finally, word embeddings have interpretable markers in the form of word meanings. Thus, relationships between word representations are more interpretable as well. This implies that filtering or choosing sections of data or embeddings can be done with high confidence regarding coherence.
\end{enumerate}

In this chapter, we define the different problems and questions addressed in this dissertation.

\section{Embedding Structure and Alignment}
Embedding data into high-dimensional spaces is a realm much explored in different modules of data mining and machine learning. There are many different ways to embed the same set of points in some $n$ dimensional space. 
A set of points $A$, when embedded using different mechanisms in some $n$ dimensional space, occupy different regions of this space. They are oriented differently, have different origins and are likely scaled differently as well. However, we hypothesize that the relative distances and orientation between the points in $A$, irrespective of the mechanism used should be the same. Further, a change in the data used to model these points should also not impact the relative associations between the points. In Chapter \ref{chap: alignment}, we bolster our hypodissertation by extending the method by Horn \etal of aligning different high-dimensional embeddings of the same set of points. We demonstrate~\cite{absor} how using linear transformations consisting of at most a single step each of rotation, translation and scaling, we can align point sets onto each other as long as a one-to-one correspondence between the two point sets is known. We use word embeddings for our experiments and show how that has direct implications on downstream tasks such as language translation. 

\section{Identifying Concept Subspaces}

As described above, distributed representations have enabled us to compress large amounts of data and features into few dimensions but at the cost of a lack of interpretability of what features or information gets encoded in what dimension.
 Different features or concepts are thus captured in surreptitious ways in a combination of the explicit dimensions. These concepts no longer are orthogonal to each other in this space, rather have been arranged to compress a large number of concepts and features into few dimensions. For instance, in language representations, these concepts could be sentiments or attributes like gender or race. A lack of orthogonality between specific subspaces can thus be problematic, such as race and sentiments, such as the notion of good versus bad.

To intercept this and for increasing the interpretability of these language representations, identification of subspaces is an important task. Thus, we try to understand and locate within representations, the subspaces which capture different concepts. In Chapter \ref{chap: bias paper 1}, we define a new method to identify concept subspaces using small lists of related seed words. 
Further, in Chapter \ref{chap: visa}, we extend our methods to noisier embeddings with lesser structure than language representations. We demonstrate how to identify subspaces in transaction-based user and merchant representations where we do not have seed words for determination of concepts. 


\section{Disentangling Subspaces}
Associations in language representations are derived from data. While this allows it to learn associations, it can lead to the learning of associations that are invalid or socially biased. For instance, the association between different occupations and gender or age and ability does not exist in language itself but is learned from data by the embeddings. It is also seen in other types of data sources, such as race and location or zip code in income and transaction data. 

Further, these representations have been known to not just imbibe such invalid associations but also amplify them. This can lead to unwanted and often harmful outcomes when these representations are used as features for different modeling tasks.

In Chapters \ref{chap: bias paper 1}, \ref{chap: bias paper 2}, and \ref{chap: bias paper 3}, we introduce different ways to decouple invalid and biased associations between word groups and compare with other existing methods with a variety of metrics, to demonstrate their efficiency in debiasing representations while retaining valid information contained by the embeddings.
Chapter \ref{chap: bias paper 1} introduces a one-step, continuous, post-processing based approach for decoupling subspaces in context-free, static embeddings such as \GloVe representations. This approach relies on the removal of a subspace. Chapter \ref{chap: bias paper 2} extends this approach to be applicable to contextual embeddings which are more fluid in nature, such as BERT.
Finally, Chapter \ref{chap: bias paper 3} introduces a more controlled approach of decoupling than the removal based method. This method is based instead on orthogonalization of specific subspaces which minimizes information loss along with subspace disentanglement.

\section{Measuring Validity of Associations}
Associations learned by representations are data reliant and are not always valid as discussed in the previous section. As a result, it is important to verify the validity of such associations especially when they might be about sensitive or protected attributes. 

In Chapters \ref{chap: bias paper 1}, \ref{chap: bias paper 2}, and \ref{chap: bias paper 3}, we define a range of different intrinsic and extrinsic probes and metrics to examine and quantify the amount of valid and invalid associations along specific concepts in the embeddings. While our intrinsic probes are rooted in vector distances and thus, can be applicable to representations other than word embeddings, our extrinsic ones are specifically for language processing. The probes together help underline the need for such specific tests to score representations on, before deployment into different real world applications.

\chapter{Background Knowledge}
\label{chap: background}
Before we detail each of the problems introduced in Chapter \ref{chap: intro}, in this chapter, we visit some of the concepts that recur through the dissertation and are integral to it. 
\section{Word Embeddings}
Word embeddings are an increasingly popular application of neural networks wherein enormous text corpora are taken as input and words therein are mapped to a vector in some high-dimensional space. %
On an intrinsic level, these word vector representations estimate the similarity between words based on the context of their nearby text, or to predict the likelihood of seeing words in the context of another.  Richer properties were discovered such as synonym similarity, linear word relationships, and analogies such as \word{man} : \word{woman} :: \word{king} : \word{queen}.  Extrinsically as well, their use is now standard in training complex language models and in a variety of downstream tasks. 

Two of the first approaches to be used commonly are \WordToVec~\cite{Mik1,wordtovec} and \GloVe~\cite{glove}. 
These word vector representations began as attempts to estimate the similarity between words based on the context of their nearby text or to predict the likelihood of seeing words in the context of another.  Other more powerful properties were discovered.  Consider each word gets mapped to a vector $\ve{word} \in \R^d$.  

\begin{itemize}
\item \emph{Synonym similarity:}  
Two synonyms (e.g., $\ve{car}$ and $\ve{automobile}$) tend to have small Euclidean distances and large inner products and are often nearest neighbors.  

\item \emph{Linear relationships:}  
For instance, the vector subtraction between countries and capitals (e.g., $\ve{Spain}-\ve{Madrid}$, $\ve{France}-\ve{Paris}$, $\ve{Germany}-\ve{Berlin}$) are similar.  Similar vectors encode gender (e.g., $\ve{man}-\ve{woman}$), tense ($\ve{eat}-\ve{ate}$), and degree ($\ve{big}-\ve{bigger}$).  

\item \emph{Analogies:} 
The above linear relationships could be transferred from one setting to another.  For instance the gender vector $\ve{man}-\ve{woman}$ (going from a female object to a male object) can be transferred to another more specific female object, say $\ve{queen}$.  Then the result of this vector operation is $\ve{queen} + (\ve{man} - \ve{woman})$ is close to the vector $\ve{king}$ for the word ``\s{king}.''   This provides a mechanism to answer analogy questions such as ``\s{woman:man::queen:}?'' 

\item \emph{Classification:}  
More classically~\cite{RR07,RR08,hashkernels,featurehash}, one can build linear classifiers or regressors to quantify or identify properties like sentiments.  
\end{itemize}

\subsection{Word Embedding Mechanisms}
There are several different mechanisms today to represent text as high-dimensional vectors. They can primarily be divided into mechanisms: first, those that produce a single vector for each word (e.g., \GloVe~\cite{glove}), thus leading to a dictionary like structure, and the second (ELMO~\cite{Peters:2018}, BERT~\cite{bert}) producing a function instead of a vector for a word such that given a context, the vector for the word is generated. This implies that different word senses (river $bank$ versus financial institution $bank$) lead to differently embedded representations of the same word.

We will describe some of the word embedding methods here, which we have used in the rest of the dissertation:

\begin{itemize}
   
 \item \textbf{\RAW:} is the most direct of representations. It is explicit and has as many dimensions as there are words, each dimension corresponds with the rate of co-occurrences with a particular word; it is in some sense what other sophisticated models such as \GloVe are trying to understand and approximate at much lower dimensions. 

 \item \textbf{\GloVe:} or Global Vectors, uses an unsupervised learning algorithm~\cite{glove} for obtaining relatively low dimensional(~300) vector representations for words based on their aggregated global word-word co-occurrence statistics from a corpus.

 \item \textbf{\WordToVec:} builds representations of words so the cosine similarity of their embeddings can be used to predict a word that would fit in a given context and can be used to predict the context that would be an appropriate fit for a given word.

 \item \textbf{FastText:} scales~\cite{arm2016bag} these methods of deriving word representations to be more usable with larger data with less time cost. They also include sub-word information~\cite{bojanowski2016enriching} to be able to generate word embeddings for words unseen during training. We use FastText word representations for Spanish and French from the library provided (\url{https://fasttext.cc/docs/en/crawl-vectors.html}) for our translation experiments in Section \ref{app : langauges}.

 \item \textbf{ELMo:} or Embeddings from Language Models is a deep, contextualized word representation~\cite{Peters:2018} that produces contextual representations of words along with modeling their semantics and syntax. This helps represent the polysemy of words. These word representations are trained on large corpora and produce not static vectors but rather, learned functions of a deep bidirectional language model (biLM). ELMo typically produces three layers of representations for every token or word, which are combined in different ways (concatenation or interpolation) to produce a single representation.

 \item \textbf{BERT: } or Bidirectional Encoder Representations from Transformers~\cite{bert} uses a transformer based architecture to produce contextual representations for words. The bi-directional training model helps it to read whole sentences and contexts and not just sequentially in a left-to-right manner that is common. 
BERT commonly has two forms, BERT base and BERT large. While the former has 12 layers of representations, the latter has 24.

 \item \textbf{RoBERTa: } or Robustly Optimized BERT Pretraining Approach improves upon BERT by tuning and modifying some of the key hyperparameters of BERT, including the data size, learning rate and the next sentence prediction task that BERT is trained on.  

\end{itemize}
Contextual representations are now more commonly used in language processing tasks because of their state-of-the-art performance and their ability to distinguish between word senses.

We discuss in detail the settings (data, hyperparameters, pretrained models, etc.) we used for each of these embeddings for specific tasks in subsequent chapters.

\section{Biases in Language Representations}
The word ``bias" may have several subtly different meanings. The notion of bias used in this dissertation predominantly refers to the association of stereotypical terms, words and perceptions with protected attributes of people, such as gender, race, ethnicity, age, etc.

It has been observed~\cite{crime,credit} that such societal biases creep into different machine learning tasks and applications in many ways, one of which are the datasets they are trained on. Language representations also face the same predicament~\cite{Caliskan183,debias,ravfogel2020null}, wherein the biases in underlying text get picked up and amplified by them, such as the differential association of occupations to genders (doctor : male and nurse : female). We explore this in more detail in subsequent chapters. 

A universally accepted and broad definition for what and how bias manifests as in language representations is not determined and is perhaps, one of the ultimate goals that this active area of research is trying to achieve~\cite{blodgett2020language}.
In this dissertation, we define bias in language representation to be the invalid and stereotypical associations made by the representations about the aforementioned protected attributes.  

An important consideration when understanding bias in language modeling is the cause of bias. Possible causes consist of the underlying training text that the embeddings are derived from, the modeling choices made, the training data and methods of the language modeling task itself and so on. A lot of different metrics and probes that try to evaluate bias, capture bias from one or more of these sources. While the values alone of these metrics' results hold little value on their own, when different attempts are made to mitigate biases by altering only specific sections of the language modeling process, a change in said scores can tell us about the impact of altering these sections on biases contained. In this dissertation, we focus on biases in language representations, their sources and mitigation and evaluation techniques.

Representational harm~\cite{barocas2016big}, that we focus on, consists of stereotypes about different demographical and social groups. It is a reflection of the stereotypes that preexist in society that often get amplified\cite{credit,ZhaoWYOC17} and perpetuated by resulting in unfair associations of attributes with groups of people. This bias is observed in language representations in terms of invalid vector distances and associations, unfair contextual or sentence associations or pernicious outcomes in tasks where these representations have been used as features. We explore and quantify this in much greater detail Chapters \ref{chap: bias paper 1}, \ref{chap: bias paper 2}, and \ref{chap: bias paper 3}. However, this representational harm can lead to allocational harm~\cite{barocas2016big,blodgett2020language} or the differential allocation of resources, wealth, jobs, etc. as these representations are actively used in different tasks and applications. In Chapter \ref{chap: bias paper 3} especially, we see that representational biases do not just get reflected intrinsically and in vector distances, but rather affect downstream tasks of contextual nature, such as textual entailment.  

Bias in representations thus constitute an important challenge to be addressed before we accept the widespread usage of word representations across different domains and tasks.

\section{Existing Tests and Metrics}
Some common intrinsic ways to evaluate the quality of embeddings rely on word vector associations. These are based on vector distances and are thus more meaningful in the context-free setting of word representations.

\subsection{Similarity and Analogy Tests}
In an embedding space, if two vectors have a high cosine similarity, i.e., are aligned closely, they are seen to be similar mathematically. However, if the words they represent are also similar in the language itself, then, the similarity of the representations is meaningful. This is what is measured by different similarity tests that score the mathematical similarities of the representations against human assigned similarities of the words. The human assigned scores and mathematical scores are compared using Spearman Coefficient typically, with a high score implying that meaningful associations were learned by the embedding. Some commonly used similarity tests were developed using word pair lists for the same~\cite{wsim}.

The ability of word representations to understand linear relations has been a major proponent for their usage in general. As a result, word embeddings can meaningfully complete analogies such as ``\word{man} : \word{woman} :: \word{king} : \word{queen}" or ``\word{big} : \word{bigger} :: \word{small} : \word{smaller}", wherein providing the first three words in the analogy prompts the embedding to complete it with the fourth word in the analogy.

This property was expanded into a metric to evaluate the embeddings on their ability to understand relationships between words. A commonly used analogy based test~\cite{wordtovec}, developed by Google is comprised of over $15,000$ such analogies.
\subsection{WEAT Test for Biases}
\label{sec: weat}
There also is a need to test word embeddings for the biases they carry, as has been observed by earlier work~\cite{Caliskan183} and in this dissertation.

Word Embedding Association Test (WEAT) was defined as an analogue to Implicit Association Test (IAT) by Caliskan \etal \cite{Caliskan183}. It checks for human like bias associated with words in word embeddings. For example, it found career oriented words (executive, career, etc.) more associated with male names and male gendered words (``man",``boy" etc.) than female names and gendered words and family oriented words (``family", ``home" etc.) more associated with female names and words than male. We list a set of words used for WEAT by Calisan \etal and that we used in our work below.

For two sets of target words X and Y and attribute words A and B, the WEAT test statistic is first calculated as:
\begin{equation}
\label{weat}
s(X,Y,A,B) = \frac{1}{n_1}\sum_{x\in X} s(x,A,B) - \frac{1}{n_2}\sum_{y\in Y} s(y,A,B)
\end{equation}
where, 
\[s(w,A,B) = mean_{a \in A} cos(a,w) - mean_{b \in B} cos(b,w)\] and $cos(a,b)$,
is the cosine distance between vector a and b and $n_1$ and $n_2$ are the number of words in $X$ and $Y$ respectively.

The score in Equation \ref{weat} is normalized by the standard deviation over $s(w,A,B)$ for all words $w$, in $X \cup Y$ to get the WEAT score. So, closer to 0 this value is, the less bias or preferential association target word groups have to the attribute word groups.
Here target words are occupation words or career/family oriented words and attributes are male/female words or names. Some example sets of words used are listed here and the full list of words used for the test is listed in Appendix \ref{app: words weat}.

Career : \{executive, management, professional, corporation, salary, office, business\} 

Family : \{home, parents, children, family, cousins, marriage, wedding, relatives\}

Male names : \{john, paul, mike, kevin, steve, greg, jeff, bill\} 

Female names : \{amy, joan, lisa, sarah, diana, kate, ann, donna\} 

Male words : \{male, man, boy, brother, he, him, his, son\} 

Female words : \{female, woman, girl, she, her, hers, daughter\} 

So, a typical WEAT test would evaluate the differential association of male and female words (or names) with career and family.

\section{Applications in Other Distributed Representations}
The primary focus of this dissertation is word representations. However, the methods described apply across different domains and types of distributed representations, as demonstrated in Chapter \ref{chap: visa} where we extend some methods on transaction data based merchant representations. These merchant representations are created in a way similar to word embeddings and each high-dimensional point represents a unique merchant or point of sale. We discuss more about the embeddings, the way they are derived and the extensions of some of our methods onto them in Chapter \ref{chap: visa}.

There are different domains with structured data where distributed representations are used diversely to capture contextual information. For instance, node2vec~\cite{grover2016node2vec} represents networks with low-dimensional embeddings based off of simulations of random walks across its nodes. Representations of knowledge graphs~\cite{petar2016rdf2vec} are made in a similar manner.
Another representation, Tag2Image~\cite{gong2012multiview} embeds images and text tags associated jointly in the same space. All of these representations use the context in sequences of data to create their embeddings, thus capturing the structure of the underlying data. They mirror the similarities between data points by the vector similarities, like in word representations. 

As distributed representations are used more and more pervasively to understand and represent structured data efficiently, there is an increased requirement to understand the spatial structure of the data and the feature subspaces contained. The methods described in this dissertation extend to most of these scenarios.

\chapter{Closed Form Word Embedding Alignment}
\label{chap: alignment}
In this Chapter, we develop a family of techniques to align word embeddings which are derived from different source datasets or created using different mechanisms (e.g., GloVe or word2vec). 
Our methods are simple and have a closed form to optimally rotate, translate, and scale to minimize the root mean squared errors or maximize the average cosine similarity  between two embeddings of the same vocabulary into the same dimensional space.  
Our methods extend approaches known as Absolute Orientation, which are popular for aligning objects in three-dimensions.  
We prove new results for optimal scaling and for maximizing cosine similarity.  		
Then we demonstrate how to evaluate the similarity of embeddings from different sources or mechanisms, and that certain properties like synonyms and analogies are preserved across the embeddings and can be enhanced by simply aligning and averaging ensembles of embeddings. 

\section{Structure of Word Embeddings}

\vspace{-1mm}

Embedding complex data objects into a high-dimensional, but easy to work with, feature space has been a popular paradigm in data mining and machine learning for more than a decade~\cite{RR07,RR08,hashkernels,featurehash}.  
This has been especially prevalent recently as a tool to understand language, with the popularization through \WordToVec~\cite{Mik1,wordtovec} and \GloVe~\cite{glove}.  These approaches take as input a large corpus of text, and map each word, which appears in the text to a vector representation in a high-dimensional space (typically $d=300$ dimensions).  

These word vector representations, which began as attempts to estimate similarity between words based on the context of their nearby text, exhibit other interesting properties such as synonym similarities between the representations or linear relationships between the representations as described in Chapter \ref{chap: background}.

At least in the case of \GloVe, these linear substructures are not accidental; the embedding aims to preserve inner product relationships.  Moreover, these properties all enforce the idea that these embeddings are useful to think of inheriting a Euclidean structure, i.e., it is safe to represent them in $\R^d$ and use Euclidean distance.  

However, there is nothing extrinsic about any of these properties.  A rotation or scaling of the entire dataset will not affect synonyms (nearest neighbors), linear substructures (dot products), analogies, or linear classifiers.  A translation will not affect distance, analogies, or classifiers, but will affect inner products since it effectively changes the origin.  These substructures (i.e., metric balls, vectors, halfspaces) can be transformed in unison with the embedded data.  
Indeed Euclidean distance is the only metric on $d$-dimensional vectors that is rotation invariant.

The intrinsic nature of these embeddings and their properties adds flexibility that can also be a hindrance.  In particular, we can embed the same dataset into $\R^d$ using two approaches, and these structures cannot be used across datasets.  Or two data sets can both be embedded into $\R^d$ by the same embedding mechanism, but again the substructures do not transfer over.  That is, the same notions of similarity or linear substructures may live in both embeddings, but have different meaning with respect to the coordinates and geometry.  
This makes it difficult to compare approaches; the typical way is to just measure a series of accuracy scores, for instance, in recovering synonyms~\cite{Lev2,Mik1}.  However, these single performance scores do not allow deeper structural comparisons.  

%

Another issue is that it becomes challenging (or at least messier) to build ensemble structures for embeddings.  For instance, some groups have built word vector embeddings for enormous datasets (e.g., \GloVe embedding using 840 billion tokens from Common Crawl, or the \WordToVec embedding using 100 billion tokens of Google News), which costs at least tens of thousands of dollars in cloud processing time.  Given several such embeddings, how can these be combined to build a new single better embedding without revisiting that processing expense?  How can a new (say specialized) data set from a different domain use a larger high-accuracy embedding?

In this chapter, we provide a simple closed form method to optimally align two embeddings.  These methods find optimal rotation (technically an orthogonal transformation) of one dataset onto another, and can also solve for the optimal scaling and translation.  
They are optimal in the sense that they minimize the sum of squared errors under the natural Euclidean distance between all pairs of common data points, or they can maximize the average cosine similarity.  

The methods we consider are easy to implement, and are based on $3$-dimensional shape alignment techniques common in robotics and computer vision called ``absolute orientation.''
We observe that these approaches extend to arbitrary dimensions $d$; the same solution for the optimal orthogonal transformation was also recently derived again by Smith \etal (2017)~\cite{SmithTHH17}.  

In this chapter, we also show that an approach to choose the optimal scaling of one dataset onto another~\cite{Hor87} does not affect the optimal choice of rotation.  Hence, the choice of translation, rotation, and scaling can all be derived with simple closed form operations.  

We then apply these methods to align various types of word embeddings, providing new ways to compare, translate, and build ensembles of them.  
We start by aligning data sets to themselves with various types of understandable noise; this provides a method to calibrate the error scores reported in other settings.  
We also demonstrate how these aligned embeddings perform on various synonym and analogy tests, whereas without alignment the performance is very poor.  The results with scaling, translation, and weighting all consistently improve upon the results for only rotation as advocated by Smith \etal (2017)~\cite{SmithTHH17}.  

Moreover, we show that we can boost embeddings, showing improved results when aligning various embeddings, and taking simple averages of the embedded words from different data sets.  The results from these boosted embeddings provide the best known results for various analogy and synonym tests.  
More extensive use of ensembles should be possible, and it could be applied to a wider variety of data types where Euclidean feature embeddings are known, such as for graphs~\cite{DeepWalk,node2vec,CZC17,GF17,DCS17}, images~\cite{SIFT,SURF}, and for kernel methods~\cite{RR07,RR08}. 

This alignment can also aid translation, wherein an alignment learned from a small set of words whose translation is known, we can obtain an alignment of a much larger set of words.  We also show how aligning two low-resource languages independently to a well-documented and accurate intermediate language can aid in translation between the first two languages. 

Finally, in the last few years, contextualized embeddings, such as BERT \cite{bert} and ELMo \cite{Peters:2018}, which embed a word differently each time, based on the context it appears in, have become increasingly pervasively used in language processing tasks such as textual entailment and co-reference resolution.  We show that a simple average of the contexts allows our techniques to efficiently extend to modeling a more complex multi way alignment among word representations.


\section{Closed Form Point Set Alignment}
\label{sec:AO}

In many classic computer vision and shape analysis problems, a common problem is the alignment of two (often $3$-dimensional) shapes.  The most clean form of this problem starts with two points sets $A = \{a_1, a_2, \ldots, a_n\}$ and $B = \{b_1, b_2, \ldots, b_n\}$, each of size $n$, where each $a_i$ corresponds with $b_i$ (for all $i \in 1,2,\ldots,n$).  Generically we can say each $a_i, b_i \in \R^d$ (without restricting $d$), but as mentioned the focus of this work was typically restricted to $d=3$ or $d=2$.  Then the standard goal was to find a rigid transformation -- a translation $t \in \R^d$ and rotation $R \in \SO(d)$ -- to minimize the root mean squared error (RMSE).  
An equivalent formulation is to solve for the sum of squared errors as
\begin{equation}\label{eq:opt-R+t}
(R^*, t^*)  = \argmin_{t \in \R^d, R \in \SO(d)} \sum_{i=1}^n \|a_i - (b_i R + t)\|^2.
\end{equation}
For instance, this is one of the two critical steps in the well-known iterative closest point (ICP) algorithm~\cite{BM92,CM92}.  

In the 1980s, several \emph{closed form} solutions to this problem were discovered; their solutions were referred to as solving \emph{absolute orientation}.  
The most famous paper by Horn~\cite{Hor87} uses unit quaternions.  However, this approach seems to have been known earlier~\cite{FH83}, and other techniques using rotation matrices and the SVD~\cite{HN81,AHB87}, rotation matrices and an eigen-decomposition~\cite{SS87,Sch66},  and dual number quaternions~\cite{WSV91}, have also been discovered.  In $2$ or $3$ dimensions, all of these approaches take linear (i.e., $O(n)$) time, and in practice, have roughly the same run time~\cite{ELF97}.

In this document, we focus on the Singular Value Decomposition (SVD)-based approach of Hanson and Norris~\cite{HN81}, since it is clear, has an easy analysis, and unlike the quaternion-based approaches which only work for $d=3$, generalizes to any dimension $d$. A singular value decomposition(SVD) factorizes a matrix of dimensions $m \times n$ to produce two orthonormal matrices ($U$ and $V$) and a diagonal matrix ($S$) to satisfy the linear transformation $x = Ax$. The orthonormal matrices capture the rotation or reflection of the space while the diagonal matrix $S$ captures the singular values which interpret the magnitude of information along each of the respective dimensions.
Hanson and Norris's approach decouple the rotation from the translation and solve for each independently. It further uses the orthonormal matrices produced by SVD to determine the rotation. In particular, this approach first finds the means $\bar a = \frac{1}{n} \sum_{i=1}^n a_i$ and $\bar b = \frac{1}{n} \sum_{i=1}^n b_i$ of each data set.  Then it creates centered versions of those data sets $\hat A \leftarrow (A,\bar a)$ and $\hat B \leftarrow (B, \bar b)$.  
%
Next, we need to compute the RMSE-minimizing rotation (all rotations are then considered around the origin) on centered data sets $\hat A$ and $\hat B$.  First compute the sum of outer products $H = \sum_{i=1}^n \hat b_i^T \hat a_i$, which is a $d \times d$ matrix.  We emphasize $\hat a_i$ and $\hat b_i$ are row vectors, so this is an outer product, not an inner product.  Next take the singular value decomposition of $H$ so $[U, S, V^T] = \mathsf{svd}(H)$, and the ultimate rotation is $R = U V^T$.  We can create the rotated version of $B$ as $\tilde B = \hat B R$ so we rotate each point as $\tilde b_i = \hat b_i R$.  



Within this chapter, we will use this approach, as outlined in Algorithm \ref{alg:AO-rotate}, to align several data sets, each of which have no explicit intrinsic properties tied to their choice of rotation.  
We, in general, do not use the translation step for two reasons.  First, this effectively changes the origin and hence the inner products.  Second, we observe the effect of translation is usually small and typically does not improve performance.  

\begin{algorithm}
	\caption{\label{alg:AO-rotate}$\AOR(A,B)$}
	\begin{algorithmic}
		\STATE Compute the sum of outer products $H = \sum_{i=1}^{n} b_i^T a_i$   
		\STATE Decompose  $[U, S, V^T] = \mathsf{svd}(H)$ 
		\STATE Build rotation $R = U V^T$   
		\STATE \textbf{return} $\tilde B = B R$ so each $\tilde b_i = b_i R$  
	\end{algorithmic}
\end{algorithm}

Technically, this may allow $R$ to include mirror flips, in addition to rotations.  These can be detected (if the last singular value is negative) and factored out by multiplying by a near-identity matrix $R = U I_- V^T$ where $I_-$ is identity, except the last term is changed to $-1$.  We ignore this issue in this chapter, and henceforth consider orthogonal matrices $R \in \OG(d)$ (which includes mirror flips) instead of just rotations $R \in \SO(d)$.  For simpler nomenclature, we still refer to $R$ as a ``rotation.''

We discuss here a few other variants of this algorithm which take into account translation and scaling between $A$ and $B$.
 
\begin{algorithm}
	\caption{\label{alg:AO}\AO$(A,B)$ \cite{HN81}}
	\begin{algorithmic}
		\STATE Compute $\bar a = \frac{1}{n} \sum_{i=1}^n a_i$ and $\bar b = \frac{1}{n} \sum_{i=1}^n b_i$  
		\STATE \underline{Center} $\hat A \leftarrow (A,\bar a)$ so each $\hat a_i = a_i - \bar a$, and similarly $\hat B \leftarrow (B, \bar b)$   
		\STATE Compute the sum of outer products $H = \sum_{i=1}^{n} \hat b_i^T \hat a_i$   
		\STATE Decompose  $[U, S, V^T] = \mathsf{svd}(H)$ 
		\STATE Build rotation $R = U V^T$   
		\STATE \underline{Rotate} $\tilde B = \hat B R$ so each $\tilde b_i = \hat b_i R$  
		\STATE Translate $B^* \leftarrow (\tilde B, - \bar a)$ so each $b^*_i = \tilde b_i + \bar a$  
		\STATE \textbf{return} $B^*$
	\end{algorithmic}
\end{algorithm}

Note that the rotation $R$ and translation $t = - \bar b + \bar a$ derived within this Algorithm \ref{alg:AO} are not exactly the optimal $(R^*, t^*)$ desired in formulation (\ref{eq:opt-R+t}).  This is because the order these are applied, and the point that the data set is rotated around is different.  In formulation (\ref{eq:opt-R+t}) the rotation is about the origin, but the dataset is not centered there, as it is in Algorithm \ref{alg:AO}.  

\noindent \textbullet\ \myParagraph{Translations}
To compare with the use of also optimizing for the choice of translations in the transformation, we formally describe this procedure here.  In particular, we can decouple rotations and translations, so to clarify the discrepancy between Algorithm \ref{alg:AO} and equation (\ref{eq:opt-R+t}), we use a modified version of the above procedure.  In particular, we first center all data sets, $\hat A \leftarrow A$ and $\hat B \leftarrow B$, and henceforth can know that they are already aligned by the optimal translation.  Then, once they are both centered, we can then call $\AOR(\hat A, \hat B)$.  
This is written explicitly and self-contained in Algorithm \ref{alg:AO-cent}.

\begin{algorithm}
	\caption{\label{alg:AO-cent}$\AOc(A,B)$}
	\begin{algorithmic}
		\STATE Compute $\bar a = \frac{1}{n} \sum_{i=1}^n a_i$ and $\bar b = \frac{1}{n} \sum_{i=1}^n b_i$  
		\STATE \underline{Center} $\hat A \leftarrow (A,\bar a)$ so each $\hat a_i = a_i - \bar a$, and similarly $\hat B \leftarrow (B, \bar b)$   
		\STATE Compute the sum of outer products $H = \sum_{i=1}^{n} \hat b_i^T \hat a_i$   
		\STATE Decompose  $[U, S, V^T] = \mathsf{svd}(H)$ 
		\STATE Build rotation $R = U V^T$   
		\STATE \underline{Rotate} $\tilde B = \hat B R$ so each $\tilde b_i = \hat b_i R$  
		\STATE \textbf{return} $\hat A$, $\tilde B$
	\end{algorithmic}
\end{algorithm}

%
%
%

\noindent \textbullet\ \myParagraph{Scaling}
In some settings, it makes sense to align data sets by scaling one of them to fit better with the other, formulated as  
$
(R^*, t^*, s^*)  = \argmin_{s \in \R, R \in \SO(d)} \sum_{i=1}^n \|a_i - s (b_i-t) R \|^2.
$
In addition to the choices of translation and rotation, the optimal choice of scaling can also be decoupled.  

Horn \etal~\cite{Hor87} introduced two mechanisms for solving for a scaling that minimizes RMSE.  Assuming the optimal rotation $R^*$ has already been applied to obtain $\hat B$, then a closed form solution for scaling is 
$
s^* 
= \sum_{i=1}^n \langle \hat a_i, \hat b_i \rangle / \|\hat B\|_F^2.  
$
The sketch for Absolute Orientation with scaling, is in Algorithm \ref{alg:AO+scale}.

\begin{algorithm}
	\caption{\label{alg:AO+scale}$\AOs(A,B)$}
	\begin{algorithmic}
		\STATE $\tilde B \leftarrow \AOR(A,B)$
		\STATE Compute scaling $s = \sum_{i=1}^n \langle a_i, \tilde b_i \rangle / \|\tilde B\|_F^2$
		\STATE \textbf{return} $\breve B$ as $\breve B \leftarrow s \tilde B$ so for each $\breve b_i = s \tilde b_i$.  
	\end{algorithmic}
\end{algorithm}

The steps of rotation, scaling and translation fit together to give us algorithm \ref{alg:AO-scale}. 

\begin{algorithm}
	\caption{\label{alg:AO-scale}$\AOsc(A,B)$}
	\begin{algorithmic}
		\STATE Compute $\bar a = \frac{1}{n} \sum_{i=1}^n a_i$ and $\bar b = \frac{1}{n} \sum_{i=1}^n b_i$  
		\STATE \underline{Center} $\hat A \leftarrow (A,\bar a)$ so each $\hat a_i = a_i - \bar a$, and similarly $\hat B \leftarrow (B, \bar b)$   
		\STATE Compute the sum of outer products $H = \sum_{i=1}^{n} \hat b_i^T \hat a_i$   
		\STATE Decompose  $[U, S, V^T] = \mathsf{svd}(H)$ 
		\STATE Build rotation $R = U V^T$   
		\STATE \underline{Rotate} $\tilde B = \hat B R$ so each $\tilde b_i = \hat b_i R$  
		\STATE Compute scaling $s = \sum_{i=1}^n \langle a_i, b_i \rangle / \|B\|_F^2$
		\STATE \underline{Scale} $\breve B$ as $\breve B \leftarrow s \tilde B$ so for each $\breve b_i = s \tilde b_i$.  
		\STATE \textbf{return} $\tilde A, \breve B$
	\end{algorithmic}
\end{algorithm}


Horn \etal~\cite{Hor87} presented an alternative closed form choice of scaling $s$ which minimizes RMSE, but under a slightly different situation.  In this alternate formulation, $A$ must be scaled by $1/s$ and $B$ by $s$, so the new scaling is somewhere in the (geometric) middle of that for $A$ and $B$.  We found this formulation less intuitive, since the RMSE is dependent on the scale of the data, and in this setting, the new scale is aligned with neither of the data sets.  
%
However, Horn \etal~\cite{Hor87} only showed that the choice of optimal scaling is invariant from the rotation in the second (less intuitive) variant.  We present a proof that this rotation invariance also holds for the first variant.  The proof uses the structure of the SVD-based solution for optimal rotation, with which Horn \etal may not have been familiar.

\begin{lemma}\label{lem:scaling}
Consider two points sets $A$ and $B$ in $\R^d$.  
After the rotation and scaling in Algorithm \ref{alg:AO+scale}, no further rotation about the origin of $\breve B$ can reduce the RMSE.  
\end{lemma}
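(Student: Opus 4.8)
The plan is to show that, writing a further orthogonal transformation $Q \in \OG(d)$ applied about the origin, the identity $Q = I$ already minimizes the sum of squared errors $\sum_{i=1}^n \|a_i - \breve b_i Q\|^2$, so that no rotation can improve on the output $\breve B$ of Algorithm \ref{alg:AO+scale}. First I would invoke the same SVD characterization that justifies Algorithm \ref{alg:AO-rotate}: since $Q$ is orthogonal, $\|\breve b_i Q\| = \|\breve b_i\|$ is constant, so minimizing the SSE is equivalent to maximizing $\mathrm{tr}(Q^T H')$ over $Q \in \OG(d)$, where $H' = \sum_{i=1}^n \breve b_i^T a_i$ is the cross-product matrix of the already-transformed data. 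The optimizer is then read off from the SVD of $H'$ exactly as $R$ was read off from that of $H$.

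The key step is to express $H'$ in terms of the original decomposition $H = U S V^T$. Using $\breve b_i = s\,\tilde b_i = s\, b_i R$ with $R = U V^T$ and keeping the row-vector convention, I would compute
\begin{equation*}
H' = \sum_{i=1}^n \breve b_i^T a_i = s\, R^T \sum_{i=1}^n b_i^T a_i = s\, (V U^T)(U S V^T) = s\, V S V^T .
\end{equation*}
Because $S$ is diagonal with nonnegative entries (the singular values of $H$) and $s \ge 0$, the matrix $H'$ is symmetric and positive semidefinite; in particular $s\,V S V^T$ is already an SVD of $H'$ with left and right orthonormal factors both equal to $V$.

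Finally I would conclude that for such a symmetric PSD cross-product matrix the absolute-orientation optimizer is $Q = V V^T = I$. Concretely, $\mathrm{tr}(Q^T H') = s\sum_k S_{kk}\,(V^T Q V)_{kk} \le s\sum_k S_{kk}$, since the diagonal entries of the orthogonal matrix $V^T Q V$ are bounded by $1$ in absolute value, with equality exactly when $V^T Q V = I$, i.e. $Q = I$. Hence applying $R$ and then the scalar $s$ leaves no rotational improvement available, proving the lemma. I would dispatch the degenerate case $s = 0$ (where $\breve B$ is the zero configuration and the RMSE is trivially rotation-invariant) separately, and I would note that $s = \mathrm{tr}(S)/\|\tilde B\|_F^2 \ge 0$ so the scaling never flips sign. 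The main obstacle is the bookkeeping: one must track the row-vector/outer-product transpose conventions carefully so that the cancellation $R^T H = V S V^T$ comes out \emph{symmetric}, since it is precisely this symmetry (rather than any property of the scaling value) that forces the optimal rotation back to the identity, while $s$ enters only as a positive multiple and cannot affect the argmax.
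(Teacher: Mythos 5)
Your proposal is correct, but it is organized differently from the paper's own argument. The paper factors the transformation as scale-then-rotate: it defines $\check B = s^* B$, observes that the cross-product matrix satisfies $\check H = s^* H$, so the SVD orthogonal factors $U, V$ are untouched (only $S$ is rescaled), and concludes the optimal rotation for the scaled data is still $R = UV^T$ --- hence the algorithm's output admits no further rotational improvement. You instead verify optimality directly at the output configuration: you compute the cross-product matrix of $\breve B$ against $A$, obtain $H' = s\,R^T H = s\, V S V^T$, note it is symmetric positive semidefinite, and prove via the trace inequality $\mathrm{tr}(Q^T H') = s\sum_k S_{kk}(V^T Q V)_{kk} \le s\,\mathrm{tr}(S)$ that the identity is an optimal residual rotation. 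Both arguments rest on the same SVD characterization of the absolute-orientation optimizer, but yours buys two things the paper leaves implicit: you prove (rather than cite) why a symmetric PSD cross-product forces $Q = I$, and you explicitly check $s = \mathrm{tr}(S)/\|\tilde B\|_F^2 \ge 0$, which is genuinely needed --- the paper's claim that $[U, s^*S, V^T]$ is an SVD of $\check H$ silently requires $s^* \ge 0$, and your PSD argument requires it too. The paper's version, in exchange, is shorter and isolates a reusable fact (the optimal rotation is invariant under positive rescaling of one point set). One pedantic caveat: your statement that equality holds \emph{exactly} when $Q = I$ fails if some singular values vanish (an orthogonal $Q \ne I$ can then still attain the maximum), but since the lemma only asserts that no rotation strictly reduces the RMSE, the inequality alone suffices and your conclusion stands.
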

\vspace{-4mm}
\begin{proof}
	We analyze the SVD-based approach we use to solve for the new optimal rotation.  Since we can change the order of multiplication operations of $s b_i R$, i.e. scale then rotate, we can consider first applying $s^*$ to $B$, and then re-solving for the optimal rotation.  Define $\check B = s^* B$, so each $\check b_i = s^* b_i$.  
Now to complete the proof, we show that the optimal rotation $\check R$ derived from $A$ and $\check B$ is the same as was derived from $A$ and $B$.  
	
Computing the outer product sum
$
	\check H = \sum_{i=1}^n \check b_i^T a_i = \sum_{i=1}^n (s^* b_i)^T a_i = s^* \sum_{i=1}^n b_i^T a_i = s^* H,
$
is just the old outer product sum $H$ scaled by $s^*$.  Then its SVD is 
$
	\mathsf{svd}(\check H) \rightarrow [\check U, \check S, \check V^T] = [U, s^* S, V^T],
$
since all of the scaling is factored into the $S$ matrix.  Then since the two orthogonal matrices $\check U = U$ and $\check V = V$ are unchanged, we have that the resulting rotation 
$
	\check R = \check U \check V^T = U V^T = R
$
is also unchanged.  
\end{proof}

\noindent \textbullet\ \myParagraph{Preserving Inner Products}
%
While Euclidean distance is a natural measure to preserve under a set of transformations, many word vector embeddings are evaluated or accessed by Euclidean inner product operations.  It is natural to ask if our transformations also maximize the sum of inner products of the aligned vectors.  Or does it maximize the sum of cosine similarity: the sum of inner products of \emph{normalized} vectors.  Indeed we observe that $\AOR(A,B)$ results in a rotation 
$
	\tilde R  = \argmax_{R \in \SO(d)} \sum_{i=1}^n \langle a_i, b_i R\rangle.   
$

\begin{lemma}\label{lem:inner}
	$\AOR(A,B)$ rotates $B$ to $\tilde B$ to maximize $\sum_{i=1}^n \langle a_i, \tilde b_i\rangle$. If $a_i \in A$ and $b_i \in B$ are normalized $\|a_i\| = \|b_i\|=1$, then the rotation maximizes the sum of cosine similarities $\sum_{i=1}^n \left\langle \frac{a_i}{\|a_i\|}, \frac{\tilde b_i}{\|\tilde b_i\|}\right \rangle$.  
\end{lemma}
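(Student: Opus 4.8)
The plan is to reduce the maximization of $\sum_{i=1}^n \langle a_i, b_i R\rangle$ over $R$ to a single trace maximization, and then exploit the structure of the SVD used to build $R$. First I would rewrite the objective in the row-vector convention of the chapter: each summand is a scalar, $\langle a_i, b_i R\rangle = a_i R^T b_i^T$, hence equals its own trace, and cyclicity of the trace gives $\langle a_i, b_i R\rangle = \operatorname{tr}(R^T b_i^T a_i)$. Summing over $i$ and pulling the (linear) trace and $R^T$ outside the sum yields $\sum_{i=1}^n \langle a_i, b_i R\rangle = \operatorname{tr}(R^T H)$, where $H = \sum_{i=1}^n b_i^T a_i$ is exactly the outer-product matrix computed inside $\AOR$. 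The claim thus reduces to: $R = UV^T$ maximizes $\operatorname{tr}(R^T H)$ over orthogonal $R$.

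Next I would solve this orthogonal Procrustes problem. Substituting the SVD $H = U S V^T$ and using cyclicity again, $\operatorname{tr}(R^T H) = \operatorname{tr}(R^T U S V^T) = \operatorname{tr}(V^T R^T U\, S) = \operatorname{tr}(Z S)$, where $Z = V^T R^T U$ is orthogonal, being a product of orthogonal matrices. Writing $S = \mathrm{diag}(s_1,\dots,s_d)$ with each $s_j \ge 0$, we get $\operatorname{tr}(Z S) = \sum_{j=1}^d Z_{jj}\, s_j$. Since the columns of an orthogonal matrix are unit vectors, every diagonal entry obeys $|Z_{jj}| \le 1$, so $\operatorname{tr}(Z S) \le \sum_j s_j$, with equality precisely when $Z_{jj} = 1$ for all $j$, i.e. when $Z = I$. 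But $Z = V^T R^T U = I$ is equivalent to $R = U V^T$, which is exactly the rotation returned by $\AOR$, establishing the first claim.

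Finally, for the cosine-similarity statement I would invoke that an orthogonal transformation preserves Euclidean norms, so $\|\tilde b_i\| = \|b_i R\| = \|b_i\|$ for every $R$ and every $i$. When the inputs are normalized, $\|a_i\| = \|b_i\| = 1$, hence $\|\tilde b_i\| = 1$ as well, and so each term $\langle a_i, \tilde b_i\rangle$ already coincides with the cosine similarity $\langle a_i/\|a_i\|,\ \tilde b_i/\|\tilde b_i\|\rangle$. Because this identity holds term by term for any rotation, the sum of inner products and the sum of cosine similarities are the same function of $R$, so the maximizer $R = UV^T$ of the former is automatically a maximizer of the latter.

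The substantive content of the argument is the single Procrustes bound $|Z_{jj}| \le 1$; everything else is linear-algebraic rearrangement. The main thing to get right is therefore the bookkeeping: because the chapter treats $a_i, b_i$ as row vectors and applies $R$ on the right, one must track transposes carefully so that the objective collapses to $\operatorname{tr}(R^T H)$ with precisely the $H$ of the algorithm rather than its transpose. A minor caveat, consistent with the discussion preceding Algorithm \ref{alg:AO-rotate}, is that optimizing over $\OG(d)$ may return a reflection; if one insists on $R \in \SO(d)$ and $\det(UV^T) = -1$, the optimum is recovered by flipping the sign of the diagonal entry of $Z$ tied to the smallest singular value, the standard correction already noted in the text.
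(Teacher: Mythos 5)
Your proof is correct, but it takes a genuinely different route from the paper's. The paper never re-derives the optimality of $R = UV^T$: it cites Hanson and Norris for the fact that $\AOR$ returns the minimizer of $\sum_{i=1}^n \|a_i - b_i R\|^2$, expands that objective as $\sum_i \|a_i\|^2 - 2\sum_i \langle a_i, b_i R\rangle + \sum_i \|b_i R\|^2$, and observes that the first and last sums are invariant under rotation, so the RMSE minimizer and the inner-product maximizer must coincide. You instead prove optimality of the specific matrix the algorithm constructs: collapsing the objective to $\operatorname{tr}(R^T H)$ with exactly the algorithm's $H = \sum_i b_i^T a_i$, and then running the standard Procrustes bound $\operatorname{tr}(ZS) \le \sum_j s_j$ for orthogonal $Z = V^T R^T U$. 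What your approach buys is self-containedness: it is in effect a proof of the Hanson--Norris optimality result itself, so the lemma no longer rests on a cited black box, and it exhibits the maximizer constructively. What the paper's approach buys is brevity and a cleaner conceptual point, namely that the lemma is really about the equivalence of the two objectives under rotation-invariance of norms, independent of how the optimal rotation happens to be computed. Two minor remarks on your write-up: the claim that equality holds ``precisely when $Z = I$'' overstates uniqueness when some singular value vanishes (equality then holds for other $Z$ as well), though this does not affect the claim that $R = UV^T$ attains the maximum; and your handling of the $\SO(d)$ versus $\OG(d)$ issue via the sign-flip correction matches what the paper stipulates before Algorithm \ref{alg:AO-rotate}. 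Your final step on cosine similarity is identical to the paper's.
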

\begin{proof}
	From Hanson and Norris~\cite{HN81} we know $\AOR(B)$ finds a rotation $R^*$ so
	\[ 
	R^*  = \argmin_{ R \in \SO(d)} \sum_{i=1}^n \|a_i - (b_i R )\|^2.
	\]
	Expanding this equation we find 
	\[
	R^*  = \argmin_{ R \in \SO(d)} \left( \sum_{i=1}^n \|a_i\|^2 - \sum_{i=1}^n 2\langle a_i,b_iR\rangle + \sum_{i=1}^n \|b_i R\|^2 \right).  
	\]
Now, the length of a vector does not change upon rotation(R), thus, $\|b_iR\|^2 = \|b_i\|^2$. So, since $\|a_i\|^2$ and  $\|b_i\|^2$ are both lengths of vectors and thus, properties of the dataset, they do not depend on the choice of $R$ and as desired 
	\[
	R^* = \argmax_{ R \in \SO(d)}  \sum_{i=1}^n \langle a_i, b_iR\rangle.
	\]
	
	If all $a_i, b_i$ are normalized, then $R$ does not change the norm $\|\tilde b_i\| = \|b_i R\| = \|b_i\| =1$.  So for $\tilde b_i = b_i R$, each 
	$\langle a_i, \tilde b_i\rangle = \langle \frac{a_i}{\|a_i\|}, \frac{\tilde b_i}{\|\tilde b_i\|} \rangle$ 
	and hence, as desired, 
	\[
	R^* = \argmax_{R \in \SO(d)} \sum_{i=1}^n \left\langle \frac{a_i}{\|a_i\|}, \frac{b_i R}{\|b_i R\|}\right \rangle. \qedhere
	\]
\end{proof}

Several evaluations of word vector embeddings use cosine similarity, so it suggests first normalizing all vectors $a_i \in A$ and $b_i \in B$ before performing $\AOR(A,B)$.  However, we found this does not empirically work as well.  The rationale is that vectors with larger norms tend to have less noise and are supported by more data.  So the unnormalized alignment effectively weights the importance of aligning the inner products of these vectors more in the sum, and this leads to a more stable method.  Hence, in general, we do not recommend this normalization preprocessing.

\subsection{Extension to Contextualized Embeddings}
\label{sec : contextual}
In recent years, contextualized embeddings such as ELMo \cite{Peters:2018} and BERT \cite{bert} have become increasingly popular, because of their ability to express the polysemy of words. A word in these frameworks is not expressed as a single vector, but rather, based on its different meanings or different contexts it has been used in.  That is, each instance of a word is represented by a different vector in the embedding space.  Our method to align individual vectors does not directly apply in this scenario.  

We propose a simple extension to handle this scenario.  Given a word $w_i$ with two separate contextual embeddings, let these embedding vectors be two sets $A_i = \{a_{i,1}, a_{i,2}, \ldots a_{m_{A,i}}\}$ and $B_i = \{b_{i,1}, b_{i,2},  \ldots, b_{m_{B,i}}\}$ of sizes $m_{A,i}$ and $m_{B,i}$, respectively.  Then our method, instead of aligning a single pair of vectors for each word, it aligns \emph{all} vector pairs for each word.  For instance, for finding the optimal rotation, this involves an alignment for $n$ words, each $i$th word $w_i$, then the outer product matrix is defined
\[
H = \sum_{i=1}^n \frac{1}{m_{A,i} m_{B,i}} \sum_{j =1}^{m_{A,i}} \sum_{j'=1}^{m_{B,i}} a_{i,j}^T b_{i,j'},
\]
where each set of all-pairs is weighted equally for each $i$ (this is accomplished by dividing by the number of such pairs $m_{A,i} m_{B,i}$.)

This all-pairs alignment can be computationally expensive as the number of instances of each word $m_{A,i}$ and $m_{B,i}$ increase; even if we only use $10$ instances of each word, in each embedding, this increases the number of alignments by a factor $100$.  
However, we observe, in each step, the set $A_i$ and $B_i$ can be replaced by their averages.
\[
 \bar a_i = \frac{1}{m_{A_i}} \sum_{j=1}^{m_{A,i}} a_{i,j}
 \text{ and }
 \bar b_i = \frac{1}{m_{B_i}} \sum_{j=1}^{m_{B,i}} b_{i,j}.
\]
Then the overall means $\bar b$, $\bar a$, outer product $H$, and scaling $s$ are the same using all instances or the mean instance.  

\begin{lemma}\label{lem:allpair}
The alignments found using all-pair alignment when each word has multiple instances in each embedding is equivalent to that computed by aligning the averages of each set of instances.  
\end{lemma}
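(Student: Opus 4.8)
The plan is to reduce every ingredient of the all-pairs pipeline (centering, the outer product $H$, the rotation, and the scaling) to the corresponding quantity built from the per-word averages $\bar a_i, \bar b_i$, and the only structural fact I need is that the outer product $a^T b$ and the inner product $\langle a,b\rangle$ are bilinear. The heart of the argument is a single factorization: for a fixed word $i$, the normalized double sum over instance pairs splits as
\[
\frac{1}{m_{A,i} m_{B,i}} \sum_{j=1}^{m_{A,i}} \sum_{j'=1}^{m_{B,i}} a_{i,j}^T b_{i,j'}
= \left(\frac{1}{m_{A,i}}\sum_{j=1}^{m_{A,i}} a_{i,j}\right)^{\!T}\left(\frac{1}{m_{B,i}}\sum_{j'=1}^{m_{B,i}} b_{i,j'}\right)
= \bar a_i^T \bar b_i,
\]
because the outer product is linear in each argument separately, so the two inner sums can be pulled outside. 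Summing this identity over $i=1,\ldots,n$ shows that the all-pairs matrix $H$ defined just before the lemma equals $\sum_{i=1}^n \bar a_i^T \bar b_i$, which is exactly the outer-product matrix that $\AOR$ (Algorithm \ref{alg:AO-rotate}) builds from the averaged sets $\{\bar a_i\}$ and $\{\bar b_i\}$.

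Once $H$ is shown to coincide, the rotation follows for free: the decomposition $[U,S,V^T]=\mathsf{svd}(H)$ and hence $R=UV^T$ depend only on $H$, so the all-pairs rotation and the averaged rotation are literally the same matrix. Next I would dispatch the centering/translation step. Since the mean is linear, the overall mean taken over instances with each word weighted equally — the same convention that the $\tfrac{1}{m_{A,i}m_{B,i}}$ factor enforces on $H$ — satisfies $\frac{1}{n}\sum_i \frac{1}{m_{A,i}}\sum_j a_{i,j} = \frac{1}{n}\sum_i \bar a_i$, i.e. it equals the mean of the per-word averages, and likewise for $\bar b$. Thus centering commutes with averaging and the \underline{Center} steps of Algorithms \ref{alg:AO} and \ref{alg:AO-scale} agree. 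For scaling I would apply the identical bilinear factorization to the numerator $\sum \langle a_{i,j}, \tilde b_{i,j'}\rangle$ of $s$, reducing it to $\sum_i \langle \bar a_i, \bar{\tilde b}_i\rangle$.

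The step that requires genuine care — the only place the equivalence is not an immediate one-line consequence of bilinearity — is the denominator of the scaling, the Frobenius norm $\|\tilde B\|_F^2$. Because $\|\sum_{j'} \tilde b_{i,j'}\|^2 \neq \sum_{j'}\|\tilde b_{i,j'}\|^2$ in general, one cannot naively equate a sum of squared instance norms with the squared norm of an average. The resolution is that the all-pairs convention must be applied consistently to the within-$B$ self-pairs as well: the weight $1/m_{B,i}^2$ together with the full ordered double sum $\sum_{j',j''}\langle \tilde b_{i,j'}, \tilde b_{i,j''}\rangle$ factors, by the same identity, into $\|\bar{\tilde b}_i\|^2$. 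Hence, provided the per-word normalization (and its within-set analogue for the norm) is used throughout — which is precisely what makes each word contribute equally — every ingredient of $s$, and therefore $s$ itself, matches the averaged computation. Collecting the agreement of $\bar a$, $\bar b$, $H$ (hence $R$), and $s$ yields the claimed equivalence of the two alignments.
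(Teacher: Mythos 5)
Your proposal is correct and follows essentially the same route as the paper's proof: both reduce the means, the outer product $H$ (hence the SVD and rotation), and the scaling numerator to the per-word averages via bilinearity, and both isolate the Frobenius-norm denominator as the one quadratic quantity that does not factor. Your explicit observation that the consistent all-pairs convention $\frac{1}{m_{B,i}^2}\sum_{j',j''}\langle b_{i,j'}, b_{i,j''}\rangle = \|\bar b_i\|^2$ resolves this is just a more carefully justified version of the paper's definitional remark that the normalization term is taken on the averaged instances.
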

\begin{proof}
We need to analyze the $4$ quantities computed as part of any transformation: the two averages, the outer product, and the scale.  
In short, these are all linear vector operations (sum, outer product, inner product), so a vector average can be factored out.  

For each average 
\[
\bar a = \frac{1}{n} \sum_{i=1}^n \frac{1}{m_{A,i}} \sum_{j=1}^{m_{A,i}} a_{i,j} = \frac{1}{n} \sum_{i=1}^n \bar a_i,
\]
and similarly for $\bar b$, the calculations are equivalent.  

For the outer product 
\begin{align*}
H &= \sum_{i=1}^n \frac{1}{m_{A,i} m_{B,i}} \sum_{j =1}^{m_{A,i}} \sum_{j'=1}^{m_{B,i}} a_{i,j}^T b_{i,j'}
\\ &= 
\sum_{i=1}^n  \left( \frac{1}{m_{A,i}} \sum_{j =1}^{m_{A,i}} a_{i,j} \right)^T \left( \frac{1}{ m_{B,i}} \sum_{j'=1}^{m_{B,i}} b_{i,j'} \right)
\\ &= 
\sum_{i=1}^n  \bar a_i^T \bar b_i.
\end{align*}

And finally for the scale
\begin{align*}
s 
&= 
\sum_{i=1}^n \frac{1}{m_{A,i} m_{B,i}} \sum_{j =1}^{m_{A,i}} \sum_{j'=1}^{m_{B,i}} \langle a_{i,j}, b_{i,j'} \rangle / \|B\|_F^2
\\ &=
\sum_{i=1}^n \left \langle \frac{1}{m_{A,i}} \sum_{j =1}^{m_{A,i}} a_{i,j},   \frac{1}{m_{B,i}} \sum_{j'=1}^{m_{B,i}}  b_{i,j'} \right \rangle / \|B\|_F^2
\\ &= 
\sum_{i=1}^n \langle \bar a_i, \bar b_i \rangle / \|B\|_F^2,
\end{align*}
where 
\[
\|B\|_F^2 
= 
\sum_{i=1}^n \left \| \frac{1}{m_{B,i}} \sum_{j=1}^{m_{B,i}} b_{i,j} \right\|^2
=
\sum_{i=1}^n \|\bar b_i\|^2.
\]
Note that the normalization term $\|B\|_F^2$ is defined on the average sum of instances for the all-pairs version, since this is a quadratic operation, and otherwise does not factor out.  
\end{proof}

\subsection{Related Approaches}

\label{sec:related}
As mentioned, Smith \etal (2017)~\cite{SmithTHH17} use Algorithm \ref{alg:AO-rotate} to align \WordToVec word embeddings on English and Italian corpuses, and show that this simple approach is effective in translation. Our work can be seen as building on this, in that we show how to interpret the intrinsic accuracy of such an alignment, how to align word vector corpuses created by different mechanisms, and when to use which variant of the closed form solutions.  Additionally, we confirm some of their language translation results and show that it extends to when the embedding mechanisms for the different language corpuses are not the same (e.g., one by \WordToVec and one by \GloVe), as demonstrated in Section \ref{app : langauges}.  

There are several other methods in the literature that attempt to jointly compute embeddings of datasets so that they are aligned, for instance, in jointly embedding corpuses in multiple languages~\cite{Herm,Mik3}.  The goal of the approaches we study is to circumvent these more complex joint alignments. A couple of very recent papers propose methods to align embeddings after their construction but focus on \emph{affine transformations}, as opposed to the more restrictive but distance preserving rotations of our method.  
Bollegala \etal (2017)~\cite{Art1} uses gradient descent, for parameter $\gamma$, to directly optimize
\begin{equation}
\argmin_{M \in \R^{d \times d}} \sum_{i=1}^n \|a_i - b_i M \|^2 + \gamma \|M\|^2_F.  
\end{equation}
%


Another approach, by Sahin \etal (2017)~\cite{Sahin} uses Low Rank Alignment (LRA), an extension of aligning manifolds from LLE~\cite{Boucher}.  This approach has a 2-step but closed form solution to find an affine transformation applied to both embeddings simultaneously. 
Neither approach directly optimizes for the optimal transformation, and requires regularization parameters; this implies if embeddings start far apart, they remain further apart than if they start closer.  
Both find affine transformations $M$ over $\R^{d \times d}$, not a rotation over the non convex $\OG(d)$ as does our approach.  This changes the Euclidean distance found in the original embedding to a Mahalanobis distance that will change the order of nearest neighbors under Euclidian and cosine distance.  
Finally, the LRA approach requires an eigendecomposition of a $2n \times 2n$ matrix, whereas ours only requires this of a $d \times d$ matrix, so LRA is far less scalable.


		

	

\section{Evaluating Accuracy of Variants}

\label{sec:exp}
We evaluate the effectiveness of our methods on a variety of scenarios, typically using the Root Mean Square Error:  
$ 
\mathsf{RMSE}(A,B) = \sqrt{\frac{1}{|A|}\sum_{i=1}^{|A|} \| a_{i} - b_{i}\|^2}.
$
We fix the embedding dimension of each $A$ (the target) and $B$ (the source) at $300$, and assume $|A| = |B| = n=100{,}000$ or in some cases $n' = |A| = |B| = 10{,}000$.

We consider embeddings with \GloVe~\cite{Art1} (our default), or \WordToVec~\cite{Mik1,wordtovec} with Gensim~\cite{rehurek_lrec}, or occasionally \RAW which is just the $L^1$ normalized word count vectors embedded with SVD~\cite{Lev1}. 
We obtain all these three embeddings for our experiments using our default dataset is the $4.57$ billion token English Wikipedia dump, which we found to be made up of $243K$ vocabulary words (distinct tokens). For \WordToVec, the Gensim~\cite{rehurek_lrec} library provides code for obtaining embeddings of a desired dimensionality, and for \GloVe, the code~\cite{Art1} is provided by the authors themselves. To obtain \RAW embeddings, we run a simple bag of words model which enumerates for each word, how many times it appeared with other words in the vocabulary in a sentence, to give us a vector representation for the word. The \RAW word vectors, thus have the same dimension as that of the vocabulary itself. This when normalized captures the pointwise mutual information and is called the Pointwise Mutual Information (PMI) Matrix.
After embedding using each of these three mechanisms, we select the top 100K most frequent words and their corresponding embeddings for our experiments.

We compare against existing \GloVe embeddings of
Wikipedia + Gigaword (\s{G(WG)}, 6 billion tokens, 400K vocab), 
Common Crawl (\s{G(CC42)}, 42 billion tokens, 1.9M vocab), and
Common Crawl (\s{G(CC840)}, 840 billion tokens, 2.2M vocab), 
and the existing \WordToVec embedding 
of Google News (\s{W(GN)}, 100 billion tokens, 3 million vocab). All of these embeddings are available online (\url{https://nlp.stanford.edu/projects/glove/, https://code.google.com/archive/p/word2vec/}) and were downloaded. 

When aligning \GloVe embeddings to other \GloVe embeddings, we use \AOR.  When aligning embeddings from different sources, we use \AOs. 





\noindent \textbullet\ \myParagraph{Default Data Settings} 
In each embedding, we always consider a consistent vocabulary of $n=100{,}000$ words.  
To decide on this set, we found the $n$ most frequent words used in the default Wikipedia dataset and that were embedded by \GloVe. 
In one case, we compare against smaller datasets and then only work with a small vocabulary of size $n' = 10{,}000$ found the same way.  

For each embedding, we represent each word as a vector of dimension $d=300$.  Note that \RAW originally uses an $n$-dimensional vector.  We reduce this to $d$-dimensions by taking its SVD, as advocated by Levy \etal \cite{Lev1}. They demonstrate how \WordToVec implicitly captures the information as the Shifted Pointwise Mutual Information Matrix (SPMI) in low dimensions. They further demonstrate that computing the SVD of the SPMI matrix maintains the structure captured by the full-dimensional matrix.


%
%
%

\subsection{Calibrating RMSE}

\label{sec:calibrate}
In order to make sense of the meaning of an RMSE score, we calibrate it to the effect of some easier to understand distortions.  
To start, we make a copy of $A$ (the default \s{G(W)} embedding -- we use this notation to signify a \GloVe embedding \s{G($\cdot$)} or the default Wikipedia corpus \s{W})) and apply an arbitrary rotation, translation, and scaling of it to obtain a new embedding $B$.  Invoking $\hat A, \hat B \leftarrow \AOsc(A,B)$, we expect that $\mathsf{RMSE}(\hat A, \hat B) = 0$;  we observe RMSE values on the order of $10^{-14}$, indeed almost $0$ withstanding numerical rounding.  



\noindent \textbullet\ \myParagraph{Gaussian Noise}
Next we add Gaussian noise directly to the embedding.  That is we define an embedding $B$ so that each $b_i = a_i + g_i$ where $g_i \sim N_d(0, \sigma I)$, where $N_d(\mu,\Sigma)$ is a $d$-dimensional Gaussian distribution, and $\sigma$ is a standard deviation parameter.  
Then we measure $\mathsf{RMSE}(\hat A, \hat B)$ from 
$\hat A, \hat B \leftarrow \textsc{\AOR}(A,B)$.
Figure \ref{fig: gaussian} shows the effects for various $\sigma$ values, and also when only added to $10\%$ and $50\%$ of the points.  
We observe the noise is linear, and achieves an RMSE of $2$ to $5$ with $\sigma \in [0.1,0.3]$.  


\noindent \textbullet\ \myParagraph{Noise Before Embedding}
Next, we append noisy, \emph{unstructured} text into the Wikipedia dataset with $1$ billion tokens.  We specifically do this by generating random sequences of $m$ tokens, drawn uniformly from the $n = 10$K most frequent words; we use $m = \{0.01, 0.1, 0.5, 1, 2.5\}$ billion.  
We then extract embeddings for the same vocabulary of $n=100$K words as before, from both datasets, and use \AOR{} to linearly transform the noisy one to the one without noise.  
As observed in Figure \ref{fig:RMSE-noise}, this only changes from about $0.7$ to $1.6$ RMSE.  
The embeddings seem rather resilient to this sort of noise, even when we add more tokens than the original data.  

We perform a similar experiment of adding structured text; we repeat a sequence made of $s = \{100, 1000, 10{,}000 \}$ tokens of medium frequency so the total added is again $m = \{10M, 100M, 500M, 1B,$ $2.5B\}$.  Again in Figure \ref{fig:RMSE-noise}(middle), perhaps surprisingly, this only increases the noise slightly, when compared to the unstructured setting.  
This can be explained since only a small percentage of the vocabulary is affected by this noise, and by comparing to the Gaussian noise, when only added to $10\%$ of the data, it has about a third of the RMSE as when added to all data.

%
%

\noindent \textbullet\ \myParagraph{Incremental Data}
As a model sees more data, it is able to make better predictions and calibrate itself more accurately. This comes at a higher cost of computation and time. If, after a certain point, adding data does not really affect the model, it may be a good trade-off to use a smaller dataset to make an embedding almost equivalent to the one the larger dataset would produce.

We evaluate this relationship using the RMSE values when a \GloVe embedding from a smaller dataset $B$ is incrementally aligned to larger datasets $A$ using \AOR. We do this by starting off with a dataset of the first $1$ million tokens of Wikipedia (1M). We then add data sequentially to it to create datasets of sizes of 100M, 1B, 2.5B or 4.57B tokens.  
For each dataset, we create \GloVe embeddings. Then we align each dataset using $\AOR(A,B)$ where $A$ (the target) is always the larger of the two data set, and $B$ (the source) is rotated and is the smaller of the two.  

Figure \ref{fig:RMSE-noise} (right)  shows the result using a vocabulary of $n=100$K and $n' = 10$K.  The small $n'$ is also used since, for smaller datasets, many of the top $100$K words are not seen.  We observe that even this change in data set size, decreasing from $4.57$B tokens to $2.5$B, still results in substantial RMSE.  However, aligning with fewer but better-represented words starts to show better results, supporting the use of weighted variants.   
\subsection{Changing Datasets and Embeddings}

\label{sec:RMSE-data}

Now with a sense of how to calibrate the meaning of RMSE, we can investigate the effect of changing the dataset entirely or changing the embedding mechanism.

\noindent \textbullet\ \myParagraph{Dependence of Datasets}
Table \ref{tbl:RMSE-Datasets+scale}(top) shows the RMSE when the $4$ \GloVe embeddings are aligned with \AOR, either as a target or source.  
The alignment of \s{G(W)} and \s{G(WG)} has less error than either to \s{G(CC42)} and \s{G(CC840)}, likely because they have substantial overlap in the source data (both draw from Wikipedia).  In all cases, the error is roughly on the scale of adding Gaussian noise with $\sigma \in [0.25, 0.35]$ to the embeddings, or reducing the dataset to $10$M to $100$M tokens. 
This is much more alignment error than in other experiments, indicating that the change in the source data set (and likely its size) has a much larger effect than the embedding mechanism.

\noindent \textbullet\ \myParagraph{Dependence on Embedding Mechanism}
We now fix the data set (the default $4.57$B Wikipedia dataset \s{W}), and observe the effect of changing the embedding mechanism: using \GloVe, \WordToVec, and \RAW.  
We now use \AOs{} instead of \AOR, since the different mechanisms tend to align vectors at drastically different scales.

Table \ref{tbl:RMSE-Datasets+scale}(bottom) shows the RMSE error of the alignments; the columns show the target ($A$) and the rows show the source dataset ($B$).  
This difference in target and source is significant because the scale inherent in these alignments change, and with it, so does the RMSE. Also as shown, the scale parameter $s^*$ from \GloVe to \WordToVec in \AOs {} is approximately $3$ (and non symmetrically about $0.25$ in the other direction from \WordToVec to \GloVe). This means for the same alignment, we expect the RMSE to be between $3$ to $4$ ($\approx 1/0.25$) times larger as well.  

However, with each column, with the same target scale, we can compare alignment RMSE.  We observe that the differences are not too large, all roughly equivalent to Gaussian noise with $\sigma = 0.25$ or using only $1$B to $2.5$B tokens in the dataset.  
Interestingly, this is less error than changing the source dataset; consider the \GloVe column for a fair comparison.  This corroborates that the embeddings find some common structure, capturing the same linear structures, analogies, and similarities.  And changing the datasets is a more significant effect.

%
%
%

\subsection{Similarity and Analogies After Alignment}
\label{sec:similarity}
The \GloVe and \WordToVec embeddings both perform well under different benchmark similarity and analogy tests. These results will be unaffected by rotations or scaling.  
Here we evaluate how these tests transfer under alignment.  Using the default Wikipedia dataset, we use several variants of \AO{} to align \GloVe and \WordToVec embeddings.  Then given a synonym pair $(i,j)$ we check whether $b_j \in B$ (after alignment) is in the neighborhood of $a_i$.

More specifically, we use $4$ common similarity test sets, which we measure with cosine similarity~\cite{Lev2}: 
Rubenstein-Goodenough (\RG, 65 word pairs) \cite{rg}, 
Miller-Charles (\MC, 30 word pairs) \cite{mc}, 
WordSimilarity-353 (\WSim, 353 word pairs) \cite{wsim} and 
SimLex-999 (\Simlex, 999 word pairs) \cite{simlex}.
We use the Spearman correlation coefficient (in $[-1,1]$, larger is better) to aggregate scores on these tests; it compares the ranking of cosine similarity of $a_i$ to the paired aligned word $b_j$, to the rankings from a human-generated similarity score.  

Table \ref{tbl:Sim-Analogy} shows the scores on just the \GloVe and \WordToVec embeddings, and then across these aligned datasets.  
To understand how the variants of \AO{} compare, we compute the scores after each of the various optimal transformation types are applied: rotation, then scaling, then translation, and finally, we consider if we normalize all vectors before alignment to maximize cosine similarities. 
Before transformation (``untransformed'') the across-dataset comparison is very poor, close to $0$; that is, extrinsically, there is very little information carried over.  However, alignment with just \AOR{} achieves scores nearly as good as, and sometimes better than on the original datasets.   \WordToVec scores higher than \GloVe, and the across-dataset scores are typically between these two scores.  Adding scaling with \AOs{} has no effect on the scores on the similarity test because they are measured with cosine similarity.  However, also applying the optimal translation does increase the scores even though it optimizes Euclidean distance and not cosine distance.  Perhaps surprisingly, applying rotation along with translation \emph{and} scaling improves more than just applying rotation and translation.  This method applies scaling after the dataset is centered, so this then alters the inner products, and in a useful way.  

We perform the same experiments on $2$ Google analogy datasets~\cite{Mik1}:
\SEM has $8869$ analogies and
\SYN has $10675$ analogies. 
As discussed in Chapter 1, these are of the form ``\s{A:B::C:D}'' (e.g., ``man : woman :: king : queen''), and we evaluate across data sets by measuring if vector $\ve{D}$ is among the nearest neighbors in data set $A$ of vector $\ve{C} + (\ve{B} - \ve{A})$ in data set $B$.  
The results are similar to the synonym tests, where \AOR{} alignment across-datasets performs similar to within either embedding, and scaling and rotation provided small further improvement.  In this case, performing rotation and scaling improves upon just rotation.  This is because the analogies are accessing something more complicated about the embedding, and so adjusting the scale more aligns the Euclidean distance and hence the vector structure needed to succeed in analogies.  

The right part of the table shows the effect of various weightings.  Normalization makes the similarity and analogy scores worse, but weighting by the norms consistently increases the scores.  Moreover, also scaling and rotating (e.g., as w(r+s+t)) improves the scores further.  

We also align \s{G(W)} to \s{G(CC42)}, to observe the effect of only changing the dataset.   The \s{G(CC42)} dataset performs better itself; it uses more data.  The small similarity tests (\s{RG},\s{MC}) show some extrinsic information is captured without any alignment, but otherwise, across-embedding scores have a similar pattern to across-dataset scores.

Next, in Table \ref{tbl : normalized}, we further investigate the effect of various weighting (or normalizing) before alignment.  In these tests, we show the effect on \AOR{} with three types of weighting.  As before, we simply apply \AOR{} on all 100K words.  But we also find the optimal $R$ on only the most frequent 10K words using \AOR{}, and then again using \AON{} on just these 10K words.  The rotation and evaluation are still on all 100K words needed for the tests.  Surprisingly \AON(10K) performs better than \AOR(10K), and comparably to \AOR(100K).  This indicates that similarity optimization is useful when the words all have sufficient data to embed them properly.  

\subsection{Comparison to Baselines }
Next, we perform similarity tests to compare against alignment implementations of methods by Sahin \etal (2017)~\cite{Sahin} (LRA) and Bollegela \etal (2017)~\cite{Art1} (Affine Transformations).  We reimplemented their algorithms, but did not spend significant time to optimize the parameters; recall our method requires no hyperparameters.  
We only used the top $n' = 10$K words for these transformations because these other methods were much more time and memory intensive.  
We only computed similarities among pairs in the top $10$K words for fairness (about two-thirds of the word pairs evaluated, so the scores do not match other tables), and did not perform analogy tests since fewer than one-third of analogies fully showed up in the top $10$K.  
Table \ref{tbl:baseline} shows results for aligning the \s{G(W)} and \s{G(CC42)} embeddings with these approaches.  
Our \AO-based approach does significantly better than the Affine Transformation based method by Bollegela \etal (2017)~\cite{Art1} and generally better than the method lRA by Sahin \etal (2017) \cite{Sahin}.  Our advantage over LRA increases when aligning all $n=100$K words; by comparison, LRA ran out of memory since it requires an $n \times n$ dense matrix decomposition.

\subsection{Dependence of RMSE Variation With Word Frequency }
\label{sec:word-freq}

Table \ref{tbl:word freq} shows some sampled words of various frequencies in the Wikipedia data set.  A word that is more frequently seen in a corpus is generally seen with a larger proportion of other words and contexts, and thus as observed in the table, has a vector representation that has a larger norm than a word that has a low frequency.  This results in the contribution of high-frequency words in the rotation matrix $H$, computed for minimizing the RMSE, to also be larger.  
This larger frequency, and larger norm, also manifests itself in the error after alignment, as shown in the last two columns of Table \ref{tbl:word freq}, both between data sets and between embedding mechanisms.  The relation in the amount of RMSE between words appears even more correlated when between embedding mechanisms (in this case \WordToVec and \GloVe).  The low-frequency words likely exhibit some baseline noise in the case with different data sets (Wiki and CC(42B)), which obscures this relationship for low-frequency words.

\subsection{Discussion on the Right Variant}
\label{sec:which-AO}


Most of the gain using \AO{} is achieved by just finding the optimal rotation $R$ with \AOR.  
However, consistent improvement can be found by weighting the large points more using \AOW{} and by applying translation or scaling, and slightly more by applying both.  

When different datasets are aligned using the same mechanism (e.g., both with \GloVe or both with \WordToVec), then it is debatable whether scaling and translation is necessary, since scaling does not affect cosine similarity, and translation changes intrinsic inner product properties.  However, using a weighting to put more weight on longer (and implied more robustly embedded) words does not alter any intrinsic properties, and only seems to create better alignments.  

When datasets are embedded with different mechanisms (e.g., one with \WordToVec and one with \GloVe) then they are not scaled properly with respect to each other.  In this case, it is important to find the optimal scaling to put them in a consistent interpretable scale, and to ensure analogy relations are optimized.  So we strongly recommend using scaling in this setting.  

%
%
%


\section{Embedding Alignment: Applications}

\label{app : Applications}

We highlight a few applications which may be served by this alignment and comparison mechanisms that we design and demonstrate their effectiveness.

\subsection{Boosting via Ensembles}

\label{sec:boosting}
A direct application of combining different embeddings can be to increase its robustness.  We show that ensembles of pre computed word embeddings found via different mechanisms and on different datasets can boost the performance on the similarity and analogy tests beyond that of any single mechanism or dataset.  The boosting strategy we use here is just simple averaging of the corresponding words after the embeddings have been aligned.  

Table \ref{tbl:boosting weighted} shows the performance of these combined embedding in three experiments.  
The first set shows the default Wikipedia data set under \GloVe (\s{G(W)}), under \WordToVec (\s{W(W)}), and combined ([\s{G(W)}$\odot$\s{W(W)}]).  
The second set shows \WordToVec embedding of Google News (\s{W(GN)}), and combined ([\s{G(W)}$\odot$\s{W(GN)}]) with \s{G(W)}.  
The third set shows \GloVe embedding trained on text from Common Crawl (840B) (\s{G(CC840)}) and then combined with \s{W(GN)} as [\s{G(CC840)}$\odot$\s{W(GN)}].  
Combining embeddings using \AOWts{} consistently boosts the performance on similarity and analogy tests.  Further, we also see that very similar boosting results occur independent of the precise alignment mechanism (e.g., using \AOsc).  
The best score on each experiment is in bold, and in 5 out of 6 cases, it is from a combined embedding.  Moreover, except for this one case, the combined embedding always performs better on all tests that both of the individual embeddings, and in this one case, \s{G(CC804)$\odot$W(GN)} still outperforms \s{W(GN)} on \SEM analogies.  
For instance, remarkably, \s{G(W)$\odot$W(W)} which only uses the default $4.57B$ token Wikipedia dataset, performs better or nearly as well as \s{W(GN)} which uses $100B$ tokens.  Moreover, in some cases the improvement is significant; on the large similarities test \Simlex, the  [\s{G(CC840)}$\odot$\s{W(GN)}] score is $0.443$ or $0.446$ with weights, whereas the best score without boosting is only $0.408$ using \s{G(CC840)}.

\subsection{Aligning Embeddings Across Languages and Embeddings }
\label{app : langauges}

Word embeddings have been used to place word vectors from multiple languages in the same space~\cite{Herm,Mik3}. These either do not perform that well in monolingual semantic tasks as noted in Luong, Pham and Manning ~\cite{Luong} or use learned affine transformations~\cite{Mik3}, which distort distances and do not have closed-form solutions.  Smith \etal \cite{SmithTHH17} use the equivalent of \AOR{} to translate between word embeddings from different languages that have been extracted using the same method. We extend that here to verify that no matter the embedding mechanism, we can translate using a variant of \AO{}. We use the ability to choose the right variant of Absolute Orientation as per Section \ref{sec:which-AO} to orient different embeddings onto each other coherently.
We use the default English \GloVe embedding from Wikipedia and the FastText \url{https://github.com/facebookresearch/fastText} embedding for Spanish.  FastText is yet another unsupervised learning paradigm for obtaining vector representations for words which uses a lot of concepts from \WordToVec, skipgram models and bag of words. As presented, these two have been derived using different methods and are thus oriented differently in 300 dimensional space. We extract the embeddings for the most frequent $5{,}000$ words from the default English Wikipedia dataset (that have translations in Spanish) and their translations in Spanish and align them using \AOWts. We test before and after alignment, for each of these $10{,}000$ words, if their translation is among their nearest $1$, $5$, and $10$ neighbors.
Before alignment, the fraction of words with its translation among its closest $1$, $5$, and $10$ nearest neighbors is $0.00$, $0.160$, and $0.160$, respectively, while after alignment it is $0.372$, $0.623$, and $0.726$, respectively. 


We perform a cross-validation experiment to see how this alignment applies to new words not explicitly aligned.  On learning the rotation matrix above, we apply it to a set of $1000$ new test set of Spanish words (the translations of the next $1000$ most frequent English words) and bring it into the same space as that of English words as before.  We test these $2000$ new words in the embedded and aligned space of $12{,}000$ words (now $6{,}000$ from each language).  
Before alignment, the fraction of times their translations are among the closest $1$, $5$, and $10$ neighbors are $0.00$, $0.00$, and $0.00$, respectively. After alignment it is $0.311$, $0.689$, and $0.747$,  respectively (comparable to results and setup in Mikolov \etal~\cite{Mik3}, using jointly learned affine transformations). Table \ref{tbl:translation-spanish} gives some examples of translations seen using our method.

We perform a similar experiment between English and French, and see similar results. We first obtain $300$ dimensional embeddings for English Wikipedia dump using \GloVe, and for French words from the FastText embeddings.   
Then, we extract the embeddings for the most frequent $10{,}000$ words from the default Wikipedia dataset (that have translations in French) and their translations in French and align them using \AOWts.  We test before and after alignment, for each of these $10{,}000$ words, if their translation is among their nearest $1$, $5$, and $10$ neighbors.
Before alignment, the fraction of words with its translation among its closest $1$, $5$, and $10$ nearest neighbors is $0.00$, $0.054$, and $0.054$ respectively, while after alignment it is $0.478$, $0.755$, and $0.810$, respectively. 
Table \ref{tbl : translation} lists some examples before and after translation.

We again perform a cross-validation experiment to see how this alignment applies to new words not explicitly aligned.  On learning the rotation matrix above, we apply it to a set of $1000$ new test set of French words (the translations of the next $1000$ most frequent English words in the default dataset) and bring it into the same space as that of English words as before.  We test in this space of $22{,}000$ words now, if their translations are among the closest $1$, $5$, and $10$ nearest neighbors of the $2000$ new words ($1000$ French and their translations in English).  Before alignment, the fraction of times their translations are among the closest $1$, $5$, and $10$ neighbors are $0.00$, $0.00$, and $0.00$, respectively.   After alignment it is $0.307$, $0.513$, and $0.698$,  respectively.

\subsection{Aligning Multiple Languages Onto the Same Space}

As demonstrated in Section \ref{app : langauges}, pairwise alignment of words from different languages needs relatively few points to find the alignment to achieve good accuracy in translation between the two languages for a much larger set of words. This allows us to have a low cost operation to map words of one language to their corresponding translated words in the another language. This additionally leads us to a follow-up application. For many language pairs (say languages $L_1$ and $L_2$), we might not have a known dictionary of corresponding word-pairs. In such cases, finding an alignment for enabling translation can be impeded. However, for each of these languages $L_1$ and $L_2$, if corresponding words to a third language $L_3$ is known, aligning both $L_1$ and $L_2$ onto $L_3$ also brings $L_1$ and $L_2$ into the same space. Thus, translation of words from $L_1$ and $L_2$ is enabled without having a set of corresponding seed words in them by which to define the alignment. 
Aligning multiple languages onto the same space can thus, aid in multi way translation. Further, for low resource languages or pairs of languages for whom, only a very small set of translations, i.e., few corresponding points are known, aligning each of these languages to a more common language with which a larger correspondence is known, can help translation.


To demonstrate this, we pick languages $L_1$ and $L_2$ to be Spanish and French respectively. We also pick the common language $L_3$ to be English, to whose word embedding space we align $L_1$ and $L_2$ to. In Table \ref{tbl : multi}, in the first column, we have Spanish to French translations before alignment. As expected, the top $1$, $5$, and $10$ neighboring word accuracies (as evaluated in Section \ref{app : langauges}) are poor (in fact $0$ accuracy). In the second column, we have accuracies after aligning them onto each other using a pool of 2000 words for which we know translations, i.e., their one-to-one correspondences. Next, in the third column, we align both Spanish and French onto English, using the same set of 2000 words and then compare the accuracies for translations from Spanish to French. We find that the top $1$, $5$, and $10$ accuracies are comparable between columns 2 and 3. 
Thus Spanish-French translation was enabled by knowing Spanish-English and French-English associations. This multi way translation enabled by a third language's association leads us to many possibilities of aligning low-resource languages to each other easily.   




\section{Discussion}
\label{sec:discussion}
We have provided simple, closed-form method to align word embeddings.  
Code can be found on github (\url{https://github.com/sunipa/Abs-Orientation}).  	
It allows for transformations for any subset of translation, rotation, and scaling.  These operations all preserve the intrinsic Euclidean structure, which has been shown to give rise to linear structures which allows for learning tasks like analogies, synonyms, and classification.  All of these operations also preserve the Euclidean distances, so it does not affect the tasks which are measured using this distance; note the scaling also scales this distance, but does not change its order.  Our experiments indicate that the rotation is essential for a good alignment, and the scaling is needed to compare embeddings generated by different mechanisms (e.g., \GloVe and \WordToVec) and while helpful, not necessarily when the data set is changed.  Also, the translation provides minor but consistent improvement.  

We also show how to explicitly optimize cosine similarity by first normalizing all words -- however, this does not perform as well as instead optimizing Euclidean distance.  Rather we propose to weight words in the alignment by their norms, and this further improves the alignment because it emphasizes the words which have more stable embeddings.  

This alignment enables new ways that word embeddings can be compared.  This has the potential to shed light on the differences and similarity between them.  For instance, as observed in other ways, common linear substructures are present in both \GloVe and \WordToVec, and these structures can be aligned and recovered, further indicating that it is a well-supported feature inherent to the underlying language (and dataset).  We also show that changing the embedding mechanism has less of an effect than changing the data set, as long as that data set is meaningful.  Unstructured noise added to the input data set appears not to have much effect, but changing from the $4.57$B token Wikipedia data set to the $840$B token Common Crawl data set has a large effect.  

Additionally, we show that by aligning various embeddings, their characteristics, as measured by standard analogy and synonym tests, can be transferred from one embedding to another.  We also demonstrate that cross-language alignment can aid in word translation even when coming from completely different embedding mechanisms, even in a cross-validation setting.  This cross embedding-mechanism alignment opens the door for many other types of alignment word embeddings with embeddings generated from graphs, images, or any other data set, which has some useful word labels.  

Finally, we showed that we can ``boost'' embeddings without revisiting the (sometimes quite enormous) raw data.  This is surprisingly effective in improving scores on similarity and analogy tests, results in the best-known scores from embeddings on these tests.  For instance, on the \Simlex analogy test, we improve upon the best-known scores by almost $10\%$ in the Spearman correlation coefficient.  
There are many other potential applications of these techniques for aligning high-dimensional data embeddings. We propose some scenarios where they may be used in the following section.

\subsection{Other Applications}
  Here, we enumerate a few applications-- we do not experiment on many of these due to the extreme computational cost of performing an analysis of the effect (i.e., the baseline approaches of not using our techniques can be prohibitively expensive, or too qualitative to effectively evaluate).

\begin{enumerate}
	\item Common Crawl is one of the largest textual data sources available.  Moreover, it consistently gets updated to include the ever-increasing data on the internet. Each of these datasets has over 800B tokens, and extracting embeddings from these can be computationally expensive. However, extracting embeddings from the additional data not included in the previous update of Common Crawl should be significantly less expensive.  Aligning an embedding from just the new data, and performing a weighted average with the older larger one may work as well or better than the embedding made from scratch.  
	
	\item 
	A similar weighted average alignment can help with specialized data. Consider data from scientific journals only, or of domain-specific biomedical terms.  Embeddings from just these data sets would be very specialized and each word would have a specific word sense based on the domain.  Aligning these to a gigantic corpus can enrich the specialized domain-related regions on the larger embedding. 
	
	\item 	Tags and phrases in English can be single words or a string of words. Orienting an embedding of tags/phrases along say Common Crawl using an intersection of the single words in the two datasets can help place multi worded tags or phrases around words related to them. This can help derive meaning from random or unknown phrases. Images also often are annotated with a set of tag words.  So orienting a set of tags can help orient images meaningfully among words.
	
	\item 
	These methods can also be applied to even more heterogeneous embeddings than discussed above.  
	We can orient heterogeneous embeddings derived from a variety of methods, e.g., graph embeddings including node2vec~\cite{node2vec} or DeepWalk~\cite{DeepWalk}, and others \cite{CZC17,GF17,DCS17}, images~\cite{SIFT,SURF}, and for kernel methods~\cite{RR07,RR08}.  
	For instance, RDF data can contain shorthand query phrases like ``president children spouse" which answers the question ``who are the spouses and children of presidents?"  By orienting each word along word embeddings from Common Crawl, this may help answer similar questions even more abstractly.  
	Heterogeneous networks have a mixture of node types. If there is an intersection of some nodes (and node types) between any two embeddings (heterogeneous or homogeneous), we can orient them meaningfully.
	
\item Customer data collected at a company over different years and subsidiaries can be embedded using different features (such as income bracket, credit score, location, etc{.} depending on the company). Using common customers over the year, diverse sources and new users can be added meaningfully to the embedding and inferred about, without embedding all of the data points from scratch. 
Moreover, embedding the same users from different years and aligning them can also help deduce the change in their features over time.  
\end{enumerate}


\begin{figure}[]
	\centering
	\includegraphics[width =.55\linewidth]{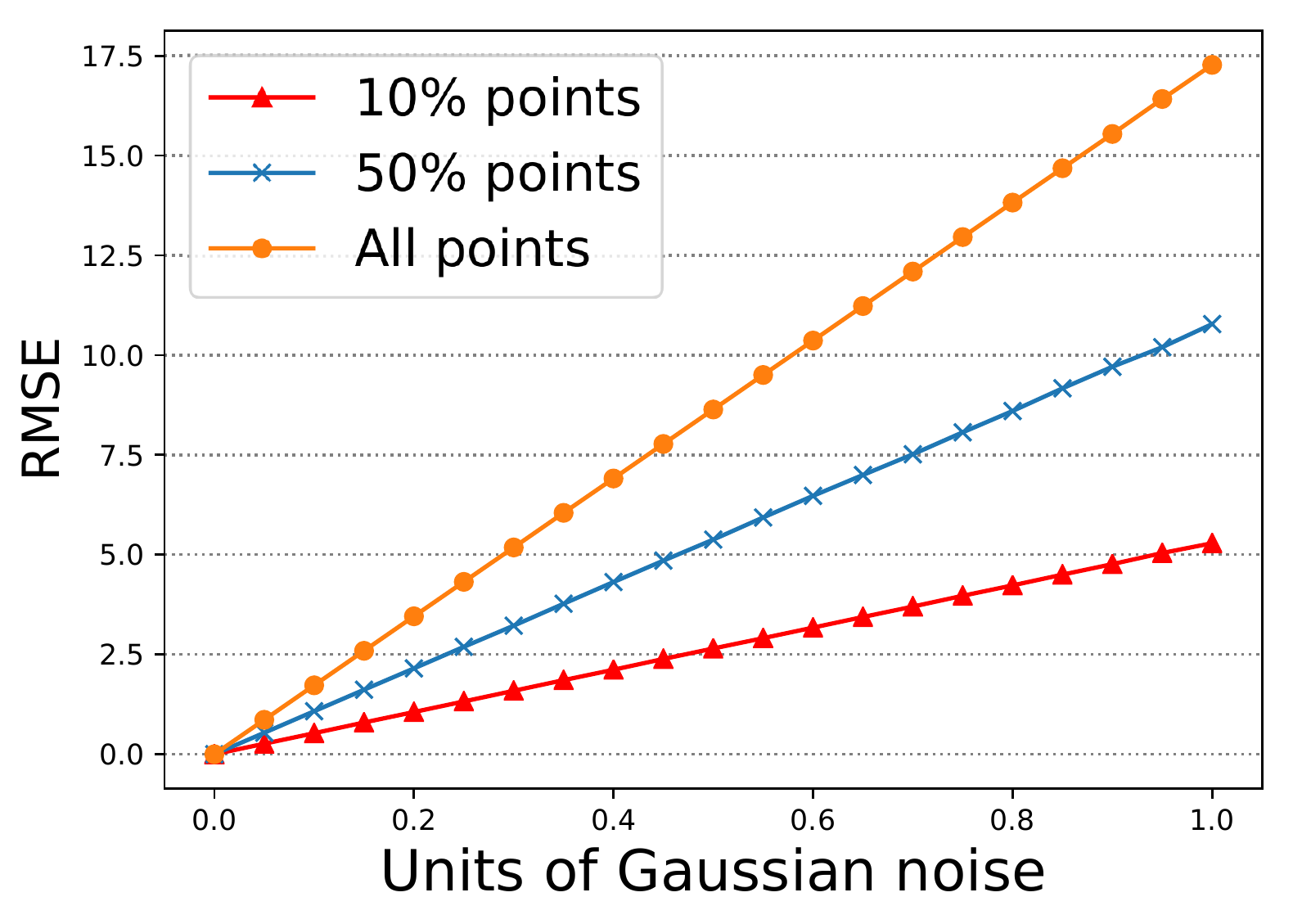} 
		\caption{RMSE Error after noise and \AOR{} alignment adding Gaussian noise to $10\%$, $50\%$, or all points.}
		\label{fig: gaussian}
\end{figure}

	


\begin{figure}
\captionsetup[subfigure]{justification=centering}
    \centering
      \begin{subfigure}{0.45\textwidth}
        \includegraphics[width=\textwidth]{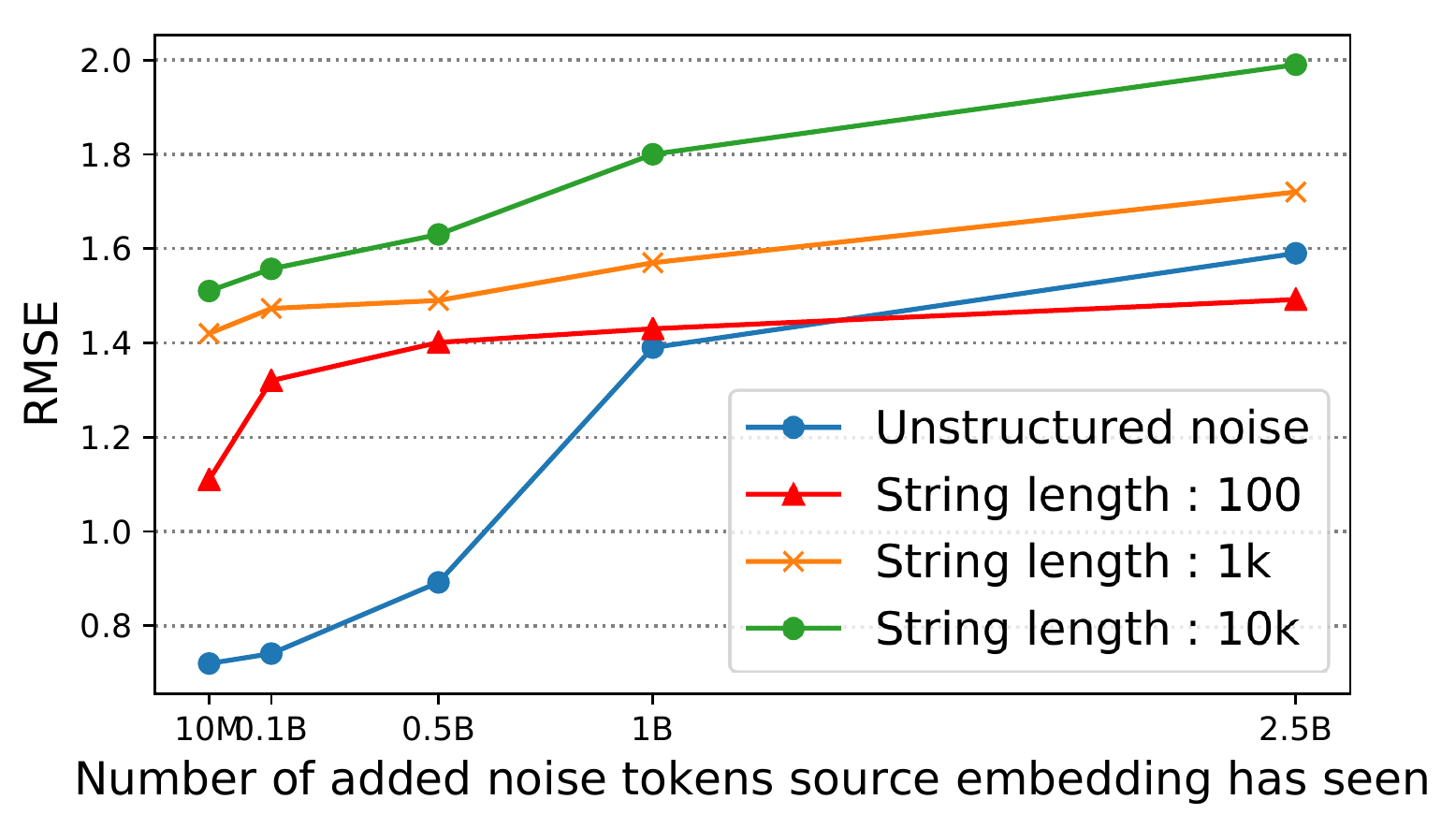}
          \caption{}
      \end{subfigure}
      \begin{subfigure}{0.45\textwidth}
        \includegraphics[width=\textwidth]{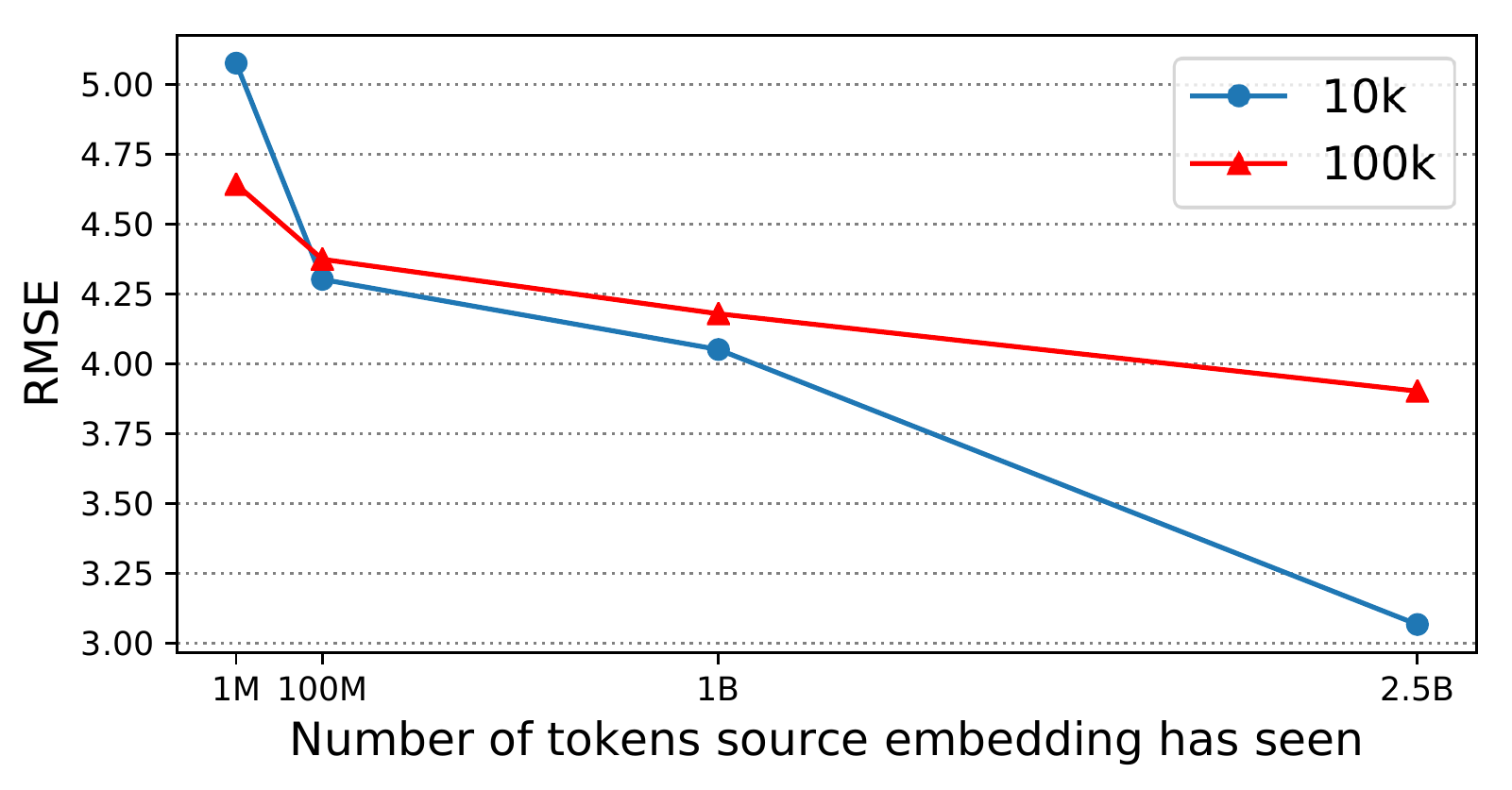}
          \caption{}
      \end{subfigure}
   \caption{RMSE Error after noise and \AOR{} alignment: 
		(a): Adding structured and unstructured noise before embedding. 
		(b): Incrementally added data.}
   \label{fig:RMSE-noise}
\end{figure}

\begin{table}[]

		\caption{\label{tbl:RMSE-Datasets+scale}RMSE after alignment for embeddings.  
		Top: Created from different datasets.  
		Bottom: Created by different embeddings; uses \AOs {} mapping rows onto columns and changing scale.  }
	\centering
	\begin{tabular}{r||cccc}
		\hline
		$\rightarrow$ & \s{G(W)}  & \s{\hspace{-2mm}G(WG)}  & \s{\hspace{-2mm}G(CC42)} & \s{\hspace{-2mm}G(CC840)\hspace{-2mm}}\\ \hline
		\s{G(W)}  & -   & 4.56 & 5.167 & 6.148\\
		
		\s{G(WG)} & 4.56  & - & 5.986  & 6.876\\ 
		\s{\hspace{-1mm}G(CC42)} & 5.167  & 5.986 & -  & 5.507\\ 
		\s{\hspace{-1mm}G(CC840)} & 6.148 & 6.876 & 5.507 & - \\ \hline	
	\end{tabular}		
	\hspace{.25in}
	\centering
	\begin{tabular}{r||ccc}
		\hline
		$\rightarrow$ & \RAW  & \GloVe & \hspace{-1mm}\WordToVec \\ \hline
		\RAW  & -   & 4.12 & 14.73 \\
		
		\GloVe & 0.045 & - & 12.93\\
		\WordToVec &0.043  & 3.68 & -  \\ \hline
		\hspace{-2mm} Scale to \GloVe  & 25 & 1 & 0.25 \\
		\hspace{-2mm} Scale from \GloVe  & 0.011  & 1 & 3 \\ \hline
	\end{tabular}

\end{table}

\begin{table}[p]
		\caption{\label{tbl:Sim-Analogy}Spearman coefficient scores for synonym and analogy tests between the aligned \WordToVec to \GloVe embeddings and between \GloVe embeddings of Wikipedia and CC42 dataset; r,s, and t stand for the functions of optimal rotation, scaling and translation, respectively, and w() is the weighted version of that function. When computing these scores across two embeddings, the best values are printed in bold. }
	\centering
	
	\resizebox{\columnwidth}{!}{\begin{tabular}{r||cccccccccc}
		\hline
		\multirow{2}{*}{Test Sets} & \multirow{2}{*}{\GloVe} & \multirow{2}{*}{\WordToVec } & \multicolumn{8}{c}{\WordToVec to \GloVe} 
		\\
		\cline{4-11}
		& & & untransformed & r & r + s & r + t & r + s + t & normalized & w(r) & w(r+s+t)
		\\
		\hline
		\RG & 0.614 & 0.696 & 0.041 & 0.584 & 0.584 & 0.570 & 0.594 & 0.553 & 0.592 & \textbf{0.597}\\
		\WSim & 0.623 & 0.659 & 0.064 & 0.624 & 0.624 & 0.611 & 0.652 & 0.604 & 0.657 & \textbf{0.664}\\
		\MC &0.669 & 0.818 & 0.013 & 0.868 & 0.868 & 0.843 &0.873 & 0.743 & 0.878 & \textbf{0.882}\\
		\Simlex & 0.296 & 0.342 & 0.012 & 0.278 &0.278 & 0.269 &0.314 & 0.261 & 0.314 & \textbf{0.316}\\ \hline
		\SYN & 0.587 & 0.582 & 0.000 & 0.501 &0.525 & 0.517 & 0.528 &0.493 &0.535 & \textbf{0.539}\\
		\SEM & 0.691 & 0.722 & 0.0001 & 0.624 & 0.656 & 0.633 & 0.697 &0.604 & 0.702 & \textbf{0.712}\\
		\hline
	\end{tabular}}

	\vspace{0.5mm}
	
	\resizebox{\columnwidth}{!}{\begin{tabular}{r||cccccccccc}
		\hline
		\multirow{2}{*}{Test Sets} & \multirow{2}{*}{\s{G(W)}} & \multirow{2}{*}{\s{G(CC42)}} & \multicolumn{8}{c}{\s{G(W)} to \s{G(CC42)}}  
		\\
		\cline{4-11}
		& & & untransformed & r &  r + s & r + t & r + s + t  & normalized & w(r) & w(r+s+t)
		\\
		\hline
		\RG & 0.614 & 0.817 & 0.363 & 0.818 & 0.818 & 0.811 &0.821 & 0.815 &0.818 & \textbf{0.825}\\
		\WSim & 0.623 & 0.63 & 0.017 & 0.618 & 0.618 & 0.615 &0.618& 0.601 &0.616 & \textbf{0.637}\\
		\MC & 0.669 & 0.786 & 0.259 & 0.766 & 0.766 & 0.732 &0.768 & 0.705 &0.771 & \textbf{0.774}\\
		\Simlex & 0.296 & 0.372 & 0.035 & 0.343 & 0.343 & 0.339 & \textbf{0.346} & 0.296 & \textbf{0.346} & \textbf{0.346}\\ \hline
		\SYN & 0.587  & 0.625 &0.00 & 0.566  &  \textbf{0.576} & 0.572 & \textbf{0.576} & 0.502 &\textbf{0.576} & \textbf{0.576}\\
		\SEM & 0.691  &  0.741 &0.00& 0.676  &  0.684 & 0.676 & 0.688 & 0.565 & 0.690 & \textbf{0.695}\\
		\hline
	\end{tabular}}

\end{table}

\begin{table}[]
		\caption{Synonym and analogy scores from rotations (applied to all words) learned on 100K, top 10K words, and top 10K words normalized.}
	\label{tbl : normalized}
	\centering

	\begin{tabular}{c||ccc}
		
		\hline
		&  \textsc{AO+R} & \textsc{AO+R} & \AON \\ 
		&  100K & 10K & 10K \\ \hline
		RG & 0.584 & 0.576 & 0.588 \\
		WSIM & 0.624 & 0.612 & 0.643\\
		MC & 0.868 & 0.817 & 0.851\\
		SIMLEX & 0.278 & 0.292 & 0.308\\ \hline
		SYN & 0.501 & 0.505 & 0.511\\
		SEM & 0.624 & 0.616 & 0.616\\
		\hline
	\end{tabular}
\end{table}

\begin{table}[]
		\caption{\label{tbl:baseline}Modified similarity tests (on only top 10K words) after alignment by Affine Transformation (AffTrans), LRA, and \AOR {} of Wikipedia and CC42 \GloVe embeddings. }
	\centering
	\begin{tabular}{r||ccc|c}
		\hline
		& LRA & AffTrans & \textsc{AO+R} & \textsc{AO+R} \\
		Test Sets & 10K & 10K & 10K & 100K \\
		\hline
		\RG & 0.701 & 0.301 & 0.728 & 0.818\\
		\WSim & 0.616 & 0.269 & 0.612 &0.618\\
		\MC & 0.719 & 0.412 & 0.722 & 0.766\\
		\Simlex & 0.327 & 0.126 & 0.340 & 0.343\\
		\hline 
	\end{tabular}

\end{table}

\begin{table}[]
		\caption{RMSE variation with word frequency in (a) \GloVe Wiki to \GloVe Common Crawl and (b) \WordToVec to \GloVe evaluated for Wiki dataset. All words were used for tests in lower case as listed in the table.}
	\centering
	\label{tbl:word freq}

	\resizebox{\columnwidth}{!}{\begin{tabular}{rr||ccc}
		\hline
		Word & Frequency in Wiki  & Norm in (\GloVe) Wiki & Wiki to CC (42B)  & \WordToVec to \GloVe
		\\ \hline 
		talk & 187513532 & 9.681& 8.608 & 12.462 \\
		november &2340726 & 7.847 & 5.26 &  8.614 \\
		man & 1035008 & 8.648 & 4.25 & 5.161\\
		statistical & 83531 & 5.891 & 4.63& 5.097\\
		bubbles & 11200 & 5.455 & 4.66 & 3.768 \\
		skateboard & 3804 & 5.670 & 5.714& 3.891 \\
		emoji & 1761 & 6.090 & 6.781 & 2.402\\
		haymaker & 705 & 4.108 & 5.951  & 1.573\\
		\hline
	\end{tabular}}

\end{table}

\begin{table}[]

	\caption{\label{tbl:boosting weighted}Similarity and analogy tests before and after alignment and combining embeddings derived from different techniques and datasets by AO + CENTERED + WEIGHTED.  Best scores in bold.}
	
		\centering
	\resizebox{\columnwidth}{!}{\begin{tabular}{r|ccccccc}
		\hline
		TestSets & \s{G(W)}  & \hspace{-1mm}\s{W(W)} & \hspace{-2mm}[\s{G(W)}$\odot$\s{W(W)}]\hspace{-1mm} & \s{W(GN)} & \hspace{-2mm}[\s{G(W)}$\odot$\s{W(GN)}]\hspace{-1mm} & \hspace{-1mm}\s{G(CC840)} & \hspace{-2mm}[\s{G(CC840)}$\odot$\s{W(GN)}]\hspace{-2mm} \\
		\hline
		\RG & 0.614 & 0.696 & 0.716 & 0.760 &   {\bf 0.836} & 0.768 & 0.810 \\
		\WSim & 0.623 & 0.659 & 0.695 & 0.678 & 0.708 & 0.722 &  {\bf 0.740} \\
		\MC & 0.669 & 0.818 & {\bf 0.869} & 0.800 & 0.811 & 0.798 & 0.847 \\
		\Simlex & 0.296 & 0.342 & 0.368 & 0.367 & 0.394 & 0.408 &  {\bf 0.446} \\
		\hline 
		\SYN & 0.587& 0.582  & 0.592 & 0.595 & 0.607 &  {\bf 0.618} & 0.609 \\
		\SEM & 0.691 & 0.722 &  {\bf 0.759} &  0.713 & 0.733 & 0.729 & 0.733 \\
		\hline
	\end{tabular}}
\end{table}

\begin{table}[]

		\caption{\label{tbl:translation-spanish}
		The 5 closest neighbors of a word before and after alignment by AO + ROTATION (between English - Spanish). Target word (translation) in bold.}
	\centering 

	\resizebox{\columnwidth}{!}{\begin{tabular}{r||cc}
		\hline
		Word  & Neighbors before alignment & Neighbors after alignment  \\
		\hline
		woman & her, young, man, girl, mother & her, girl, \textbf{mujer}, mother, man\\
		week & month, day, year, monday, time &  days, \textbf{semana}, year, day, month\\
		casa & apartamento, casas, palacio, residencia, habitaci & casas, \textbf{home}, homes, habitaci, apartamento\\
		caballo & caballos, caballer, jinete, jinetes, equitaci & \textbf{horse}, horses, caballos, jinete\\
		sol & sombra,luna,solar,amanecer,ciello & \textbf{sun}, moon, luna, solar, sombra \\
		\hline
	\end{tabular}}

\end{table}

\begin{table}[]

		\caption{The 5 closest neighbors of a word before and after alignment by AO (between English - French). Target word (translation) in bold.}
	\centering

	\begin{tabular}{r||cc}
		\hline
		Word  & Neighbors before alignment & Neighbors after alignment  \\
		\hline
		woman & her, young, man, girl, mother & her, young, man, \textbf{femme}, la\\
		week & month, day, year, monday, time & month, day, year, \textbf{semaine}, start\\
		heureux & amoureux, plaisir, rire, gens, vivre & \textbf{happy}, plaisir, loving, amoureux, rire\\
		cheval & chein, petit, bateau, pied, jeu & \textbf{horse}, dog, chien, red, petit\\
		daughter & father, mother, son, her, husband & mother, \textbf{fille}, husband,  mere, her \\
		\hline
	\end{tabular}

	\label{tbl : translation}
\end{table}

\begin{table}[]
        \caption{Translating Spanish to French by aligning directly as compared to aligning both to English.}
    \centering
    \begin{tabular}{c||ccc}
        \hline
         Top n accuracy & Unaligned & Pairwise Aligned & Indirectly Aligned  \\
         \hline 
         n = 1 & 0.0 & 0.312 & 0.277\\
         n = 5 & 0.0 & 0.635 & 0.601\\
         n = 10 & 0.0 & 0.723 & 0.707\\
        \hline
    \end{tabular}

\label{tbl : multi}
\end{table}


\chapter{Attenuating Bias in Word Vectors}
\label{chap: bias paper 1}
   Word vector representations are well developed tools and are known to retain significant semantic and syntactic structure of languages. But they are also prone to carrying and amplifying societal biases~\cite{debias} which can perpetrate discrimination in various applications. We explore in this chapter, how and where biases get captured in word representations. We also explore new, simple ways to detect the most stereotypically gendered words in an embedding and remove the bias from them.  Further, we verify how names are masked carriers of gender bias and then use that as a tool to attenuate bias in embeddings. Finally, we extend this property of names to show how names can be used to detect other types of bias in the embeddings such as bias based on race, ethnicity, and age.

\section{Bias in Word Vectors}
\label{sec : intro}

As we have noted, word embeddings are an increasingly popular application that drives NLP and many tasks in Machine Learning as well.  
However, it has been observed that word embeddings are prone to express the societal biases inherent in the data it is extracted from \cite{debias,debias2,Caliskan183}. Further, Zhao \etal (2017) \cite{ZhaoWYOC17} and Hendricks \etal (2018) \cite{Burns2018WomenAS} show that machine learning algorithms and their output show more bias than the data they are generated from.

Word vector embeddings as used in machine learning towards applications that significantly affect people's lives, such as to assess credit~\cite{credit}, predict crime~\cite{crime}, and other emerging domains such as judging loan applications and resumes for jobs or college applications.  So it is paramount that efforts are made to identify and if possible to remove bias inherent in them.  Or at least, we should attempt to minimize the propagation of bias within them.  For instance, in using existing word embeddings, Bolukbasi \etal (2016) \cite{debias} demonstrated that women and men are associated with different professions, with men associated with leadership roles and professions like doctor, programmer and women closer to professions like receptionist or nurse.  Caliskan \etal (2017) ~\cite{Caliskan183} similarly noted how word embeddings show that women are more closely associated with arts than math while it is the opposite for men. They also showed how positive and negative connotations are associated with European-American versus African-American names.

This chapter simplifies, quantifies, and fine-tunes these approaches: we show that a simple linear projection of all words based on vectors captured by common names is an effective and general way to significantly reduce bias in word embeddings.  
More specifically:

\begin{itemize}
\item[1a.]
We demonstrate that simple linear projection of all word vectors along a bias direction is more effective than the Hard Debiasing of Bolukbasi \etal (2016) \cite{debias} which is more complex and also partially relies on crowdsourcing. 

\item[1b.]
We show that these results can be slightly improved by dampening the projection of words that are far from the projection distance.  

\item[2.]
We examine the bias inherent in the standard word pairs used for debiasing based on gender by randomly flipping or swapping these words in the raw text before creating the embeddings.  We show that this alone does not eliminate bias in word embeddings, corroborating that simple language modification is not as effective as repairing the word embeddings themselves.  

\item[3a.]
We show that common names with gender association (e.g., \word{john}, \word{amy}) often provides a more effective gender subspace to debias along than using gendered words (e.g., \word{he}, \word{she}).  

\item[3b.] 
We demonstrate that names carry other inherent, and sometimes unfavorable, biases associated with race, nationality, and age, which also corresponds with bias subspaces in word embeddings.  And that it is effective to use common names to establish these directions of bias and remove this bias from word embeddings.  
\end{itemize}



\section{Data}
\label{sec:data}
We set as default the text corpus of a Wikipedia dump (\url{dumps.wikimedia.org/enwiki/latest/enwiki-latest-pages-articles.xml.bz2}) with 4.57 billion tokens and we extract a $\GloVe$ embedding from it in $D=300$ dimensions per word. We restrict the word vocabulary to the most frequent $100{,}000$ words.  We also modify the text corpus and extract embeddings from it as described later. 

So, for each word in the Vocabulary $W$, we represent the word by the vector $w_i \in \b{R}^D$ in the embedding.   The bias (e.g., gender) subspace is denoted by a set of vector $B$.  It is typically considered in this work to be a single unit vector, $v_B$ (explained in detail later).  As we will revisit, a single vector is typically sufficient and will simplify descriptions.  However, these approaches can be generalized to a set of vectors defining a multi-dimensional subspace.

\section{How to Attenuate Bias}
\label{sec : methods to attenuate bias}

Given a word embedding, debiasing typically takes as input a set $\c{E} = \{E_1, E_2, \ldots, E_m\}$ of equality sets.  An equality set $E_j$ for instance can be a single pair (e.g., \{\word{man}, \word{woman}\}), but could be more words (e.g., \{\word{latina}, \word{latino}, \word{latinx}\}) that if the bias connotation (e.g., gender) is removed, then it would objectively make sense for all of them to be equal.  Our data sets will only use word pairs (as a default the ones in Table \ref{tbl:word-pairs}), and we will describe them as such hereafter for simpler descriptions.  In particular, we will represent each $E_j$ as a set of two vectors $e_i^+, e_i^- \in \b{R}^D$.  

Given such a set $\c{E}$ of equality sets, the bias vector $v_B$ can be formed as follows~\cite{debias}.  For each $E_j = \{e_j^+, e_j^-\}$ create a vector $\vec{e}_i = e_i^+ - e_i^-$ between the pairs.  Stack these to form a matrix $Q = [\vec{e}_1\; \vec{e}_2\; \ldots\; \vec{e}_m]$, and let $v_B$ be the top singular vector of $Q$.  We revisit how to create such a bias direction in Section \ref{sec:gender-subspace}.  

Now given a word vector $w \in W$, we can project it to its component along this bias direction $v_B$ as 
\begin{equation}
\pi_B(w) = \langle w, v_B \rangle v_B.
\end{equation}

\subsection{Existing Method: Hard Debiasing}

The most notable advance towards debiasing embeddings along the gender direction has been by Bolukbasi \etal (2016) \cite{debias} in their algorithm called Hard Debiasing ($HD$). 
%
It takes a set of words desired to be neutralized, $\{w_1, w_2, \ldots, w_n\} = W_N \subset W$, a unit bias subspace vector $v_B$, and a set of equality sets $E_1, E_2, \ldots, E_m$. 

First, words $\{w_1, w_2, \ldots, w_n\} \in W_N$ are projected orthogonal to the bias direction and normalized  
\begin{equation}
w_i' = \frac{w_i - w_B}{||w_i - w_B||}.  
\end{equation}


Second, it corrects the locations of the vectors in the equality sets.  Let $\mu_j = \frac{1}{|E|} \sum_{e \in E_j} e$ be the mean of an equality set, and $\mu = \frac{1}{m} \sum_{j=1}^m \mu_j$ be the mean of of equality set means.  Let $\nu_j = \mu - \mu_j$ be the offset of a particular equality set from the mean.  
Now each $e \in E_j$ in each equality set $E_j$ is first centered using their average and then neutralized as
\begin{equation}
e' = \nu_j + \sqrt{1-\|\nu_j\|^2}\frac{\pi_B(e) - v_B}{\|\pi_B(e)- v_B\|}.  
\end{equation}
Intuitively $\nu_j$ quantifies the amount words in each equality set $E_j$ differ from each other in directions apart from the gender direction. This is used to center the words in each of these sets.

This renders word pairs such as \word{man} and \word{woman} as equidistant from the neutral words $w_i'$ with each word of the pair being centralized and moved to a position opposite the other in the space. This can filter out properties either word gained by being used in some other context, like \word{mankind} or \word{humans} for the word \word{man}.

The word set $W_N = \{w_1,w _2, \ldots, w_n\} \subset W$ which is debiased is obtained in two steps.  
First, it seeds some words as definitionally gendered via crowdsourcing and using dictionary definitions; the complement -- ones not selected in this step -- are set as neutral. 
Next, using this seeding an SVM is trained and used to predict among all $W$ the set of other biased $W_B$ or neutral words $W_N$. 
This set $W_N$ is taken as desired to be neutral and is debiased.  
Thus not all words $W$ in the vocabulary are debiased in this procedure, only a select set, chosen via crowd-sourcing and definitions, and its extrapolation.  
Also, the word vectors in the equality sets are also handled separately.  
This makes this approach not a fully automatic way to debias the vector embedding.

\subsection{Alternate and Simple Methods}
\label{sec : projection}

We present some simple alternatives to HD which are simple and fully automatic.  
These all assume a bias direction $v_B$.  

\noindent \textbullet\ \myParagraph{Subtraction}
As a simple baseline, for \emph{all} word vectors $w$  subtract the gender direction $v_B$ from $w$:
\begin{equation}
w' = w - v_B. 
\end{equation}

\noindent \textbullet\ \myParagraph{Linear Projection}
A better baseline is to project \emph{all} words $w \in W$ orthogonally to the bias vector $v_B$.  
\begin{equation}
w' = w - \pi_B(w) = w - \langle w, v_B \rangle v_B.  
\end{equation}
This enforces that the updated set $W' = \{w' \mid w \in W\}$ has no component along $v_B$, and hence the resulting span is only $D-1$ dimensions.  Reducing the total dimension from say $300$ to $299$ should have minimal effects of expressiveness or generalizability of the word vector embeddings. 

Bolukbasi \etal \cite{debias} apply this same step to a dictionary definition based extrapolation and crowd-source-chosen set of word pairs $W_N \subset W$.  We quantify in Section \ref{sec : tests} that this single universal projection step debiases better than HD.

For example, consider the bias as gender, and the equality set with words \word{man} and \word{woman}.  Linear projection will subtract from their word embeddings the proportion that was along the gender direction $v_B$ learned from a larger set of equality pairs.   It will make them close-by but not exactly equal.   
The word \word{man} is used in many extra senses than the word \word{woman}; it is used to refer to humankind, to a person in general, and in expressions like ``oh man.''  
In contrast, a simpler word pair with fewer word senses, like he-she and him-her, we can expect them to be almost at identical positions in the vector space after debiasing, implying their synonymity. 

Thus, this approach uniformly reduces the component of the word along the bias direction without compromising on the differences that words (and word pairs) have.  

%

\subsection{Partial Projection}
\label{sec : variants}
A potential issue with the simple approaches is that they can significantly change some embedded words which are definitionally biased (e.g., the neutral words $W_B$ described by Bolukbasi \etal~\cite{debias}).  (We note that this may not *actually* be a problem (see Section \ref{sec : tests}); the change may only be associated with the bias, so removing it would then not change the meaning of those words in any way except the ones we want to avoid.) However, these intuitively should be words which have a correlation with the bias vector, but also are far in the orthogonal direction.  In this section, we explore how to automatically attenuate the effect of the projection on these words.   

This stems from the observation that given a bias direction, the words which are most extreme in this direction (have the largest dot product) sometimes have a reasonably biased context, but some do not.  These ``false positives'' may be large normed vectors which also happen to have a component in the bias direction.  

We start with a bias direction $v_B$ and mean $\mu$ derived from equality pairs (defined the same way as in the context of HD).  
Now given a word vector $w$ we decompose it into key values along two components, illustrated in Figure \ref{fig:eta-beta}.  
First, we write its bias component as 
\begin{equation}
\beta(w) = \langle w, v_B \rangle - \langle \mu, v_B \rangle.
\end{equation}
This is the difference of $w$ from $\mu$ when both are projected onto the bias direction $v_B$.  

Second, we write a (residual) orthogonal component 
\begin{equation}
r(w) = w - \langle w, v_B \rangle v_B.  
\end{equation}
Let $\eta(w) = \|r(w)\|$ be its value. 
It is the orthogonal distance from the bias vector $v_B$; recall we chose $v_B$ to pass through the origin, so the choice of $\mu$ does not affect this distance.  

Now we will maintain the orthogonal component ($r(w)$, which is in a subspace spanned by $D-1$ out of $D$ dimensions) but adjust the bias component $\beta(w)$ to make it closer to $\mu$.  But the adjustment will depend on the magnitude $\eta(w)$.  
As a default we set 
\[
w' = \mu + r(w)
\]
so all word vectors retain their orthogonal component but have a fixed and constant bias term.  
This is functionally equivalent to the Linear Projection approach; the only difference is that instead of having a $0$ magnitude along $v_B$ (and the orthogonal part unchanged), it instead has a magnitude of constant $\mu$ along $v_B$ (and the orthogonal part still unchanged).  This adds a constant to every inner product and a constant offset to any linear projection or classifier.  If we are required to work with normalized vectors (we do not recommend this as the vector length captures veracity information~
about its embedding), we can simple set $w' = r(w)/\|r(w)\|$.

Given this set-up, we now propose three modifications.  In each set 
\begin{equation}
w' = \mu + r(w) + \beta \cdot f_i(\eta(w)) \cdot v_B
\end{equation}
were $f_i$ for $i = \{1,2,3\}$ is a function of only the orthogonal value $\eta(w)$.  For the default case $f(\eta) = 0$
\begin{align*}
f_1(\eta) &= \sigma^2/(\eta+1)^2
\\
f_2(\eta) &= \exp(-\eta^2/\sigma^2)
\\
f_3(\eta) &= \max(0,\sigma/2\eta) 
\end{align*}
Here $\sigma$ is a hyperparameter that controls the importance of $\eta$; in Section \ref{app:sigma} we show that we can just set $\sigma=1$.  x

In Figure \ref{fig : variations} we see the regions of the $(\eta,\beta)$-space that the functions $f$, $f_1$ and $f_2$ consider gendered. $f$ projects all points onto the $y = \mu$ line. But variants $f_1$, $f_2$, and $f_3$ are represented by curves that dampen the bias reduction to different degrees as $\eta$ increases.  Points P1 and P2 have the same dot products with the bias direction but different dot products along the other $D-1$ dimensions. We can observe the effects of each dampening function as $\eta$ increases from P1 to P2.

\subsection{Setting $\sigma=1$}
\label{app:sigma}
To complete the damping functions $f_1$, $f_2$, and $f_3$, we need a value $\sigma$.  If $\sigma$ is larger, then more word vectors have bias completely removed; if $\sigma$ is smaller than more words are unaffected by the projection.  
The goal is that words $S$ which are likely to carry a bias connotation to have little damping (small $f_i$ values) and words $T$ which are unlikely to carry a bias connotation to have more damping (large $f_i$ values -- they are not moved much).  

Given sets $S$ and $T$, we can define a gain function
\begin{equation}
\gamma_{i,\rho}(\sigma) =  \sum_{s \in S} \beta(s) (1-f_i(\eta(s))) -  \rho \sum_{t \in T} \beta(t) (1-f_i(\eta(t))),
\end{equation}
with a regularization term $\rho$. 
The gain $\gamma$ is large when most bias words in $S$ have very little damping (small $f_i$, large $1-f_i$), and the opposite is true for the neutral words in $T$.  We want the neutral words to have large $f_i$ and hence small $1-f_i$, so they do not change much.

To define the gain function, we need sets $S$ and $T$; we do so with the bias of interest as gender.  The biased set $S$ is chosen among a set of $1000$ popular names in $W$ which (based on babynamewizard.com and SSN databases~\cite{nameWizard,ssn}) are strongly associated with gender.  The neutral set $T$ is chosen as the most frequent $1000$ words from $W$, after filtering out obviously gendered words like names \word{man} and \word{he}.  We also omit occupation words like \word{doctor} and others which may carry unintentional gender bias (these are words we would like to automatically de-bias).  The neutral set may not be perfectly non gendered, but it provides a reasonable approximation of all non gendered words.

We find for an array of choices for $\rho$ (we tried $\rho =1$, $\rho=10$, and $\rho=100$), the value $\sigma=1$ approximately maximizes the gain function $\gamma_{i,\rho}(\sigma)$ for each $i \in \{1,2,3\}$.  So for hereafter we fix $\sigma=1$.  

Although these sets $S$ and $T$ play a role somewhat similar to the crowd-sourced sets $W_B$ and $W_N$ from HD that we hoped to avoid, the role here is much reduced.  This is just to verify that a choice of $\sigma=1$ is reasonable, and otherwise, they are not used.   

\subsection{Flipping the Raw Text}
Since the embeddings preserve inner products of the data from which it is drawn, we explore if we can make the data itself gender unbiased and then observe how that change shows up in the embedding.  Unbiasing a textual corpus completely can be very intricate and complicated since there are many (sometimes implicit) gender indicators in text. 
Nonetheless, we propose a simple way of neutralizing bias in textual data by using word pairs $E_1, E_2, \ldots E_m$; in particular, when we observe in the raw text a word of a word pair, we randomly flip it to the other word in the pair.  For instance for gendered word pairs (e.g., he-she) in a string ``\word{he} was a doctor'' we may flip to ``\word{she} was a doctor.''

We implement this procedure over the entire input raw text and try various probabilities of flipping each observed word, focusing on probabilities $0.5$, $0.75$, and $1.00$.  The first $0.5$-flip probability makes each element of a word pair equally likely.  The last $1.00$-flip probability reverses the roles of those word pairs, and $0.75$-flip probability does something in between.  
We perform this set of experiments on the default Wikipedia data set and switch between word pairs (say \word{man} $\rightarrow$ \word{woman}, \word{she} $\rightarrow$ \word{he}, etc.), from a large list of 75 word pairs (see Appendix \ref{app : flipping}).  

We observe how the proportion along the principal component changes with this flipping in Figure \ref{fig:flipping}.  
We see that flipping with $0.5$ somewhat dampens the difference between the different principal components.  
On the other hand, flipping with a probability of $1.0$ (and to a lesser extent $0.75$) exacerbates the gender components rather than dampening it. Now there are two components significantly larger than the others.  This indicates this flipping is only addressing part of the explicit bias, but missing some implicit bias, and these effects are now muddled.  

We list some gender biased analogies in the default embedding and how they change with each of the methods described in this section in Table \ref{tbl : analogies flipping}.

\section{The Bias Subspace}
\label{sec:gender-subspace}

%

We explore ways of detecting and defining the bias subspace $v_B$ and recovering the most gendered words in the embedding.  Recall as default, we use $v_B$ as the top singular vector of the matrix defined by stacking vectors $\vec{e}_i = e_i^+ - e_i^-$ of biased word pairs.  
We primarily focus on gendered bias, using words in Table \ref{tbl:word-pairs}, and show later how to effectively extend to other biases.

\subsection{Detecting the Gender Subspace}
\label{app : gender dir}
First, we take a set of gendered word pairs as listed in Table \ref{tbl : Gendered Words}.
From our default Wikipedia dataset, using the embedded vectors for these word pairs (i.e., woman-man, she-he, etc.), we create a basis for the subspace $F$, of dimension $10$.   We then try to understand the distribution of variance in this subspace.  To do so, we project the entire dataset onto this subspace $F$, and take the SVD.  The top chart in Figure \ref{fig:SVDcharts} shows the singular values of the entire data in this subspace $F$.  We observe that there is a dominant first singular vector/value which is almost twice the size of the second value.  After this drop, the decay is significantly more gradual.  This suggests using only the top singular vector of $F$ as the gender subspace, not $2$ or more of these vectors.   

To grasp how much of this variation is from the correlation along the gender direction, and how much is just random variation, we repeat this experiment, again in Figure \ref{fig:SVDcharts}, with different ways of creating the subspace $F$.  
First, in chart (b), we generate $10$ vectors, with one word chosen randomly, and one chosen from the gendered set (e.g., chair-woman).  
Second, in chart (c), we generate $10$ vectors between two random words from our set of the $100{,}000$ most frequent words; these are averaged over $100$ random iterations due to higher variance in the plots.  
Finally in chart (d), we generate $10$ random unit vectors in $\b{R}^{300}$.    
We observe that the pairs with one gendered vector in each pair still exhibit a significant drop in singular values, but not as drastic as with both pairs. The other two approaches have no significant drop since they do not in general contain a gendered word with interesting subspace. All remaining singular values and their decay appears similar to the non leading ones from the charts (a) and (b). This further indicates that there is roughly one important gender direction, and any related subspace is not significantly different than a random one in the word-vector embedding.

%
%

Now, for any word $w$ in vocabulary $W$ of the embedding, we can define $w_B$ as the part of $w$ along the gender direction. 


Based on the experiments shown in Figure \ref{fig:SVDcharts}, it is justified to take the gender direction as the (normalized) first right singular vector, $v_B$, or the full data set data projected onto the subspace $F$.  Then, the component of a word vector $w$ along $v_B$ is simply $\langle w, v_B\rangle v_B$. 

Calculating this component when the gender subspace is defined by two or more of the top right singular vectors of $V$ can be done similarly.

We should note here that the gender subspace defined here passes through the origin. Centering the data and using PCA to define the gender subspace lets the gender subspace not pass through the origin. We see a comparison in the two methods in Section \ref{sec : tests} as $HD$ uses PCA and we use SVD to define the gender direction.

\subsection{Most Gendered Words}
The dot product, $\langle v_B, w \rangle$ of the word vectors $w$ with the gender subspace  $v_B$ is a good indicator of how gendered a word is. 
The magnitude of the dot product tells us of the length along the gender subspace and the sign tells us whether it is more female or male.   
Some of the words denoted as most gendered are listed in Table \ref{tbl : Gendered Words}.

\subsection{Bias Direction Using Names}
\label{detecting bias with names}

When listing gendered words by $|\langle v_B, w \rangle|$, we observe that many gendered words are names. This indicates the potential to use names as an alternative (and potentially in a more general way) to bootstrap finding the gender direction. 

From the top 100K words, we extract the 10 most common male $\{m_1, m_2, \ldots, m_{10}\}$ and female $\{s_1,s_2,\ldots, s_{10}\}$ names which are not used in ambiguous ways (e.g., not the name \word{hope} which could also refer to the sentiment). 
We pair these 10 names from each category (male, female) randomly and compute the SVD as before.  
We observe in Figure \ref{fig : names} that the fractional singular values show a similar pattern as with the list of correctly gendered word pairs like man-woman, he-she, etc.  
But this way of pairing names is quite imprecise. These names are not ``opposites" of each other in the sense that word pairs are. So, we modify how we compute $v_B$ now so that we can better use names to detect the bias in the embedding. The following method gives us this advantage where we do not necessarily need word pairs or equality sets as in Bolukbasi \etal \cite{debias}.

Our gender direction is calculated as,
\begin{equation}
v_{B,\textsf{names}} = \frac{s - m}{\|s - m\|},
\end{equation}
where $s = \frac{1}{10}\sum_{i} s_i$ and 
$m = \frac{1}{10}\sum_{i} m_i$.  


Using the default Wikipedia dataset, we found that this is a good approximator of the gender subspace defined by the first right singular vector calculated using gendered words from Table \ref{tbl:word-pairs}; there dot product is $0.809$.
We find similar large dot product scores for other datasets too. 

Here too we collect all the most gendered words as per the gender direction $v_{B,\textsf{names}}$ determined by these names. Most gendered words returned are similar to using the default $v_B$, like occupational words, adjectives, and synonyms for each gender. We find names to express similar classification of words along male - female vectors with \word{homemaker} more female and \word{policeman} being more male. We illustrate this in more detail in Table \ref{tbl : occ names list}. 

Using that direction, we debias by linear projection. There is a similar shift in analogy results. We see a few examples in Table \ref{tbl : analogies}.

\section{Quantifying Bias}
\label{sec : tests}
In this section, we develop new measures to quantify how much bias has been removed from an embedding and evaluate the various techniques we have developed for doing so.

As one measure, we use the Word Embedding Association Test (WEAT) test developed by Caliskan \etal (2017) \cite{Caliskan183} as analogous to the IAT tests to evaluate the association of male and female gendered words with two categories of target words: career oriented words versus family oriented words.  We detail WEAT and list the exact words used (as in \cite{Caliskan183}) in the Appendix \ref{app: words weat}; smaller values are better.

Bolukbasi \etal \cite{debias} evaluated embedding bias use a crowdsourced judgment of whether an analogy produced by an embedding is biased or not.  Our goal was to avoid crowdsourcing, so we propose two more automatic tests to qualitatively and uniformly evaluate an embedding for the presence of gender bias.


\subsection{Embedding Coherence Test (ECT)}

A way to evaluate how the neutralization technique affects the embedding is to evaluate how the nearest neighbors change for (a) gendered pairs of words $\c{E}$ and (b) indirect-bias-affected words such as those associated with sports or occupational words (e.g., \word{football}, \word{captain}, \word{doctor}).  
We use the gendered word pairs in Table \ref{tbl:word-pairs} for $\c{E}$ and the professions list $P = \{p_1, p_2, \ldots, p_k\}$ as proposed and used by Bolukbasi \etal \url{https://github.com/tolga-b/debiaswe} (see also Appendix \ref{sec : occupation words}) to represent (b).

\begin{itemize}
\item[S1:] 
For all word pair $\{e_j^+, e_j^-\} = E_j \in \c{E}$ we compute two means $m = \frac{1}{|\c{E}|} \sum_{E_j \in \c{E}} e_j^+$ and $s = \frac{1}{|\c{E}|} \sum_{E_j \in \c{E}} e_j^-$.   We find the cosine similarity of both $m$ and $s$ to all words $p_i \in P$.  This creates two vectors $u_m, u_s \in \b{R}^k$.  

\item[S2:] 
We transform these similarity vectors to replace each coordinate by its rank order and compute the Spearman Coefficient (in $[-1,1]$, larger is better) between the rank order of the similarities to words in $P$. 
\end{itemize}

Thus, here, we care about the order in which the words in $P$ occur as neighbors to each word pair rather than the exact distance. The exact distance between each word pair would depend on the usage of each word and thus on all the different dimensions other than the gender subspace too. But the order being relatively the same, as determined using Spearman Coefficient would indicate the dampening of bias in the gender direction (i.e., if \word{doctor} by profession is the 2nd closest of all professions to both \word{man} and \word{woman}, then the embedding has a dampened bias for the word \word{doctor} in the gender direction). 
Neutralization should ideally bring the Spearman coefficient towards 1.

\subsection{Embedding Quality Test (EQT)  }
 
The demonstration by Bolukbasi \etal \cite{debias} about the skewed gender roles in embeddings using analogies is what we try to quantify in this test. We attempt to quantify the improvement in analogies with respect to bias in the embeddings. 

We use the same sets $\c{E}$ and $P$ as in the ECT test.  
However, for each profession $p_i \in P$ we create a list $S_i$ of their plurals and synonyms from WordNet on NLTK~\cite{nltk}.

\begin{itemize}
\item[S1:] 
For each word pair $\{e_j^+, e_j^-\} = E_j \in \c{E}$, and each occupation word $p_i \in P$, we test if the analogy  
$e_j^+ : e_j^- :: p_i$ 
returns a word from $S_i$. If yes, we set $Q(E_j,p_i) = 1$, and $Q(E_j,p_i) = 0$ otherwise.  

\item[S2:]
Return the average value across all combinations
$\frac{1}{|\c{E}|}\frac{1}{k} \sum_{E_j \in \c{E}} \sum_{p_i \in P} Q(E_j, p_i)$.  
\end{itemize}

The scores for EQT are typically much smaller than for ECT.  We explain two reasons for this.  
First, EQT does not check for if the analogy makes relative sense, biased or otherwise. So, ``\word{man} : \word{woman} :: \word{doctor} : \word{nurse}'' 
is as wrong as 
``\word{man} : \word{woman} :: \word{doctor} : \word{chair}." 
This pushes the score down.

Second, synonyms in each set $s_i$ as returned by WordNet \cite{WN} on the Natural Language Toolkit, NLTK \cite{nltk} do not always contain all possible variants of the word. For example, the words \word{psychiatrist} and \word{psychologist} can be seen as analogous for our purposes here but linguistically are removed enough that WordNet does not put them as synonyms together.  Hence, even after debiasing, if the analogy returns 
``\word{man} : \word{woman} :: \word{psychiatrist} : \word{psychologist}`` 
S1 returns 0. Further, since the data also has several misspelt words, \word{archeologist} is not recognized as a synonym or alternative for the word \word{archaeologist}. 
For this too, S1 returns a 0.

The first caveat can be side-stepped by restricting the pool of words we search over for the analogous word to be from list $P$. But it is debatable if an embedding should be penalized equally for returning both nurse or chair for the analogy 
``\word{man} : \word{woman} :: \word{doctor} : ?''. 

This measures the quality of analogies, with better quality having a score closer to $1$.

\subsection{Evaluating Embeddings}
We mainly run $4$ methods to evaluate our methods WEAT, EQT, and two variants of ECT:
 ECT (word pairs) uses $\c{E}$ defined by words in Table \ref{tbl:word-pairs} and 
 ECT (names) which uses vectors $m$ and $s$ derived by gendered names. 
 
We observe in Table \ref{tbl : Test 2 - analogies} that the ECT score increases for all methods in comparison to the non-debiased (the original) word embedding; the exception is flipping with $1.0$ probability score for ECT (word pairs) and all flipping variants for ECT (names).  Flipping does nothing to affect the names, so it is not surprising that it does not improve this score; further indicating that it is challenging to directly fix the bias in the raw text before creating embeddings.  
Moreover, HD has the lowest score (of $0.917$) whereas projection obtains scores of $0.996$ (with $v_B$) and $0.943$ (with $v_{B,\textsf{names}}$).  

EQT is a more challenging test, and the original embedding only achieves a score of $0.128$, and HD only obtains $0.145$ (that is $12-15\%$ of occupation words have their related word as one of its nearest neighbors).  On the other hand, projection increases this percentage to $28.3\%$ (using $v_B$) and $29.1\%$ (using $v_{B,\textsf{names}}$).  Even subtraction does nearly as well at between $23-27\%$.  Generally, the subtraction always performs slightly worse than projection.  

For the WEAT test, the original data has a score of $1.623$, and this is decreased the most by all forms of flipping, down to about $1.1$.  HD and projection do about the same with HD obtaining a score of $1.221$ and projection obtaining $1.219$ (with $v_{B,\textsf{names}}$) and $1.234$ (with $v_B$); values closer to 0 are better (See Section \ref{sec: weat} in Chapter \ref{chap: background} ).    

In the bottom of Table \ref{tbl : Test 2 - analogies} we also run these approaches on standard similarity and analogy tests for evaluating the quality of embeddings.  We use cosine similarity~\cite{Lev2} on 
WordSimilarity-353 (WSim, 353 word pairs) \cite{wsim} and 
SimLex-999 (Simlex, 999 word pairs) \cite{simlex}, 
each of which evaluates a Spearman coefficient (larger is better).  We also use   
the Google Analogy Dataset using the function 3COSADD \cite{Lev1} which takes in three words that form a part of the analogy and returns the 4th word which fits the analogy the best. 

%

We observe (as expected) that all that debiasing approaches reduce these scores.  The largest decrease in scores (between $1\%$ and $10\%$) is almost always from HD.  Flipping at $0.5$ rate is comparable to HD.  And simple linear projection decreases the least (usually only about $1\%$, except on analogies where it is $7\%$ (with $v_B$) or $5\%$ (with $v_{B,\textsf{names}}$).

In Table \ref{tbl:damping} we also evaluate the damping mechanisms defined by $f_1$, $f_2$, and $f_3$, using $v_B$.  These are very comparable to simple linear projection (represented by $f$).  The scores for ECT, EQT, and WEAT are all about the same as simple linear projection, usually slightly worse.  

While ECT, EQT and WEAT scores are in a similar range for all of $f$, $f_1$, $f_2$, and $f_3$; the dampened approaches $f_1$, $f_2$, and $f_3$ performs better on the Google Analogy test.  This test set is devoid of bias and is made up of syntactic and semantic analogies. So, a score closer to that of the original, biased embedding, tells us that more structure has been retained by $f_1$, $f_2$, and $f_3$.  Overall, any of these approaches could be used if a user wants to debias while retaining as much structure as possible, but otherwise linear projection (or $f$) is roughly as good as these dampened approaches.  

\section{Detecting Other Biases Using Names}
We saw so far how projection combined with finding the gender direction using names works well and works as well as projection combined with finding the gender direction using word pairs. 

We explore here a way of extending this approach to detect other kinds of bias where we cannot necessarily find good word pairs to indicate a direction, like Table \ref{tbl:word-pairs} for gender, but where names are known to belong to certain protected demographic groups. For example, there is a divide between names that different racial groups tend to use more. Caliskan \etal \cite{Caliskan183} uses a list of names that are more African-American (AA) versus names that are more European-American (EA) for their analysis of bias. There are similar lists of names that are distinctly and commonly used by different ethnic, racial (e.g.,  Asian, African-American) and even religious (for e.g., Islamic) groups.  


We first try this with two common demographic group divides: Hispanic / European-American and African-American / European-American. 

\begin{itemize}

\item \myParagraph{Hispanic and European-American Names} 
Even though we begin with most commonly used Hispanic (H) names (Appendix \ref{app: names race}), this is tricky as not all names occur as much as European American names and are thus not as well embedded. We use the frequencies from the dataset to guide us in selecting commonly used names that are also most frequent in the Wikipedia dataset.  Using the same method as Section \ref{detecting bias with names}, we determine the direction, $v_{B,\textsf{names}}$, which encodes this racial difference and find the words most commonly aligned with it.  Other Hispanic and European-American names are the closest words. But other words like, \word{latino} or \word{hispanic} also appear to be close, which affirms that we are capturing the right subspace.

\item \myParagraph{African-American and European-American Names} 
We see a similar trend when we use African-American names and European-American names (Figure \ref{fig : hispanic}). We use the African-American names used by Caliskan \etal (2017) \cite{Caliskan183}. We determine the bias direction by using the method in Section \ref{detecting bias with names}. 

We plot in Figure \ref{fig : hispanic} a few occupation words along the axes defined by H-EA and AA-EA bias directions, and compare them with those along the male-female axis.  The embedding is different among the groups, and likely still generally more subordinate-biased towards Hispanic and African-American names as it was for female.  Although \word{footballer} is more Hispanic than European-American, while \word{maid} is more neutral in the racial bias setting than the gender setting.  
We see this pattern repeated across embeddings and datasets. 


When we switch the type of bias, we also end up finding different patterns in the embeddings. In the case of both of these racial directions, there is a split in not just occupation words but other words that are detected as highly associated with the bias subspace. It shows up foremost among the closest words of the subspace of the bias. Here, we find words like \word{drugs} and \word{illegal} close to the H-EA direction while, close to the AA-EA direction, we retrieve several slang words used to refer to African-Americans.  
These word associations with each racial group can be detected by the WEAT tests (lower means less bias) using positive and negative words as demonstrated by Caliskan \etal (2017) \cite{Caliskan183}.  
We evaluate using the WEAT test before and after linear projection debiasing in Table \ref{tbl : weat other bias}.  For each of these tests, we use half of the names in each category for finding the bias direction and the other half for WEAT testing. This selection is done arbitrarily and the scores are averaged over 3 such selections.

More qualitatively, as a result of the dampening of bias, we see that biased words like other names belonging to these specific demographic groups, slang words, colloquial terms like \word{latinos} are removed from the closest 10\% words. This is beneficial since the distinguishability of demographic characteristics based on names is what shows up in these different ways like in occupational or financial bias. 


\item \myParagraph{Age Associated Names} 
We observed that names can be masked carriers of age too. Using the database for names through time~\cite{ssn} and extracting the most common names from the early 1900s as compared to the late 1900s and early 2000s, we find a correlation between these names (see Appendix \ref{app: age} for names) and age related words. In Figure \ref{fig : age}, we see
a clear correlation between age and names. Bias in this case does not show up in professions as clearly as in gender but in terms of association with positive and negative words \cite{Caliskan183}. We again evaluate using a WEAT test in Table \ref{tbl : weat other bias}, the bias before and after debiasing the embedding. 
\end{itemize}

\section{Discussion}

Different types of bias exist in textual data. Some are easier to detect and evaluate. Some are harder to find suitable and frequent indicators for and thus, to dampen. Gendered word pairs and gendered names are frequent enough in textual data to allow us to successfully measure it in different ways and project the word embeddings away from the subspace occupied by gender. Other types of bias don't always have a list of word pairs to fall back on to do the same. But using names, as we see here, we can measure and detect the different biases anyway and then project the embedding away from. In this work, we also see how a weighted variant of projection removes bias while retaining best, the inherent structure of the word embedding. 


\newpage

\begin{table}[]
    \caption{Gendered word pairs.}
    \centering
    \begin{tabular}{c}
\hline
    \{\word{man},\word{woman}\}, \{\word{son},\word{daughter}\}, \{\word{he},\word{she}\}, \{\word{his},\word{her}\},\\
    \{\word{male},\word{female}\},
   \{\word{boy},\word{girl}\}, \{\word{himself},\word{herself}\}, \\ \{\word{guy},\word{gal}\},
   \{\word{father},\word{mother}\}, \{\word{john},\word{mary}\} \\ \hline
    \end{tabular}

    \label{tbl:word-pairs}
\end{table}

\begin{figure}
\begin{center}
\vspace{1cm}
\includegraphics[width=8cm]{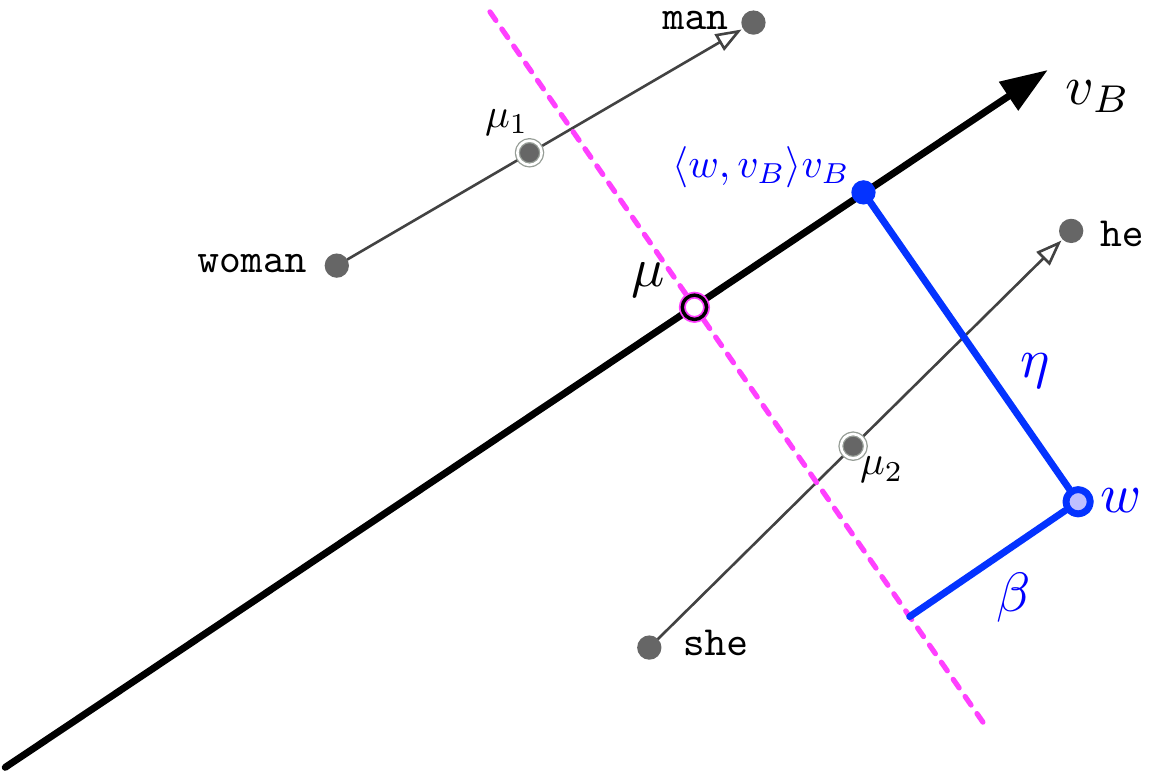}
\end{center}

\vspace{-3mm}
\caption{\label{fig:eta-beta}Illustration of $\eta$ and $\beta$ for word vector $w$.}
\end{figure}

\begin{figure}
\centering
\vspace{1cm}
	\includegraphics[width = 0.55 \textwidth]{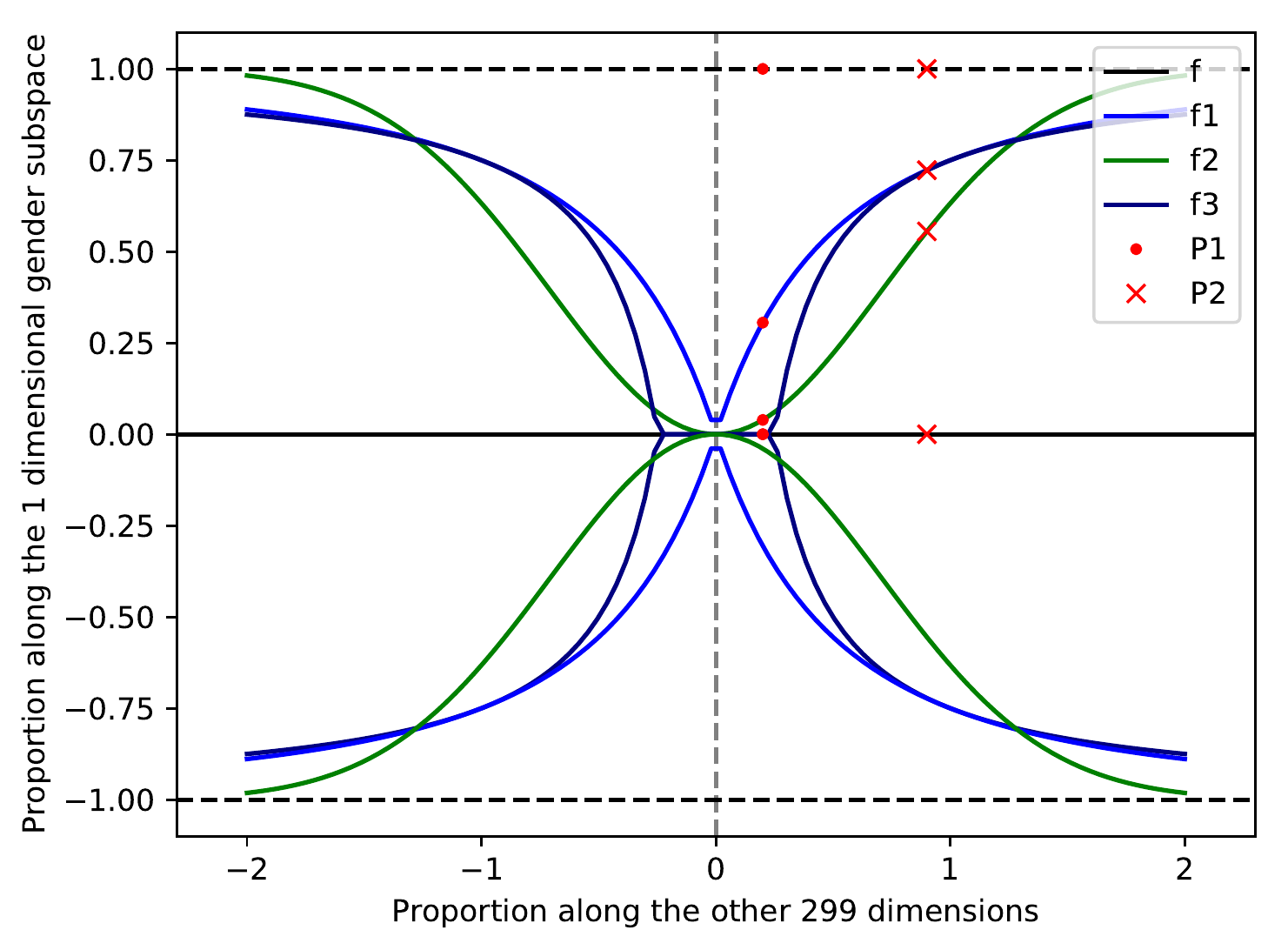}
	\small
	\caption{The gendered region as per the three variations of projection. Both points P1 and P2 have a dot product of 1.0 initially with the gender subspace. But their orthogonal distance to it differs, as expressed by their dot product with the other 299 dimensions. \label{fig : variations}}
	\normalsize
\end{figure}

	

  \begin{figure}
\captionsetup[subfigure]{justification=centering}
    \centering
      \begin{subfigure}{0.24\textwidth}
        \includegraphics[width=\textwidth]{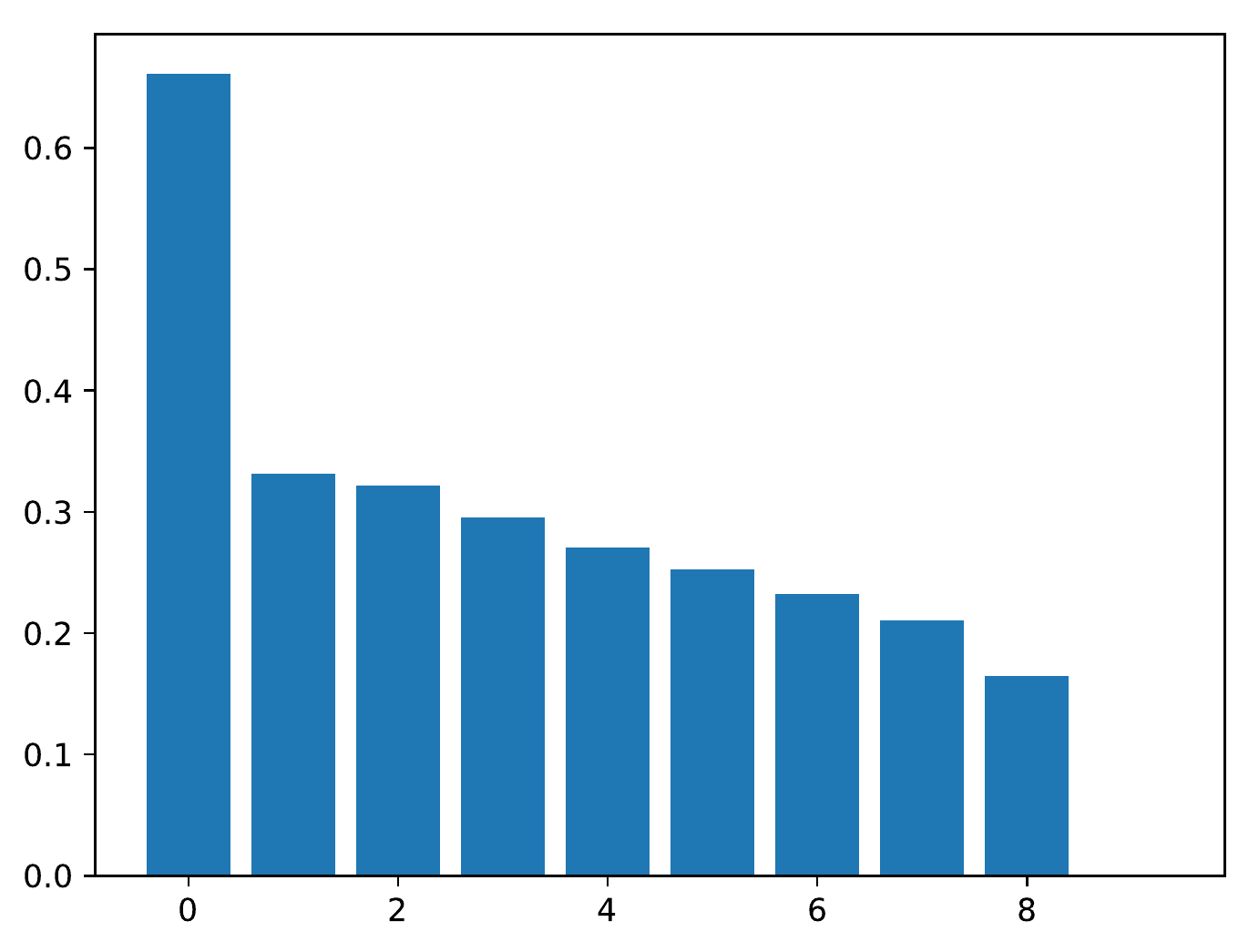}
          \caption{ 0.0}
      \end{subfigure}
      \begin{subfigure}{0.24\textwidth}
        \includegraphics[width=\textwidth]{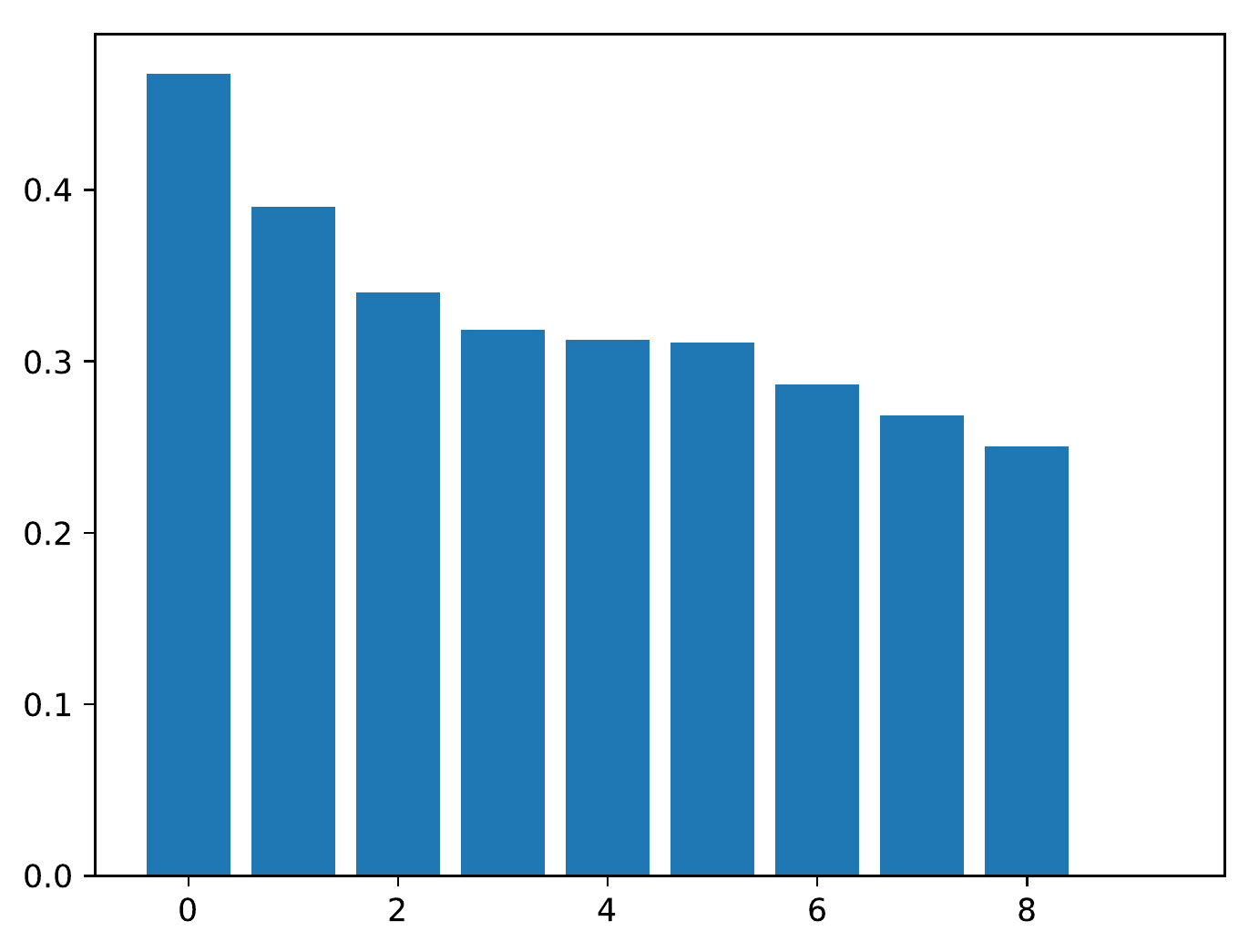}
          \caption{ 0.5}
      \end{subfigure}
      \begin{subfigure}{0.24\textwidth}
        \includegraphics[width=\textwidth]{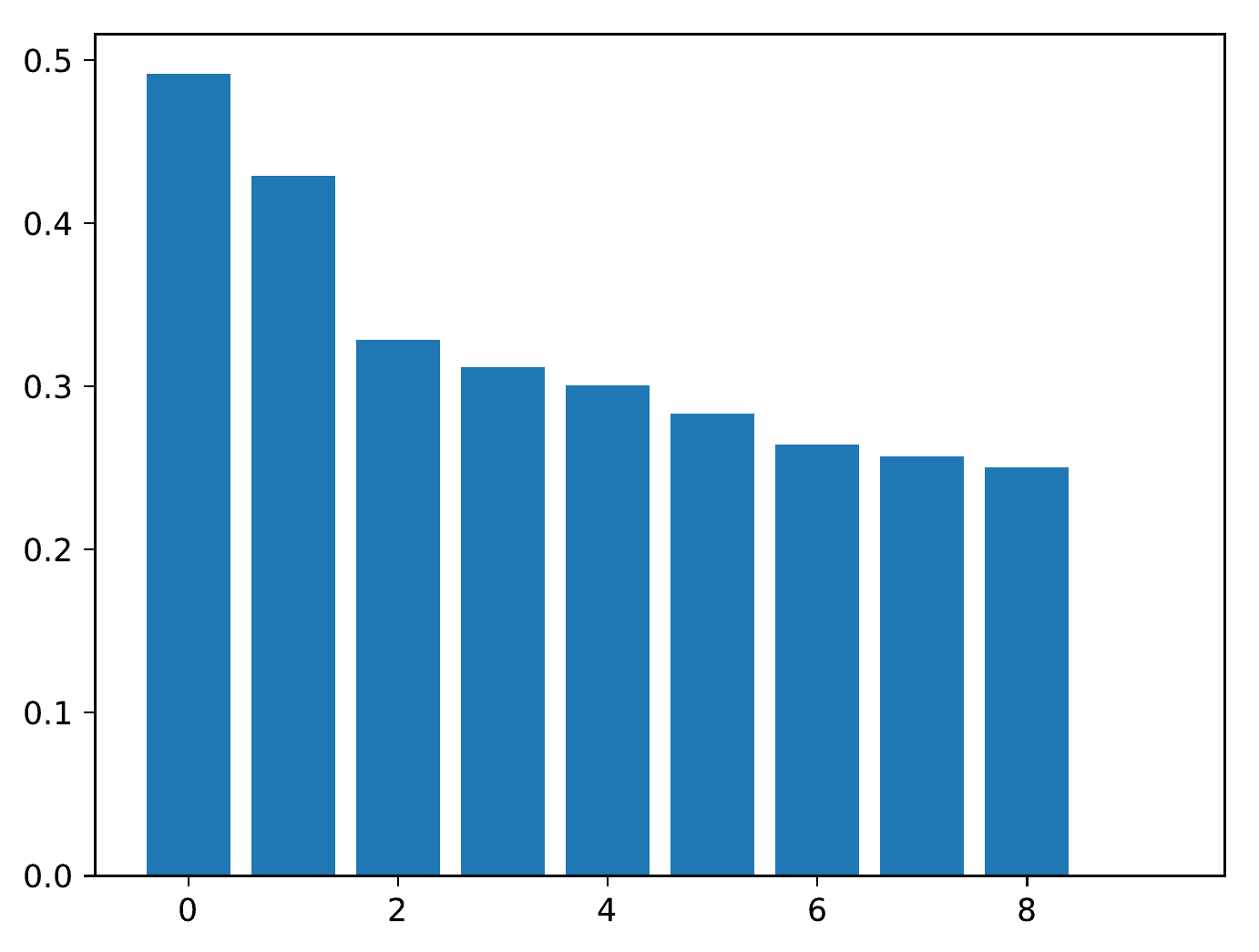}
          \caption{ 0.75}
      \end{subfigure}
      \begin{subfigure}{0.24\textwidth}
        \includegraphics[width=\textwidth]{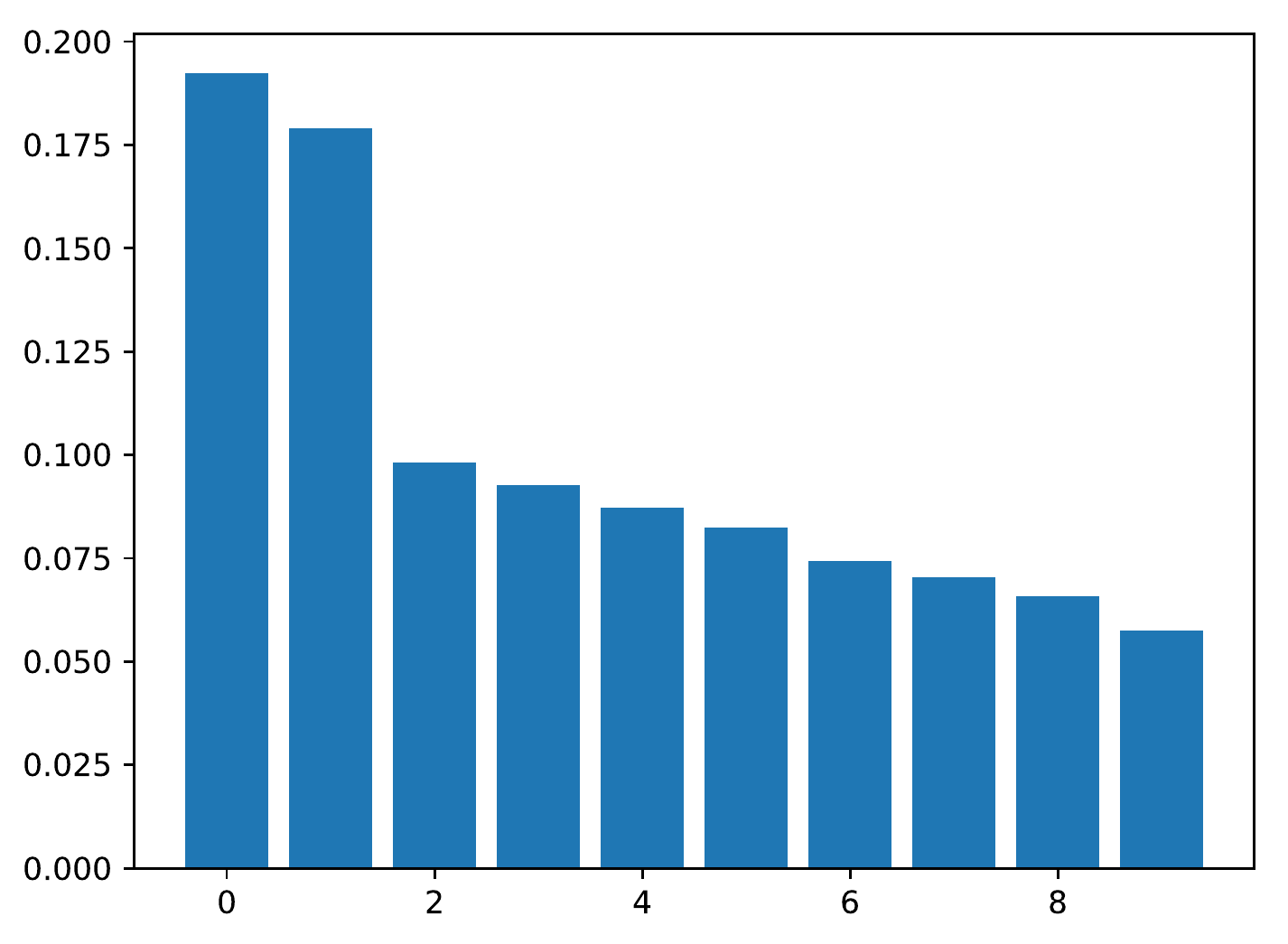}
          \caption{ 1.0}
      \end{subfigure}
   \caption{ Fractional singular values for avg male - female words (as per Table \ref{tbl:word-pairs}) after flipping with probability (a) $0.0$ (the original data set), (b) $0.5$, (c) $0.75$,  and (d) $1.0$.}
   \label{fig:flipping}
\end{figure}


\begin{table}
\vspace{1cm}	
\caption{What analogies look like before and after damping gender by different methods discussed: Hard debiasing, flipping words in text corpus, subtraction and projection.}
	\centering
	\footnotesize

	\label{tbl : analogies flipping}
	
	\resizebox{\columnwidth}{!}{\begin{tabular}{c|c|c|ccc|c|c}
		\hline
		analogy head & original & HD & \multicolumn{3}{c|}{flipping} & subtraction  & projection \\
		\cline{4-6}
		& & & 0.5 & 0.75 & 1.0 & & \\\hline
		\word{man} : \word{woman} :: \word{doctor} : &  \word{nurse} & \word{surgeon}  & \word{dr} & \word{dr} &  \word{medicine} & \word{physician} & \word{physician}\\
		\word{man} : \word{woman} :: \word{footballer} :  & \word{politician} & \word{striker} & \word{midfielder} & \word{goalkeeper} & \word{striker} & \word{politician} & \word{midfielder}\\
		
		\word{he} : \word{she} :: \word{strong} : & \word{weak} & \word{stronger} & \word{weak} & \word{strongly} &  \word{many} & \word{well} & \word{stronger}\\
		\word{he} : \word{she} :: \word{captain} : & \word{mrs} & \word{lieutenant} &  \word{lieutenant} &  \word{colonel} & \word{colonel} & \word{lieutenant} & \word{lieutenant} \\
		\word{john} : \word{mary} :: \word{doctor} : & \word{nurse} & \word{physician} & \word{medicine} & \word{surgeon} & \word{nurse} & \word{father} & \word{physician}\\ \hline
	\end{tabular}}

\end{table}

\begin{table}[h]
\vspace{1cm}
    \caption{Some of the most gendered words in default embedding; and most gendered adjectives and occupation words.} 
	\centering
\label{tbl : Gendered Words}
    \begin{tabular}{c c c c }
         \multicolumn{4}{c}{Gendered Words}\\ \hline 
         \word{miss} & \word{herself} & \word{forefather} & \word{himself}\\
         \word{maid} & \word{heroine} & \word{nephew} & \word{congressman} \\
         \word{motherhood} & \word{jessica} & \word{zahir} & \word{suceeded}\\
         \word{adriana} & \word{seductive} & \word{him} & \word{sir}\\
         
    \end{tabular}

    \begin{tabular}{c c}
	Female Adjectives & Male Adjectives  \\ \hline 
	\word{glamorous} & \word{strong}\\
	\word{diva} & \word{muscular}\\
	\word{shimmery} & \word{powerful} \\
	\word{beautiful} & \word{fast}\\
\end{tabular}

\begin{tabular}{c c}
	Female Occupations & Male Occupations  \\ \hline 
	\word{nurse} & \word{soldier}\\
	\word{maid} & \word{captain}\\
	\word{housewife} & \word{officer}\\
	\word{prostitute} & \word{footballer}\\
\end{tabular}

\end{table}

  \begin{figure}
\captionsetup[subfigure]{justification=centering}
    \centering
      \begin{subfigure}{0.24\textwidth}
        \includegraphics[width=\textwidth]{gender1.pdf}
          \caption{}
      \end{subfigure}
      \begin{subfigure}{0.24\textwidth}
        \includegraphics[width=\textwidth]{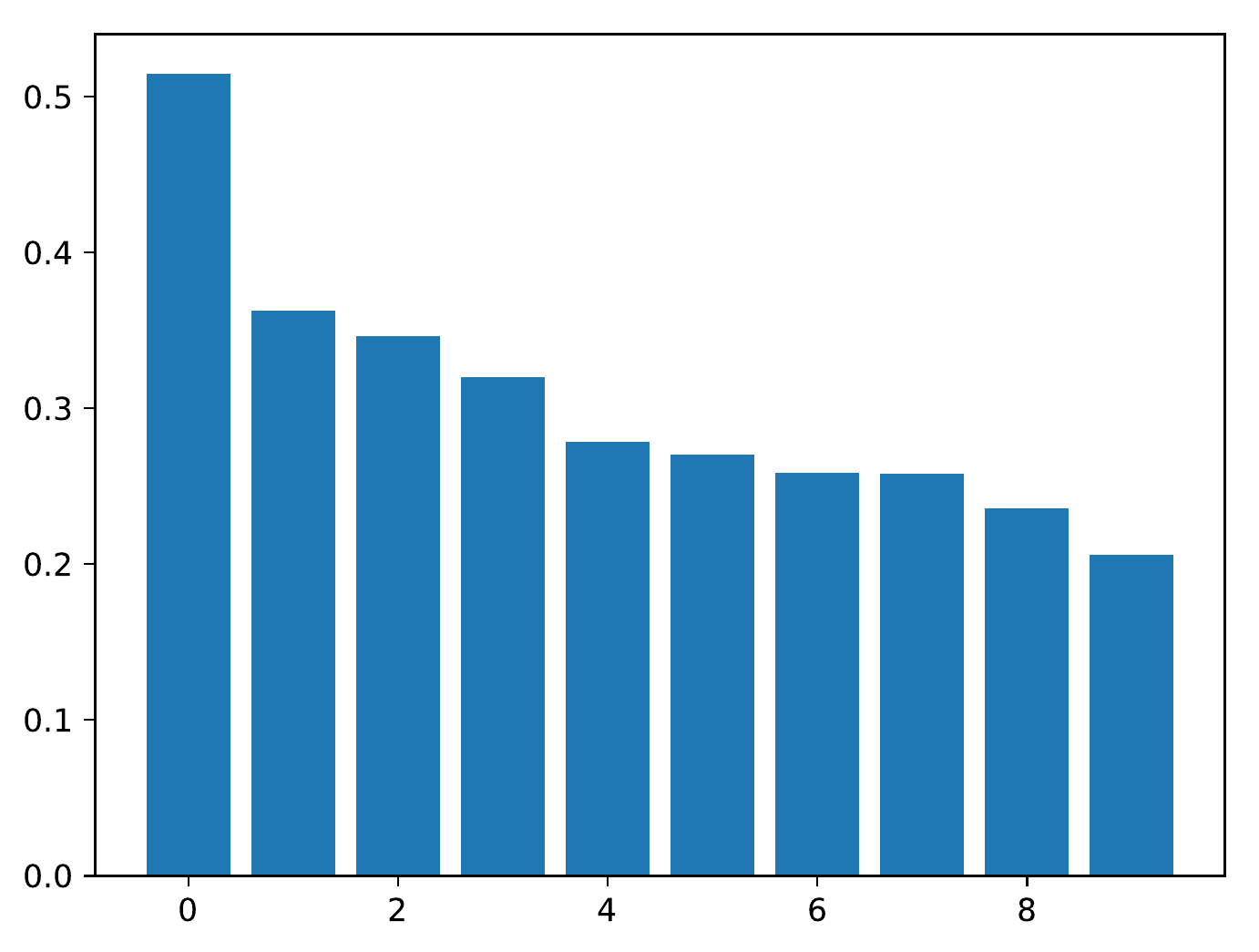}
          \caption{}
      \end{subfigure}
      \begin{subfigure}{0.24\textwidth}
        \includegraphics[width=\textwidth]{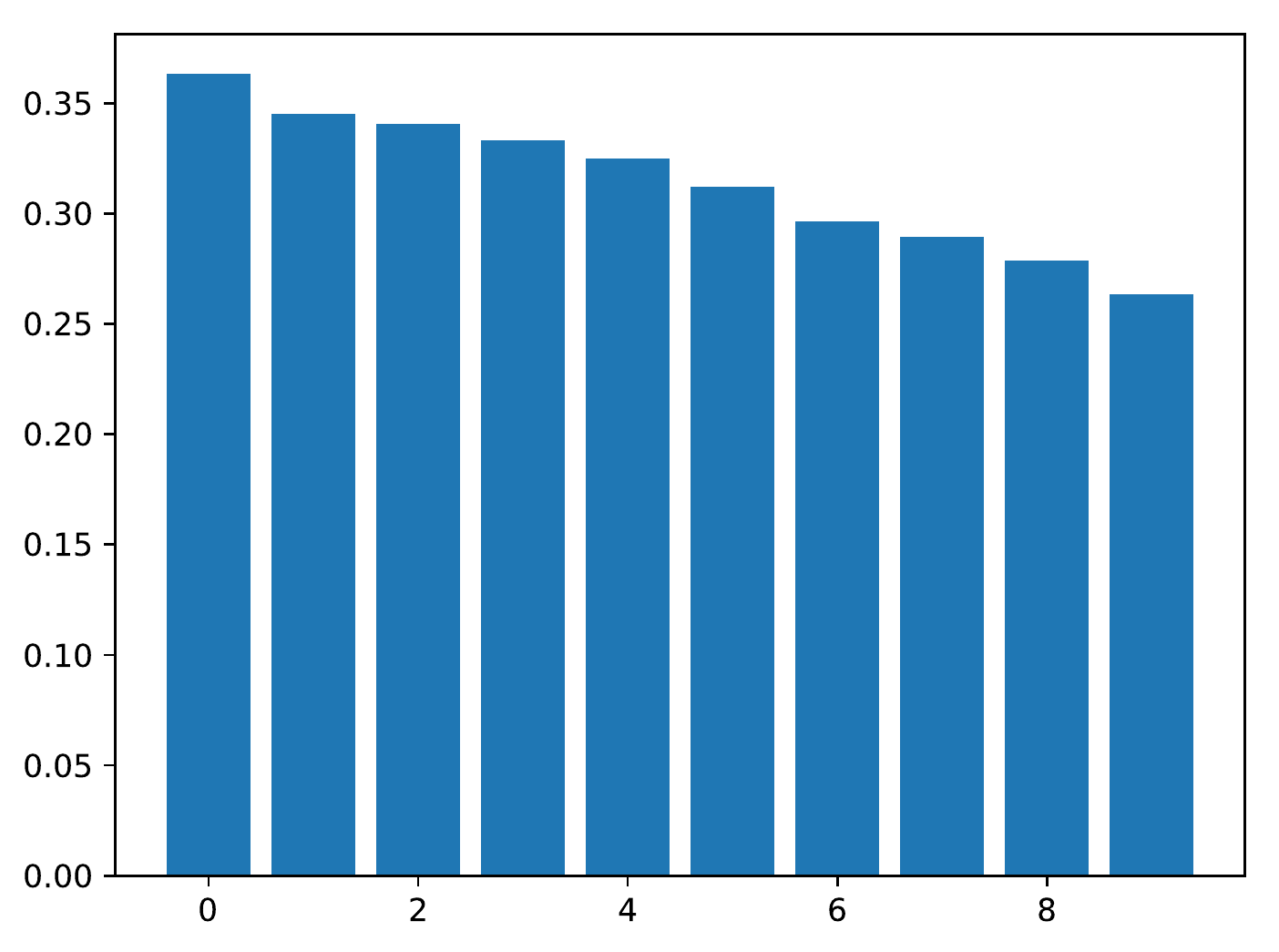}
          \caption{}
      \end{subfigure}
      \begin{subfigure}{0.24\textwidth}
        \includegraphics[width=\textwidth]{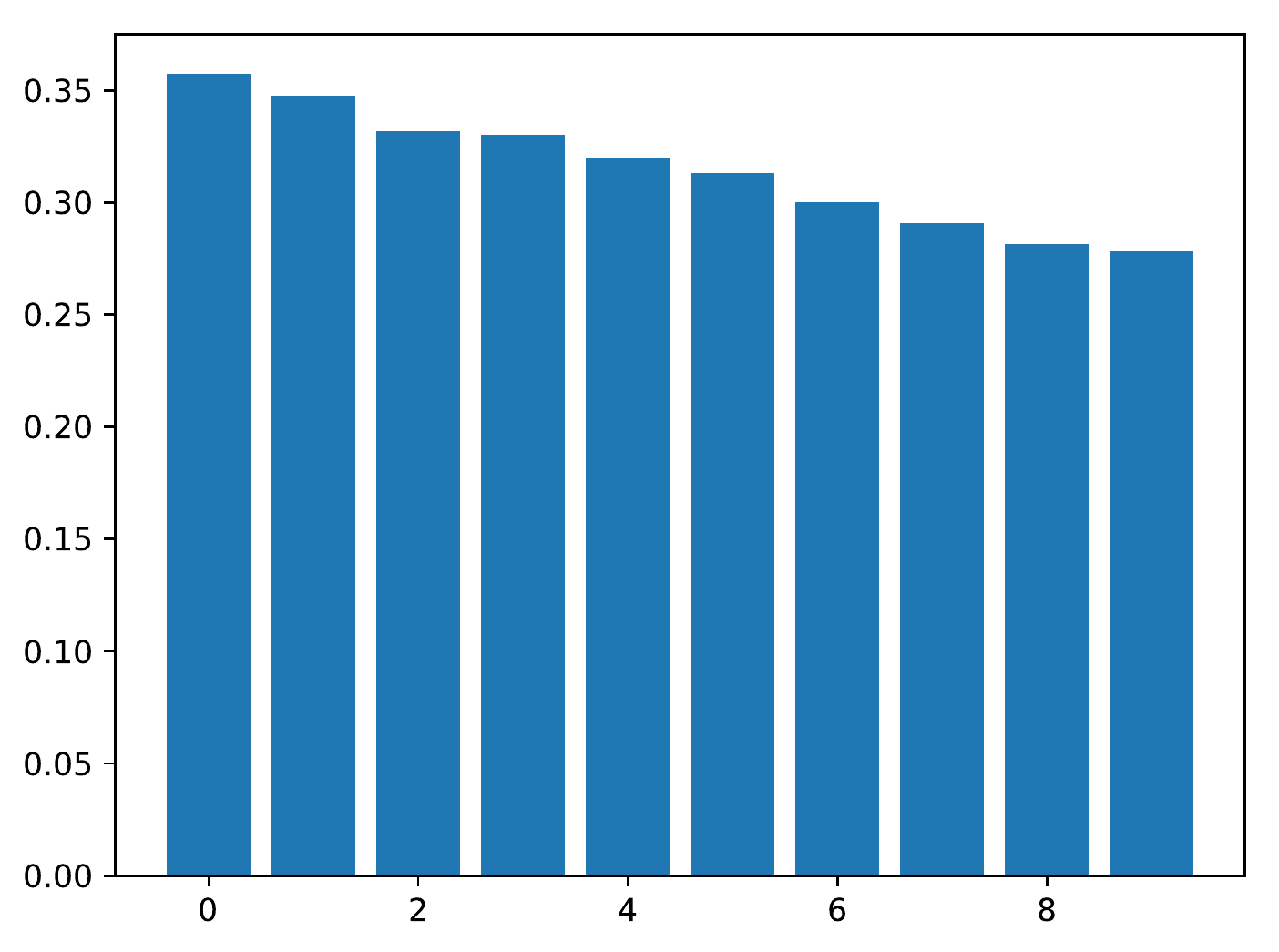}
          \caption{}
      \end{subfigure}
   \caption{Fractional singular values for (a) male-female word pairs, (b) one gendered word - one random word, (c) random word pair, and (d) random unit vectors.}
   \label{fig:SVDcharts}
\end{figure}

	


\begin{figure}
\vspace{1cm}
\captionsetup[subfigure]{justification=centering}
    \centering
      \begin{subfigure}{0.37\textwidth}
        \includegraphics[width=\textwidth]{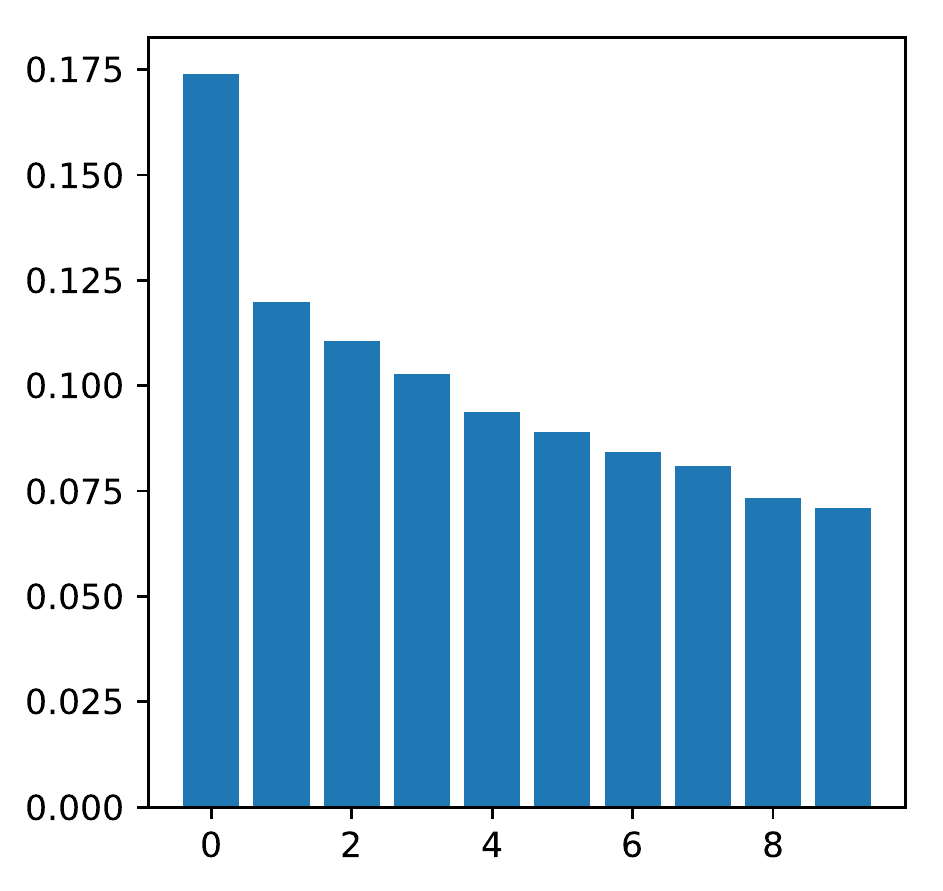}
          \caption{}
      \end{subfigure}
      \begin{subfigure}{0.45\textwidth}
        \includegraphics[width=\textwidth]{gender1.pdf}
          \caption{}
      \end{subfigure}
   \caption{Proportion of singular values along principal directions (a) using names as indicators and (b) using word pairs from Table \ref{tbl:word-pairs} as indicators.}
   \label{fig : names}
\end{figure}

\begin{table}[]
\vspace{1cm}
		\caption{Gendered occupations as observed in word embeddings using names as the gender direction indicator.}
\centering
	\label{tbl : occ names list}
	\begin{tabular}{c c | c c}
		Female Occ & Male Occ & Female* Occ & Male* Occ \\ \hline 
		\word{nurse} & \word{captain} & \word{policeman} & \word{policeman}\\
		\word{maid} & \word{cop} & \word{detective} & \word{cop}\\
		\word{actress} & \word{boss} & \word{character} & \word{character}\\
		\word{housewife} & \word{officer} & \word{cop} & \word{assassin}\\
		\word{dancer} & \word{actor} & \word{assassin} & \word{bodyguard}\\
		\word{nun} & \word{scientist} & \word{actor} & \word{waiter}\\
		\word{waitress} & \word{gangster} & \word{waiter} & \word{actor}\\
		\word{scientist} & \word{trucker} & \word{butler} & \word{detective}\\
	\end{tabular}

\end{table}

\begin{table}[]
\vspace{1cm}
		\caption{What analogies look like before and after removing the gender direction using names.}
	\centering
	
	\begin{tabular}{c|ccc}
	\hline
		analogy head & original & subtraction & projection \\ \hline
		\word{man} : \word{woman} :: \word{doctor} :  &  \word{nurse} & \word{physician} & \word{physician} \\
		\word{man} : \word{woman} :: \word{footballer} : & \word{politician} & \word{politician}  & \word{midfielder}\\
		
		\word{he} : \word{she} :: \word{strong} : &  \word{weak} & \word{very} & \word{stronger} \\
		\word{he} : \word{she} :: \word{captain} : &  \word{mrs} & \word{lieutenant} & \word{lieutenant}\\
		\word{john} : \word{mary} :: \word{doctor} : &  \word{nurse} & \word{dr} &  \word{dr} \\ \hline
	\end{tabular}
	\label{tbl : analogies}

\end{table}

 \begin{table}[]
 \vspace{1cm}
    \caption{Performance on ECT, EQT, and WEAT by the different debiasing methods; and performance on standard similarity and analogy tests.}

    \centering

    \resizebox{\columnwidth}{!}{\begin{tabular}{c|c|c|ccc|cc|cc}
    	\hline
        analogy head & original & HD & \multicolumn{3}{c|}{flipping} & \multicolumn{2}{c|}{subtraction}  & \multicolumn{2}{c}{projection} \\
       \cline{4-10}
         & & & 0.5 & 0.75 & 1.0 & word pairs & names & word pairs & names\\
           \hline 
           ECT (word pairs) & 0.798 & 0.917  & 0.983  & 0.984 & 0.683   & 0.963  & 0.936  & 0.996 & 0.943 \\
           ECT (names) & 0.832 & 0.968  & 0.714  & 0.662 & 0.587  & 0.923  & 0.966  & 0.935 & 0.999 \\
           EQT  & 0.128 & 0.145  & 0.131  &0.098 & 0.085  & 0.268 & 0.236  & 0.283 & 0.291\\
           WEAT & 1.623 & 1.221 & 1.164 & 1.09 & 1.03  & 1.427  & 1.440 & 1.233 & 1.219\\
         \hline
     		WSim & 0.637 & 0.537 & 0.567 & 0.537 & 0.536 & 0.627 & 0.636 & 0.627 & 0.629\\
		Simlex & 0.324 & 0.314 & 0.317 & 0.314 & 0.264 & 0.302 & 0.312 & 0.321 & 0.321\\
		Google Analogy & 0.623 & 0.561 & 0.565 & 0.561 & 0.321 & 0.538 & 0.565 & 0.565 & 0.584 \\ \hline 
    \end{tabular}}

    \label{tbl : Test 2 - analogies}

\end{table}

\begin{table}[]
\vspace{1cm}
		\caption{\label{tbl:damping} Performance of damped linear projection using word pairs.
}
	\centering

	\begin{tabular}{c|ccccc}
		\hline

		Tests & f & $f_1$ & $f_2$ & $f_3$\\ \hline
		ECT & 0.996 &0.994 & 0.995 & 0.997\\
		EQT & 0.283& 0.280& 0.292 & 0.287\\
		WEAT & 1.233 & 1.253 & 1.245 & 1.241\\ \hline
		WSim & 0.627 & 0.628 & 0.627 & 0.627\\
		Simlex & 0.321 & 0.324 & 0.324 & 0.324\\
		Google Analogy & 0.565 & 0.571 & 0.569 & 0.569\\				
		\hline
	\end{tabular}

\end{table}

\begin{figure}[h]
\includegraphics[width=.98\linewidth]{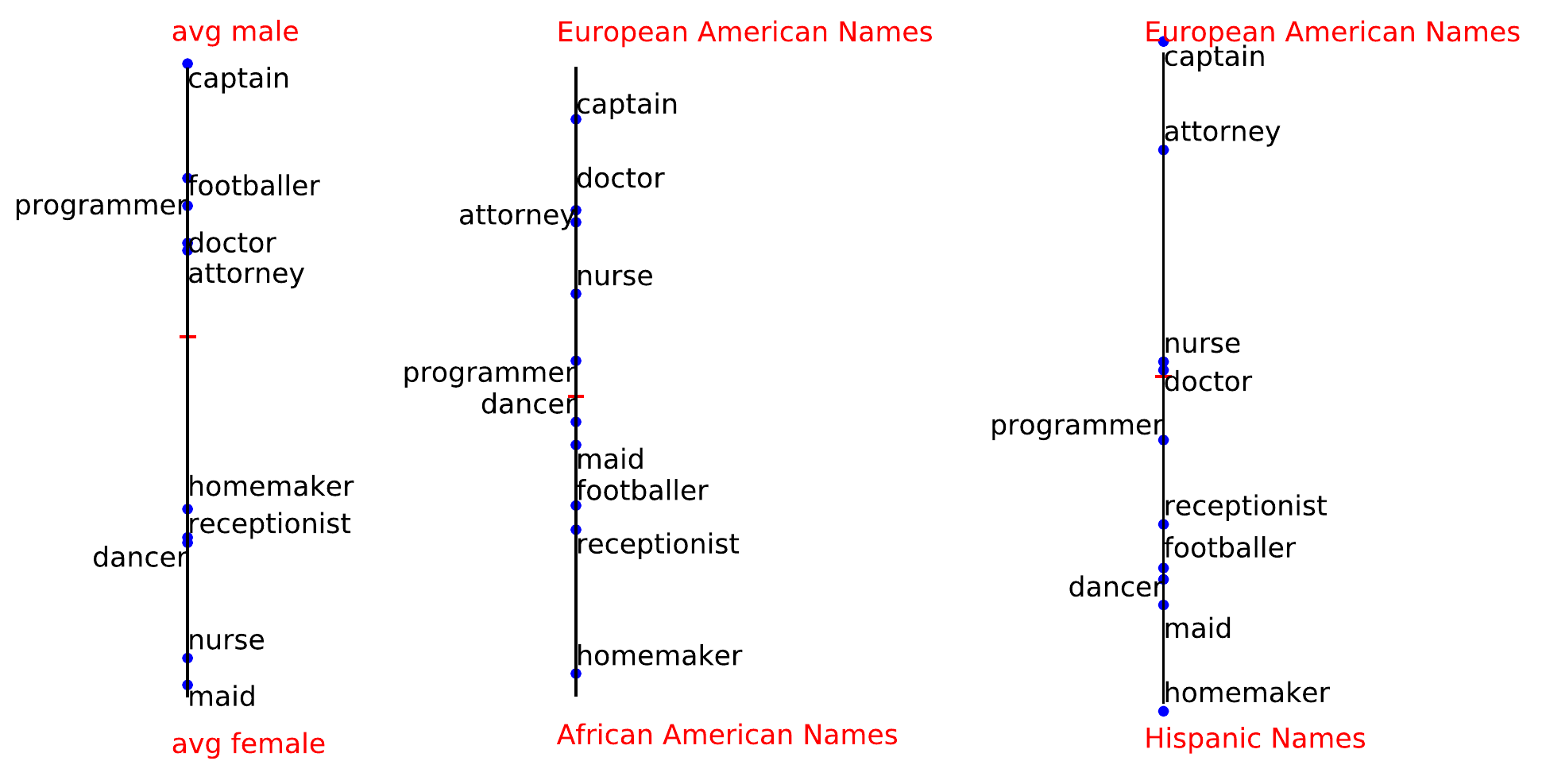}

\caption{\label{fig : hispanic} Gender and racial bias in the embedding.}
\end{figure}

\begin{table}[h]
\vspace{2.5cm}
     \caption{ WEAT positive-negative test scores before and after debiasing.}
    \centering

    \label{tbl : weat other bias}
    \begin{tabular}{c|cc}
    	\hline
        & Before Debiasing & After Debiasing  \\
        \hline
        EA-AA & 1.803 & 0.425 \\
        EA-H & 1.461& 0.480\\ 
		Youth-Aged & 0.915 & 0.704\\ \hline
    \end{tabular}

\end{table}

\begin{figure}[]
	\centering
	
	\includegraphics[width=  \textwidth]{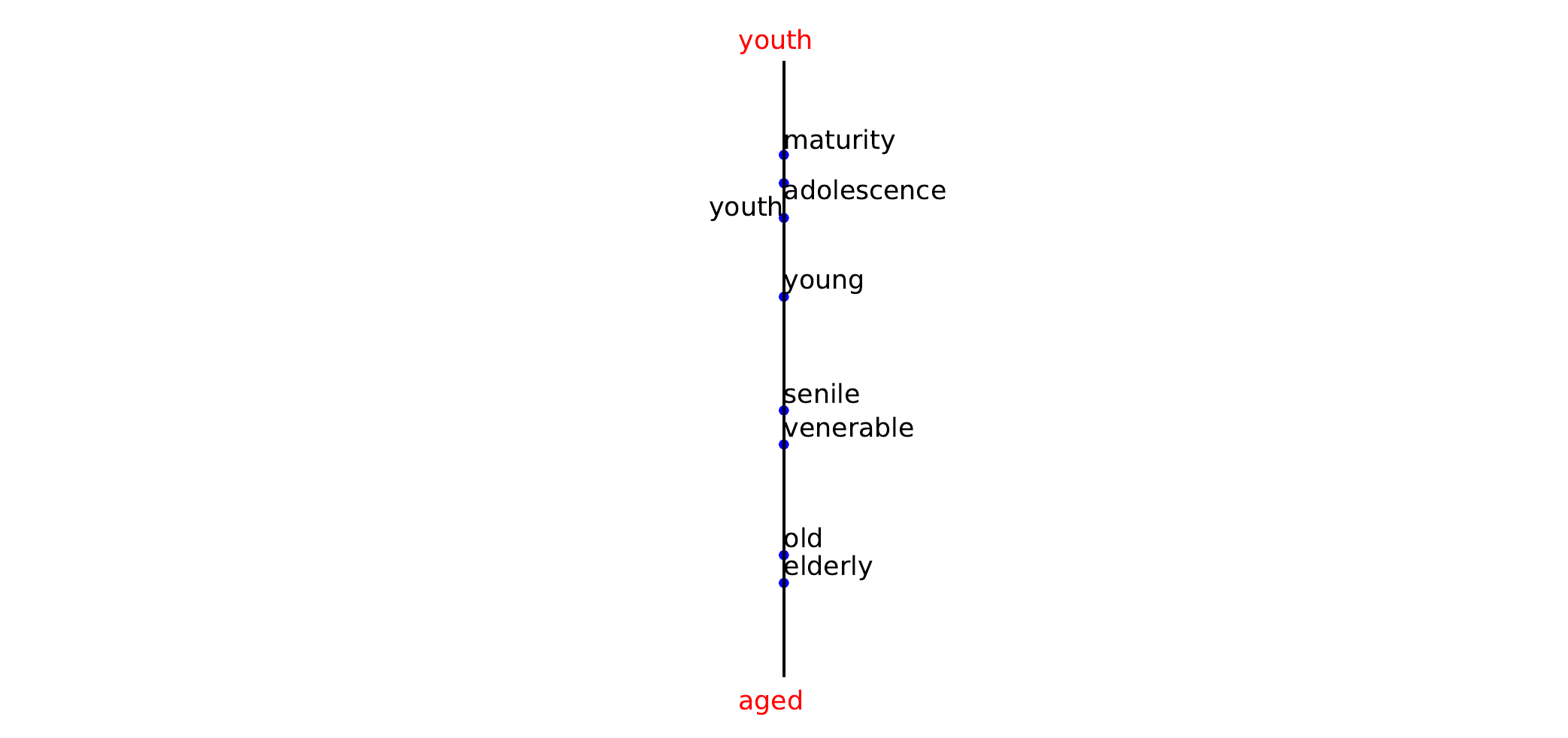}

	\caption{Detecting age with names : a plot of age related terms along names from different centuries.}
	\label{fig : age}
	
\end{figure}
    
\chapter{Subspace Isolation in Noisy Embeddings}
\label{chap: visa}

Raw data in many real world settings is extremely noisy and has few markers and identifiers. For instance, if we look at web interaction based user representations, user identifiers don't hold as much significance as word meanings do in the case of text representations. The categorization of such users is thus unsupervised. In such settings, isolation of subspaces poses several challenges specific to the domain the data is from.

In this chapter\footnote{\normalsize{work done as part of summer internship at Visa Research}}, we explore one such setting where, in a representation based on noisy data, we identify and locate concept subspaces.

\section{Transaction-Based User Representations}
Extensive amounts of user and merchant data and transaction records are available to financial organizations. To utilize them for building applications such as recommendation tools for users or the detection of fraudulent activity, it is important for them to understand user behavior and preferences.

This data collected is both enormous and rich in features such as the place of purchase, the amount spent in each transaction, the average amount spent at a given merchant, merchant type (gas station, restaurant, etc.) merchant name, etc. Several hundreds of thousands of such transactions are recorded per day. Apart from just records of the transaction, this data is rich with implicit features like user preferences or merchant clientele categories. That is because patterns about merchants and users and their behavior can be observed in this rich dataset. For instance, for some merchant M1, say a grocery store, the people visiting it would most likely belong to locations within 5-10 miles of the merchant itself. For a different merchant type M2, a restaurant, there would be specific demographics of people conducting transactions at it based on how expensive the restaurant is and the cuisine it serves. Commutatively, for users such patterns can be seen based on the merchants they visit. Users tend to visit grocery stores close to their place of residence and visit coffee shops and restaurants close to their office. Users also have preferences with respect to store or cuisine type and price range. 

Understanding these patterns of preferences can be useful for organizations with access to this data. For one, user behavior understanding helps avoid fraudulent transactions being processed on credit or debit cards. Unusual behavior in terms of excess spending or expenses at a random location can be flagged easily for monitoring. Further, on the determination of types of clientele a certain merchant primarily caters to, or conversely, if we understand the merchant preferences of a user, we can build personalized recommendations for the user. These recommendations will not be generic in terms of popularity amongst users in a given area, but in addition to it, they will be able to predict if the merchant attributes (distance to the user, price, merchant type, etc.) fit the user's preferences. A recommendation system with these features will be better suited to the specific user and will be more powerful.

In this chapter, we focus on the task of personalization of the recommendation system for a user. Since the transaction records are rich with a lot of different merchant types and also have a lot of noise and missing or mixed attributes, building a generic recommendation system over all merchant types is an extremely challenging task. We thus restrict ourselves over the set of merchants that are restaurants. Restaurant recommendation is also a highly used tool, so there is significant demand and motivation for building a better suited and personalized restaurant recommendation tool. Most restaurant recommendation tools such as Yelp, evaluate preferences based mostly on popularity or distance to the person. While there are filters available, there is no way to personalize further. The benefit of having transaction data for personalization is two-fold. Firstly, when building the recommender system, if the exact set of merchants where purchases are made by a user is known, then understanding the implicit features of the merchant set helps predict better.  Also, there can be reinforcement after deployment of this system, where a purchase at the recommended merchants following the recommendation is positive feedback. This feedback can be used as a label for future recommendations.

The first challenge in building this system is to represent this data in a coherent manner such that it captures all its entities - the merchants and users - as well as the implicit features as discussed above. One scalable, efficient method of doing so is to create distributed embeddings. To do so, we use a standard word embedding technique, which we discuss in Section \ref{sec: making embeddings}. The resultant embeddings however are significantly influenced by specific dominant meta-information associated with them like their location or their popularity (expressed as their frequency in the transaction data) or the average amount spent at them each time. The factors of frequency and location in fact dominate over other features such as cuisine type. So, the similarity between any two restaurants is dominated by location and frequency more than the cuisine. This makes building tasks such as recommendation systems for restaurants for users hard and inaccurate, with the system suggesting restaurants with similar location and popularity rather than the cuisine preference of the user. This makes it essential to tease out the different subspaces within the embedding space that represent different features such as location or volume or price. In this project, we focus on just removing the location information from the embedding space, thus merging different restaurants from different locations based on other features like volume or cuisine.
We do this by using post-processing techniques as our merchant embeddings are built on extremely large data and retraining the embeddings is an arduous and time consuming task. Further, the use of linear methods ensures higher interpretability in terms of what is changed in the embedding to achieve the removal of location from any vector in the embedding. As only linear transformations are applied, the other relationships between any two points in the embedding are also retained. While the method used to achieve this is an extension from Chapter \ref{chap: bias paper 1}, the main takeaway in this chapter is how do we interpret the embeddings which do not have coherent `word meanings' associated with them. 

\section{Understanding and Representing the Transaction Data}
About 6 months of transaction data from a large financial organization is curated for this task. The users are anonymized and unidentifiable from the data but merchant ids can be traced to merchant names and locations. Each merchant data point has the following attributes:

\noindent \{\textit{Merchant ID, Merchant Name, City, State, Country, Zipcode, Merchant Category, Latitude, Longitude, Number/Frequency of Transactions, Average Transaction Amount, Average Tip Amount, Amount Variance}\}

Merchant ID is a unique identifier for each merchant. So, two different stores of the same merchant chain would have distinct IDs.

Alongside, we also have text files wherein we have a sentence denoted by a single user's transaction sequence. Each sentence here would be a sequence of merchant ids such as ``$M_1M_7M_9M_1...$". This is described in detail in Section \ref{sec: making embeddings}

Similarly, there is data for users where each user has tags and features attached to them. We restrict ourselves to the use of only merchant information as it has less sensitive and thus more public information. 

\subsection{User and Merchant Embeddings}
\label{sec: making embeddings}
Being able to understand patterns requires representing these merchants and users in a meaningful and scalable manner.

Billions of transaction records are collected every day. On filtering and restructuring the data, we can have for each merchant, a set of all users that visited it and for each user, a set of all merchants visited by it. Further, we can structure in a manner that we preserve the order in which users visited a given merchant, or for each user, the order in which he visited merchants. This gives us meaningful sequences of merchants and users. 

To get a bigger picture however, we want to embed each of these merchants as a point in some high-dimensional space. This is because, to recommend merchants, it is important that we understand what merchants are considered similar as per user behavior. So, from here on we will consider only those sequences created per user. Each sequence looks like a string of merchant ids, each representing some distinct merchant. For instance, a sequence for a user could like ``$M_1M_{10}M_{55}M_1..."$. Thus, for each user, we have an ordered sequence of merchants visited. A coffee shop purchase of a certain amount at regular intervals, on a daily basis, can be seen for users based on their sequences. Or, the repetition of merchants of a specific cuisine at a higher frequency than others can also be observed in these sequences. Lunch followed by dessert in the form of two purchases of significantly different amounts, consecutively can also be seen in these sequences. Thus, patterns for users can be observed by looking at these merchants.

Now, if we consider for each user, the sequence of merchants visited to be a ``sentence", then the list of such user ``sentences" gives us a large text file. Each word here is a merchant id of the form ``$M_i$" and just as in a regular text file, tokens in the sequence don't need to be unique. On this text file of transaction sequences, we can run any word embedding algorithm like word2vec~\cite{wordtovec} or GloVe~\cite{glove}. This allows us to can extract embeddings for each merchant id in a way similar to word vectors. Each point in the embedding space now corresponds to a unique merchant id and for the set of merchant ids, we now have a fixed lookup table.

User embeddings can be created in a similar manner when we consider each merchant to be a sentence consisting of the ordered sequence of users that visited it, for e.g., ``$U_1U_5U_{53}...$". This gives us a vector space where the points denote a distinct user rather than a merchant. Since we characterize and recommend merchants here, we will restrict ourselves to the case of studying merchant embeddings.

\subsection{Embedding Attributes}
Using the trick of treating consecutive merchants visited as the context for a user, we embed merchants like words of a language. This allows us to represent these merchants in some ~$100$ dimensional space which captures a lot of implicit attributes. 

For any user, the merchants visited are influenced by several factors. Firstly, the physical location of the merchant has to be close to that of the user. They need to be in the same city as the user for the user to frequent it. On a finer scale, distance to different merchants also impacts the preference of users. Next, the price or average amount of money spent at a merchant is factored into the user’s choices of places to visit. The average popularity of a place is also reflected in the dataset as a whole by the frequency of a merchant id in the cumulative text file. This is captured and expressed by the merchant embeddings.

The extent to which each of these different attributes is expressed and affect the vector's position in the space varies. The location of the merchant has been observed to be a major influencing factor and dominates the vector values. In Figure \ref{fig : sf-man}, we see how merchants in San Francisco (in blue) are distinctly located in a different cluster from merchants in Manhattan (in red). 

Other factors such as price or cuisine type are expressed by these merchant vectors but are overpowered by the more dominant factors. For instance, cuisine can be seen to be similar when examining the nearest neighbors of a chosen merchant vector but only within a location and volume cluster.

When recommending merchants in a personalized manner to users, some features may be more important than others. For example, say we want to be able to recommend based primarily on cuisine type, followed by other factors such as price and popularity. The sorting then can be done by distance to the user. Additionally, in most cases, a user requires recommendations when traveling, so the location information has changed. But in our embeddings, location overpowers all other attributes and thus, it is not possible to find nearest neighbors based on cuisine type across the location of origin and the new location. In this case, we want the location information contained by the representations to be reduced. This in principle is a similar problem to what we described with respect to text representations in Chapter \ref{chap: bias paper 1}, where a specific concept subspace needs to be determined and isolated. In the next section, we try to extend our methods to identify the location difference subspace and reduce its influence on the user embedding space and help identify the recessive factor of the cuisine type better.

\section{Finding the Location Subspace}
We extend our method of locating a subspace as defined in Chapter \ref{chap: bias paper 1} here. The aim is to be able to determine the subspace that captures the difference in location between two groups of merchants.
\subsection{The Location Bias Vector}
For a vector space V, we can find the direction of ``bias"
or the direction of difference between two subsets $V_1$ and $V_2$ by the 2means method described in Chapter \ref{chap: bias paper 1}.

There are a few different methods for finding this direction, one of the more common ones being calculating the PCA of the stacked difference vectors \cite{debias} between corresponding vectors between two groups (like man - woman, he - she, in the case of gender). With merchants in two cities however, there are no corresponding merchants. What this means is there is no merchant in city A that is the exact same as some other merchant in city B. Hence, we used the 2means method \cite{Bias1} which does not need corresponding points and aggregates over groups of points to find the target difference vector.

So, for two cities, say Manhattan and San Francisco, we create two can subsets $V_1$ and $V_2$ of merchants belonging to Manhattan and San Francisco respectively. Applying 2means between these two groups should give us the direction between the two cities in the merchant embedding.

In Bolukbasi’s  PCA based method, we would need to say a merchant in San Francisco which was the exact counterpart for some other merchant in Manhattan, which is rare if at all. Even if we consider chain restaurants like say, Starbucks, there is no one particular Starbucks in San Francisco which is the exact counterpart of some Starbucks in Manhattan.

It is pivotal to note the importance of the merchant sets $V_1$ and $V_2$ from Manhattan and San Francisco respectively. They serve as anchor points for finding the bias vector. The question then is how do we select the appropriate set such that we are able to accurately isolate the location vector as the difference vector $v'$?

The first approach to answer this is to select random points from each city. The hope is that this unbiased selection will represent each city fairly well and will thus give us the location difference in the vector $v'$. However, as explained in Section \ref{sec: eval}, we experimentally see that this does not give us the accurate location vector. 

Why does this act like a random vector? Let's again go back to the method of random selection on merchants from a city. Each merchant from the two groups $V_1$ and $V_2$ do differ in location. But additionally, each merchant within $V_1$ and $V_2$ and across $V_1$ and $V_2$ also differ in other parameters such as cuisine type, price range and volume. All of these also influence the vector representations of the merchants in $V_1$ and $V_2$ and act as noise in the difference vector $v'$ which should have been just the difference in location but now contains this difference in other parameters too. Since we randomly selected these merchants, the parameters they have are also random for each location. The aggregation thus is unstable, depends heavily on the random set selected and is close to that of a random vector. 

Thus, there is a need to actually refine this 2means method to appropriately find the location difference vector. In language, when trying to find gender difference direction, we have gendered words to guide us and act as anchor points ($V_1$ and $V_2$) but that is not the case in other non language embeddings such as merchant embeddings. The existence of word meanings in a language is a big advantage that is lacking in other distributed embeddings such as these merchant embeddings. In the following section, we try to find a more robust way of finding this bias vector in such noisy and unguided settings. 


\subsection{Refining the Bias Vector}
In the random selection of subsets $V_1$ and $V_2$, there were too many variable parameters that crept into the $v'$ vector calculated. So, we try different ways to fix these parameters across both groups $V_1$ and $V_2$. The reason is that once the only parameter varying amongst all merchants in $V_1$ $\cup$ $V_2$ is location, the difference in aggregates over $V_1$ and $V_2$ should capture the location difference. 



\begin{itemize}

  \item Quad Trees for Price and Volume Refinement: The volume or frequency variable of a merchant denotes the number of times it has been visited by users. This increases its frequency of occurrence in the text file used by an algorithm like word2vec or GloVe to generate the merchant embeddings. Since the algorithm sees more instances of some merchants, it also updates it more. This in turn makes the embedding of these merchants more resilient and stable. Along with location, this parameter largely dominates a merchant's vector embedding.

The price is another factor with significant influence. It also varies across different merchants but it is more of a categorical change in that, there are different blocks of values or price ranges that different merchants fall into. 

Since there is no direct correlation between the price and volume of different merchants, we require to fix both of these across the merchants in our groups $V_1$ and $V_2$. We find the intersection of restaurants which are visited with similar frequencies and are of a similar price range, we build quad-trees on the volume - price range. We ideally would want the frequency in both groups selected to be moderately high. This ensures that the merchants thus selected have stable embeddings. We don't select very high frequency as that gives us only a very small number of merchants to work with. Similarly, we would want to find an average price range that would have a large number of merchants. 
But we want this selected range of volume and price to be the same in both the cities. For that, we need to understand the spending patterns in the two cities and know which ranges have similar high merchants.

We visualize this using the k-d trees based quad-tree approach~\cite{quad-tree}. 
This approach divides the two-dimensional space of volume and price into smaller and smaller blocks iteratively such that each 2-D block roughly contains the same number of merchants in them. This helps ensure that the range picked eventually does not have too many or too few merchants. This partitioning trick enables us to pick corresponding price and volume ranges from the two cities/locations in question from which we get the merchants for subspace determination.

Figure \ref{fig:quad-tree} shows the spending patterns of Manhattan and San Francisco using this method. Some important things to note here are :
(i) Both the cities have very similar divisions in the volume-price space. The blocks in both look very similar. (ii) Blocks with extremely high volume and/or price are relatively very low populated. (iii) 0-10000 volume and 10-20 USD price values are where most merchants are.
This aligns well with what we expected. Using boxes that look most similar and have high frequency, we can now choose the same block across both cities and thus select the merchants within that block. This effectively fixes the merchant volume and price values across sets $V_1$ and $V_2$.

\item Cuisine Space:
Since we want the cuisine of a restaurant to be expressed well by the embedding, it is vital that we not lose any information about it contained in the embeddings. When we randomly select merchants for sets $V_1$ and $V_2$, merchants with differing cuisines can get selected. This then includes components of cuisine in the difference vector $v'$. We can ensure this does not happen by fixing the cuisine type across $V_1$ and $V_2$. Now, the cuisine label of a merchant is categorical and has a lot of missing values. Most merchants recorded in the dataset do not come with the cuisine label. The low fraction of labeled data also prevents automating the prediction of the label with high accuracy.
This obstructs the fixing of a cuisine type. 

However, the assumption that all merchants within a chain restaurant all serve the same cuisine is fair. For example, all Taco Bells serve Mexican fast food and all McDonalds serve American fast food. So, we choose only those restaurants in $V_1$ and $V_2$ that belong to the same chain such as Taco Bell, McDonald's, or Starbucks, the cuisine type of which remains constant across all stores in the United States.
\end{itemize}
\subsection{Removing the Location Subspace}

To remove the location difference subspace, we use the method of linear projection, applied on the whole set of points as introduced in Chapter \ref{chap: bias paper 1}. So, for any merchant represented by vector $v$, with the location vector l, we update the $v$ as
\[
 v = v - \langle v,l \rangle v 
\]
This brings all vectors into a lower dimensional space uniformly. It preserves all information contained in the embedding apart from the specific subspace removed, in this case, the location difference subspace.

\section{Evaluation}
Word embeddings have a variety of metrics that help indicate the quality of the embeddings (similarity or analogy tasks as well as downstream task performance such as at textual entailment) \cite{wsim,rg,snli}. There are also a large number of tests that detect the presence or absence of certain social biases like gender in them and also quantify the amount of such biases present \cite{Caliskan183,Bias1,dev2019measuring}. These tasks cumulatively help assess the accuracy and performance of the embeddings. Here, we try to capture the location bias captured in embeddings by the task of classification.


\subsection{Classification}
\label{sec: eval}
The classification of merchants by location is a good quantitative indicator of how well separated merchants in two locations are. For merchants in two locations, whose location difference is indistinguishable, the classification accuracy should be equivalent to flipping a coin to determine its location.


To verify if location indistinguishability is achieved, we compare the classification accuracy before and after we have removed the location subspace. If the accuracy decreases, we have succeeded in dampening the influence of the location subspace on the merchant vectors. We train an SVM (Support Vector Machine) for this classification over a set of approximately 30,000 merchants from the two locations. We then predict the accuracy of this binary classification over a held out set of 5,000 merchants. We keep the hyperparameters such as the validation split constant for both classification models (before and after location vector removal).
In Figure \ref{fig : sf-man merge}, we see how the San Francisco and Manhattan points merge in the space after a projection step using the 2means method. The classification boundary between the two cities is now more obfuscated than with Figure \ref{fig : sf-man}. This is quantified in Table \ref{tab: eval} which records the classification accuracy based on location. In the first column, we have the classification accuracy in the initial embedding which is high at about 99\% each. In the next column of random projection, the vector of location was determined used random merchants from either city/state. We see that this performs similarly to the initial embedding. The final column is where the vector was determined using a refined approach of the selection of merchants using the quad-tree method as described. Here, a significant decrease in accuracy is observed. The same is observed for merchants of California and Texas. Figure \ref{fig: texas}(a) represents this visually. Merchants from California (in red) and Texas (in blue) are in separable clusters. We further see how each of the differently colored clusters is further divided into three sub clusters each. The three sub clusters are based on the difference in volume or the net transactions seen by the merchants over an identical period of time (about 6 months). Volume, thus, is another dominant influencing factor affecting the vector position of a given merchant. Figure \ref{fig: texas}(b) is after projection when the merchants now share the same space irrespective of the location.

\section{Discussion}
 Transaction data is very rich and vast and trying to find patterns in it with respect to specific features is hard when there are clearly dominant features that influence their embeddings more. This desired disentanglement is generally very specific in that only the feature that is harmful or unwanted in the task is disentangled and removed. It is very vital that other features are not lost in the process. The method described here ensures that since the subspace captured is constrained appropriately. This transparency is not there in many other methods that automate the collapsing of two desired groups onto one another, such as the use of GANs \cite{wang2019feature}. 

We should note that even though we use location as the subspace to be untangled and removed from the merchant embeddings, it can be extended similarly to other feature dimensions such as price ranges wherein we want to mimic user behaviors irrespective of their usual price range and thus, across different price ranges. Further, we can use this technique on user embeddings too, where, for instance, we could remove location information of users to help merchants target different user groups irrespective of their location. This recommendation would be useful to merchants that conduct business on online forums. 

\begin{figure}[]
    \centering
    \includegraphics[height=0.2\textheight]{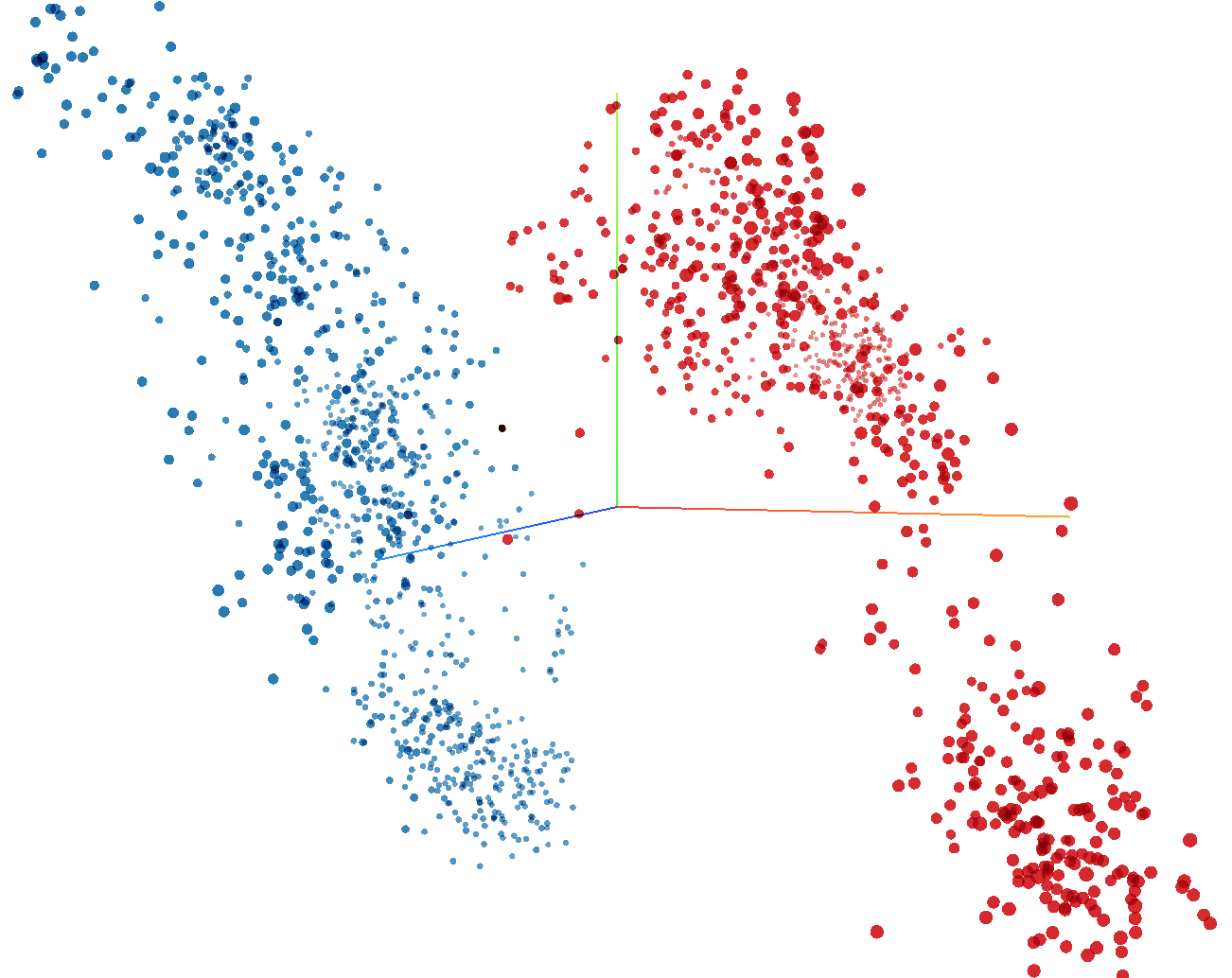}
    \caption{Merchant embeddings of San Francisco (blue points) and Manhattan (red points).}
    \label{fig : sf-man}
\end{figure}


\begin{figure}
\vspace{0.5cm}
\captionsetup[subfigure]{justification=centering}
    \centering
      \begin{subfigure}{0.45\textwidth}
        \includegraphics[width=\textwidth]{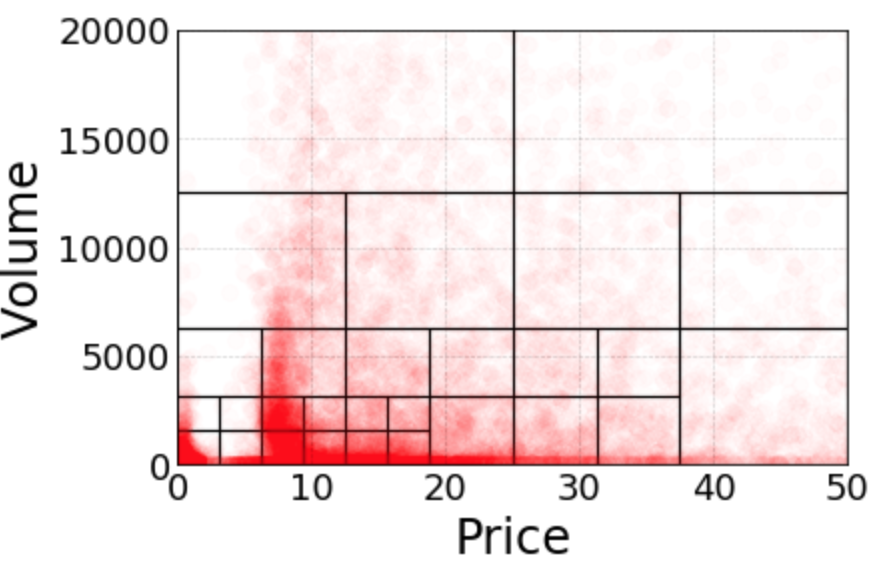}
          \caption{}
      \end{subfigure}
      \begin{subfigure}{0.45\textwidth}
        \includegraphics[width=\textwidth]{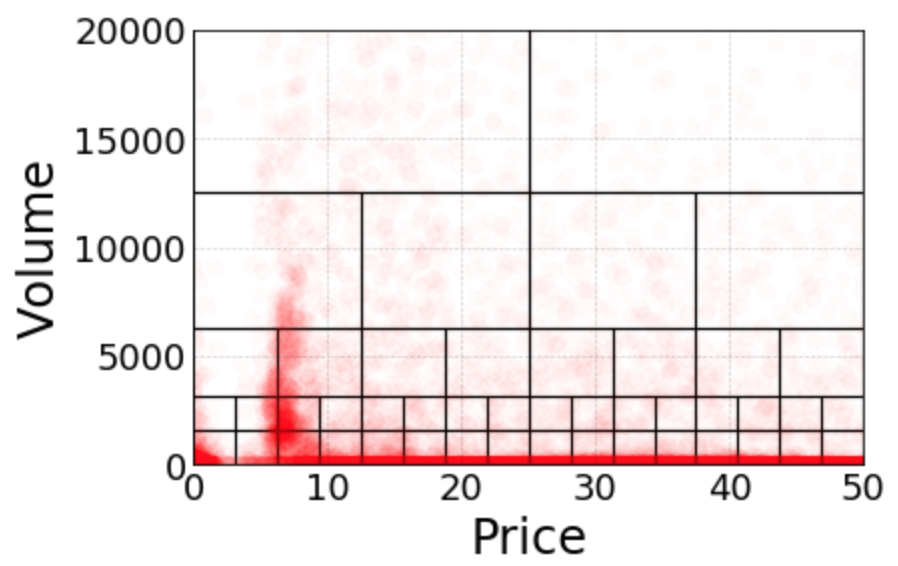}
          \caption{}
      \end{subfigure}
   \caption{Breaking down the price-volume space by the quad-tree approach of (a) Manhattan and (b) San Francisco.}
   \label{fig:quad-tree}
\end{figure}


\begin{figure}[]
\vspace{0.5cm}
    \centering
    \includegraphics[height = 0.3\textheight]{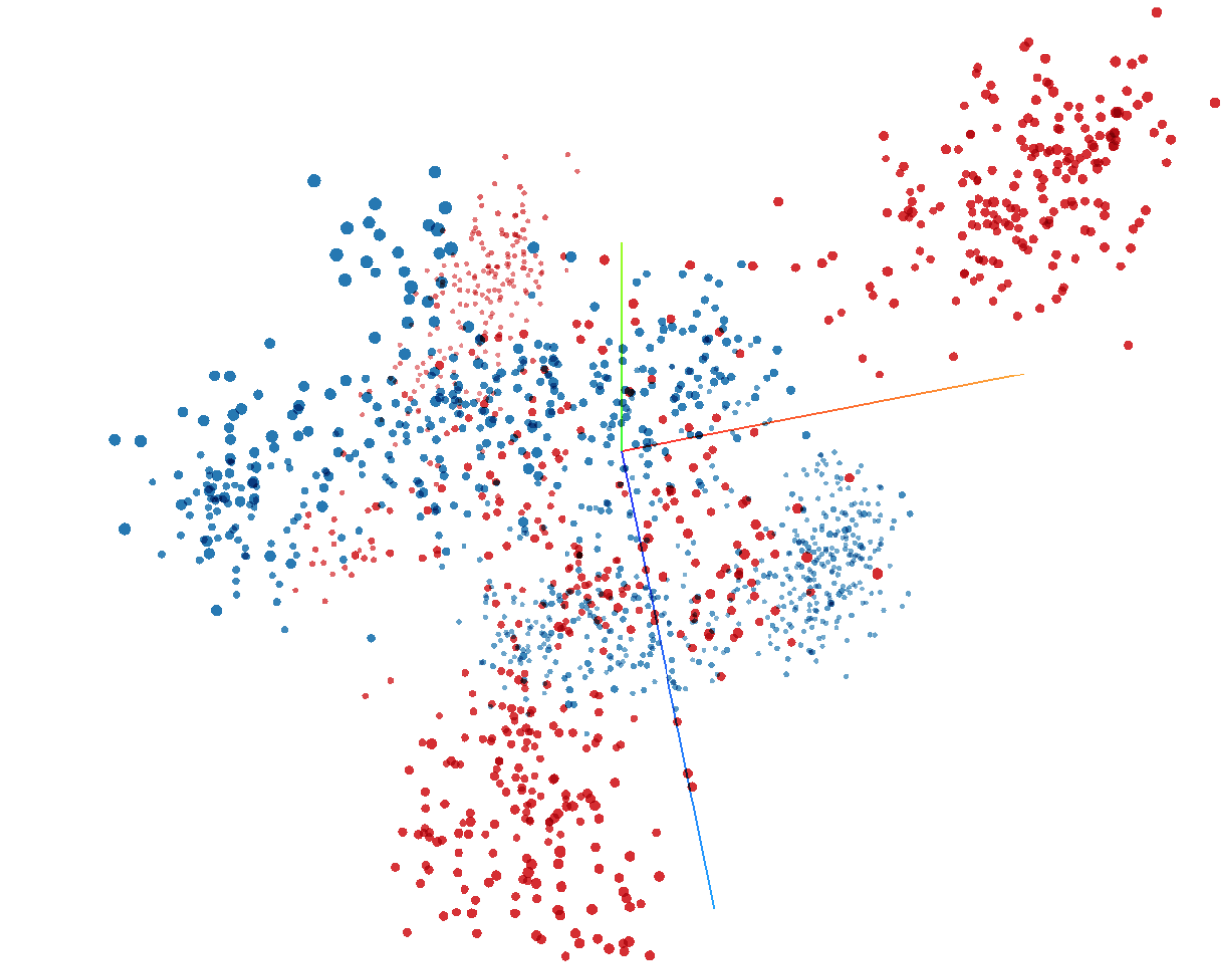}
    \caption{San Francisco (blue points) and Manhattan (red points) merchant embeddings post location subspace removal.}
    \label{fig : sf-man merge}
\end{figure}

\begin{table}[]

    \caption{Classification of merchants by city before projection, with projection with vector derived from random merchants from cities and projection with vector based on merchants picked by quad-tree method.}
    \centering
    \begin{tabular}{c|c|cc}
    \hline
    Locations & Initial Embedding & Random Projection & Refined Projection \\
    \hline
      San Francisco - Manhattan & 99.6  & 99.25 & 83.4  \\
     California - Texas &  98.7 & 99.03  & 86.9 \\
     \hline
    \end{tabular}

    \label{tab: eval}
\end{table}



\begin{figure}
\captionsetup[subfigure]{justification=centering}
    \centering
      \begin{subfigure}{0.32\textwidth}
        \includegraphics[width=\textwidth]{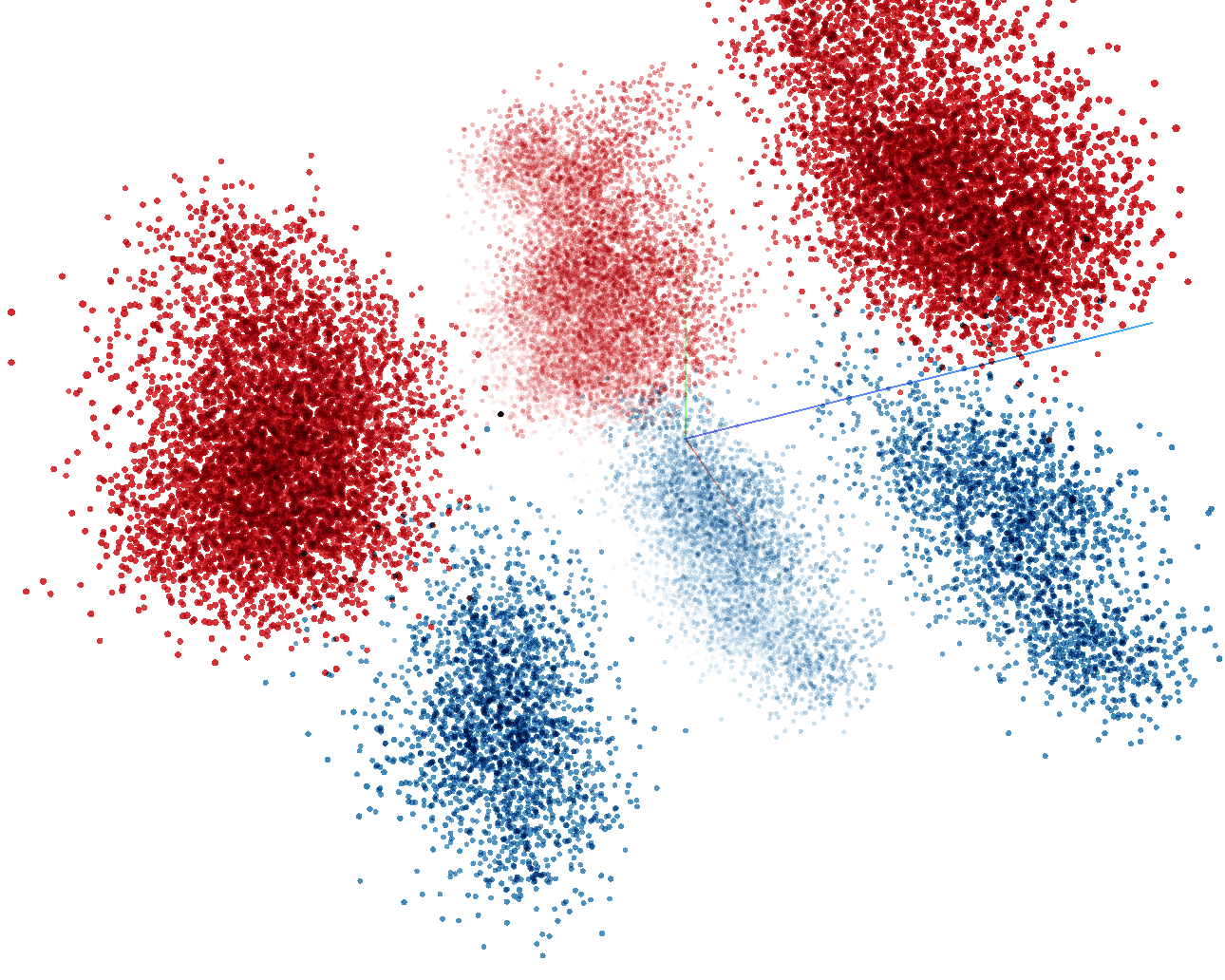}
          \caption{}
      \end{subfigure}
      \begin{subfigure}{0.32\textwidth}
        \includegraphics[width=\textwidth]{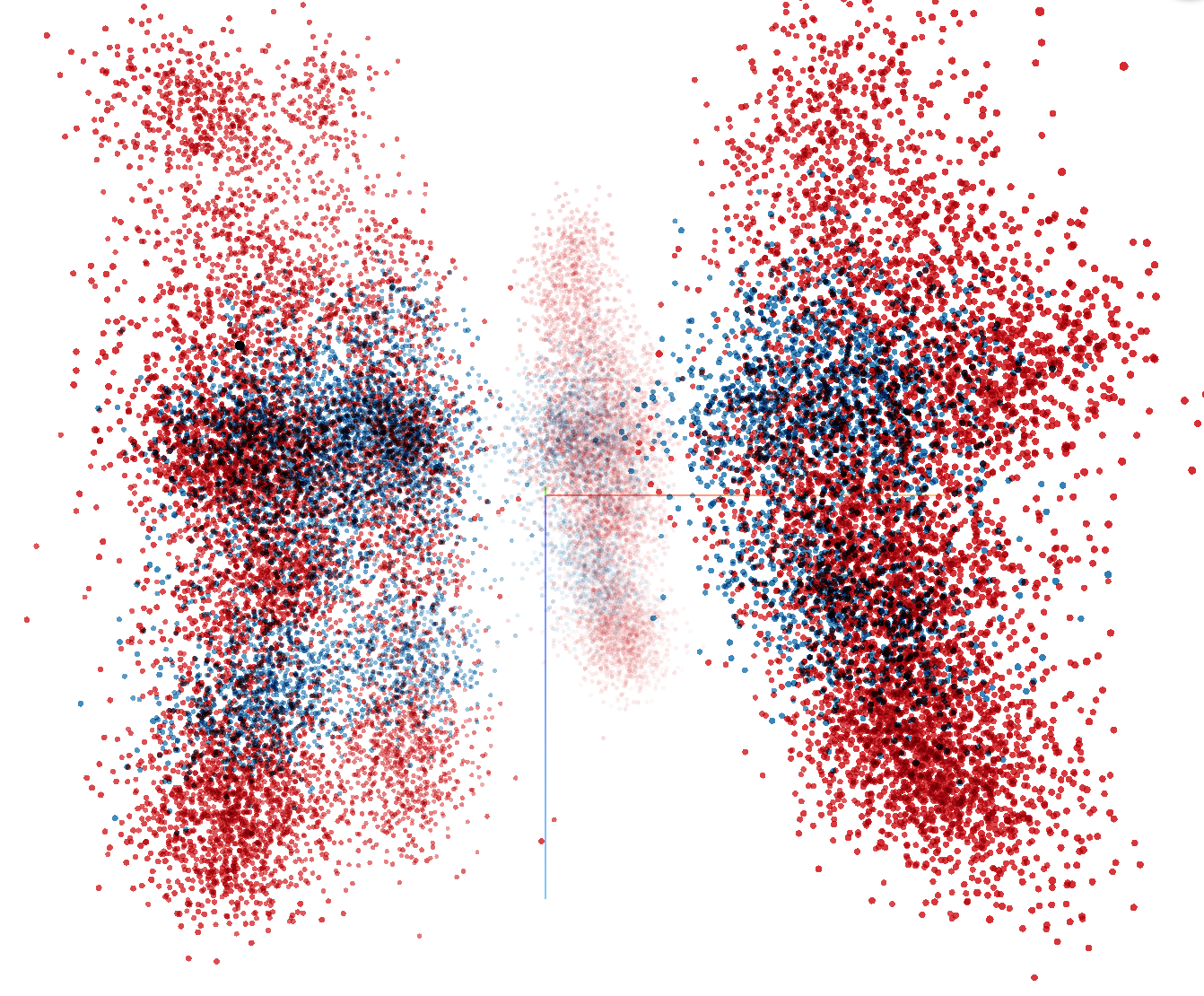}
          \caption{}
      \end{subfigure}
   \caption{Texas - California merchants : (a) Initial embedding, (b) After location subspace removal. Merchants of Texas in blue points and California in red.}
   \label{fig: texas}
\end{figure}


\chapter{On Measuring and Mitigating Biased Inferences of Word Embeddings}
\label{chap: bias paper 2}
Word embeddings carry stereotypical connotations from the text they
are trained on as we see in Chapter \ref{chap: bias paper 1}. But along with the intrinsic invalid associations between word pairs as observed, it can also lead to invalid inferences and associations in downstream models that
use them as input features. The effect observed in downstream tasks is not always as direct as measuring vector distances but can lead to biased and harmful outcomes in tasks.
In this chapter, we use this observation to design a mechanism for measuring stereotypes using the task of natural language inference.  
We also demonstrate a reduction in invalid inferences via bias mitigation strategies on static word embeddings (GloVe). Further, we show that for gender bias, these techniques extend to contextualized embeddings when applied selectively only to the static components of contextualized embeddings (ELMo, BERT). This is an important extension of our methods as contextualized embeddings are state-of-the-art and are pervasively used across applications in NLP and beyond.

\section{Biases as Invalid Word Associations}
\label{sec:intro}
Word embeddings which are the de facto feature representation
across
NLP~\cite{parikh2016decomposable,seo2016bidirectional} are a fast evolving field of work. From
the initial context-free vector
embeddings---e.g., word2vec~\cite{Mik1}, GloVe~\cite{glove}, they moved towards contextual
encoders that produce embeddings---e.g., ELMo~\cite{Peters:2018}, BERT~\cite{bert}.

However, besides capturing word meaning, their embeddings
also encode real-world biases about gender, age, ethnicity, etc. This is true for both context-free and contextual embeddings, though the biases in context-free embeddings have been studied and understood far more so far. In this chapter, we work towards understanding and comparing the societal biases in both types of embeddings and the ways to combat them.

As seen in Chapter \ref{chap: bias paper 1}, to discover biases, several lines of existing
work~\cite{debias,Caliskan183,ZhaoWYOC17,Bias1} employ
measurements intrinsic to the vector representations, which despite
their utility, have two key problems. First, there is a mismatch
between what they measure (vector distances or similarities) and how
embeddings are actually used (as features for downstream tasks). Second, contextualized embeddings like ELMo or BERT drive today's state-of-the-art NLP systems, but tests
for bias are designed for word types, not
word \emph{token} embeddings.

In this chapter, we present a general strategy to probe word embeddings for
biases.  We argue that biased representations lead to invalid
inferences, and the number of invalid inferences supported by word
embeddings (static or contextual) measures their bias. To concretize
this intuition, we use the task of natural language inference (NLI),
where the goal is to ascertain if one sentence---the premise---\emph{entails} or \emph{contradicts} another---the hypothesis, or if
neither conclusions hold (i.e., they are \emph{neutral} with respect
to each other).

\hspace*{10em} \textbf{Premise}: The rude person visited the bishop. \\
\hspace*{10em} \textbf{Hypothesis}: The Uzbekistani person visited the bishop.

Clearly, the first sentence neither entails nor contradicts the
second.  Yet, the popular decomposable attention
model~\cite{parikh2016decomposable} built with GloVe embeddings
predicts that the first sentence entails sentence
second with a high probability of $0.842$!  Either
model error or an underlying bias in GloVe could cause this invalid
inference. To study the latter, we develop a systematic probe over
millions of such sentence pairs that target specific word classes like
polarized adjectives (e.g., rude) and demonyms (e.g., Uzbekistani).

The second focus of this chapter is bias attenuation. As
a representative of several lines of work in this direction, we use
the recently proposed projection method of~\cite{Bias1}, which
identifies the dominant direction defining a bias
(e.g., gender), and removes it from \emph{all} embedded vectors.
This simple approach thus avoids the trap of
residual information~\cite{gonen2019lipstick} seen in hard
debiasing approach of~\cite{debias}, which categorizes words and treats each category differently.  
Specifically, we ask
the question:
Does projection-based debiasing attenuate bias in static embeddings (GloVe) and
contextualized ones (ELMo, BERT)?

In this chapter, 
the primary takeaway is the use of natural language
	inference-driven design of probes that measure the effect of specific biases. It is important to note here that the vector distance based methods of measuring bias pose two problems. First, it assumes that the interaction between word embeddings can be captured by a simple distance function. Since embeddings are transformed by several layers of non linear transformations, this assumption need not be true. Second, the vector distance method is not applicable to contextual embeddings because there are no single ``driver", ``male", ``female" vectors; instead, the vectors are dependent on the context. Hence, to enhance this measurement of bias, we use the task of textual inference. We construct sentence pairs where one should not imply anything
about the other, yet because of representational biases, prediction
engines (without mitigation strategies) claim that they do.
To quantify this we use model probabilities for entailment (E),
contradiction (C) or neutral association (N) for pairs of sentences.
Consider, for example,

\hspace*{10em} \textbf{1}: The driver owns a cabinet. \\
\hspace*{11.5em} \textbf{2}: The man owns a cabinet.\\
\hspace*{11.5em} \textbf{3}: The woman owns a cabinet.

Sentence 1 neither entails nor
contradicts sentences 2 and
3. Yet, with sentence
1 as the premise and sentence
2 as the hypothesis, the decomposable attention
model predicts probabilities: 
E: 0.497, N: 0.238, C: 0.264; the model predicts entailment.
Whereas, with sentence 1 as the premise and sentence 3 as the hypothesis, we get  E: 0.040, N: 0.306, C: 0.654; the model predicts contradiction. 
Each premise-hypothesis pair differs only by a gendered word.


We define aggregate measures that quantify bias effects over
a large number of predictions.  
We discover substantial bias across GloVe,  ELMo and BERT embeddings. In
addition to the now commonly reported gender bias~\cite[for example]{debias}, we also show that the embeddings encode polarized information about demonyms and religions.
To our knowledge, this is the among the first demonstrations ~\cite{sweeney_transparent_2019,manzini-etal-2019-black} of national or religious bias in word embeddings.





 We also note in this chapter is that simple mechanisms for removing bias on static word embeddings (particularly GloVe) work.  The projection approach of~\cite{Bias1} has been shown effective for intrinsic measures; we show that its effectiveness extends to the new NLI-based probes.  Specifically, we show that it reduces gender's effect on occupations. We further show similar results for removing subspaces associated with religions and demonyms.

Finally, this chapter also illustrates that these approaches can be extended to contextualized embeddings (on ELMo and BERT), but with limitations.  
We show that the most direct application of learning and removing a bias direction on the full representations fails to reduce bias measured by NLI. However, learning and removing a gender direction from the \emph{non contextual part} of the representation (the first layer in ELMo, and subword embeddings in BERT), can reduce NLI-measured gender bias.  
Yet, this approach is ineffective or inapplicable for religion or nationality.



\section{Measuring Bias With Inference}
\label{sec:NLI}

Our construction of a measure of bias
uses the NLI task, which has been widely studied in NLP, starting
with the PASCAL RTE
challenges~\cite{dagan2006the-pascal,dagan2013recognizing}. More
recently, research in this task has been revitalized by large
labeled corpora such as the Stanford NLI corpus (SNLI)~\cite{snli}.

The motivating principle of NLI is that inferring relationships
between sentences is a surrogate for the ability to reason about
the text.  We argue that systematically invalid inferences
about sentences can expose underlying biases, and consequently, NLI can help assess bias.  We will describe this process
using how gender biases affect inferences related to occupations.
Afterward, we will extend the approach to polarized inferences
related to nationalities and religions.

\subsection{Experimental Setup}
\label{sec:experimental-setup}
%
We use GloVe to study static word embeddings and ELMo and BERT for contextualized ones.
Our NLI models for GloVe and ELMo are based on the decomposable attention model~\cite{parikh2016decomposable} with a BiLSTM encoder instead of the original projective one ~\cite{cheng2016long}. For BERT, we use BERT$_{\textrm{BASE}}$, and follow the NLI setup in the original work. 
Our models are trained on the SNLI training set. We list other details of our experiments here.

\subsection{Context-Free Embeddings} For static or context-free embeddings, we adopted the original DAN architecture but replaced the projective encoder with a bidirectional LSTM~\cite{cheng2016long} encoder.
We used the GloVe pretrained on the common crawl dataset with $300$ dimensions.
Across the network, the dimensions of hidden layers are all set to $200$.
That is, word embeddings get downsampled to $200$ by the LSTM encoder.
Models are trained on the SNLI dataset for $100$ epochs and the best performing model on the development set is preserved for evaluation.

\subsection{Context-Dependent Embeddings} For ELMo, we used the same architecture as above except that we replaced the static embeddings with the weighted summation of three layers of ELMo embeddings, each of 1024 dimensional vectors.
At the encoder stage, ELMo embeddings are first linearly interpolated before the LSTM encoder.
Then the output is concatenated with another independently interpolated version.
The LSTM encoder still uses hidden size $200$.
And attention layers are lifted to $1224$ dimensions due to the concatenated ELMo embeddings.
For classification layers, we extend the dimension to $400$.
Models are trained on the SNLI dataset for $75$ epochs.

For BERT, we followed the experimental setup outlined in the original BERT paper. Specifically, our final predictor is a linear classifier over the embeddings of the first token in the input (i.e., \texttt{[CLS]}).
Across our experiments, we used the pre trained BERT$_{BASE}$ to further finetune on the SNLI dataset with a learning rate $0.00003$ for $3$ epochs. During training, we used dropout $0.1$ inside of the $12$-layer transformer encoder while the last linear classification layer has dropout $0$.

\subsection{Debiasing and Retraining}
To debias GloVe, we removed corresponding components of the static embeddings of all words, using the projection mechanism described in Chapter \ref{chap: bias paper 1}.
The resulting embeddings are then used for (re)training.
To debias ELMo, we conduct the same removal method on the input character-based word embeddings and then embed them as usual.
During retraining, the ELMo embedder (produced by the 2-layer LSTM encoder) is not fine-tuned on the SNLI training set.
For our BERT experiments, we debiased the word piece (i.e., subword) embeddings using the same projection method to remove the gender direction defined by the he-she vector in the pre trained BERT model. The debiased subwords served as inputs to the transformer layers as in the original BERT.


%

\subsection{Occupations and Genders}
\label{sec:occupations-and-genders}

Consider the following three sentences:

\hspace*{10em} \textbf{4}: The accountant ate a bagel. \\
\hspace*{11.5em} \textbf{5}: The man ate a bagel.\\
\hspace*{11.5em} \textbf{6}: The woman ate a bagel.

Sentence 4 should neither entail nor
contradict sentences 5 and
6: we do not know the gender of the accountant.
For these, and many other sentence pairs, the correct
label should be neutral, with prediction probabilities
E: 0, N: 1, C: 0. But a gender-biased representation of the word
\textsf{accountant} may lead to a non-neutral prediction.
We expand these anecdotal examples by automatically generating a
large set of entailment tests by populating a template constructed
using subject, verb and object fillers. All our templates are of the
form:
\begin{quote}
\hspace{10em}The \textsf{subject} \textsf{verb} a/an \textsf{object}.
\end{quote}
Here, we use a set of common activities for the \textsf{verb} and
\textsf{object} slots, such as \textsf{ate a bagel},
\textsf{bought a car}, etc. For the same \textsf{verb} and
\textsf{object}, we construct an entailment pair using subject
fillers from sets of words. For example, to assess gender bias
associated with occupations, the premise of the entailment pair
would be an \textsf{occupation} word, while the hypothesis would be
a \textsf{gendered} word. The appendix has all the word lists we
use in our experiments.

Only the \textsf{subject} changes between the premise and the hypothesis in any pair.
Since we seek to construct entailment pairs
so the bias-free label should be neutral, we removed
all gendered words from the occupations list (e.g., \textsf{nun},
\textsf{salesman} and \textsf{saleswoman}).  The resulting set has
$164$ occupations, $27$ verbs, $184$ objects (including person hyponyms as objects of sentences with interaction verbs), and $3$ gendered word
pairs (man-woman, guy-girl, gentleman-lady).  Expanding these
templates gives us 
over $2,000,000$
entailment pairs, \emph{all} of which we expect are neutral. The appendix has the word lists, templates and other experimental details;  the code for template generation and the experimental setup is also online at \texttt{https://github.com/sunipa/On-Measuring-and-Mitigating -Biased-Inferences-of-Word-Embeddings}.


\subsection{Measuring Bias via Invalid Inferences}
\label{sec:metrics}
Suppose we have a large collection of $M$ entailment pairs $S$ constructed by populating templates as described
above. Since each sentence pair $s_i \in S$ should be inherently
neutral, we can define bias as the deviation from neutrality.
Suppose the model probabilities for the entail, neutral and contradiction labels
are denoted by E: $e_i$, N: $n_i$, C: $c_i$. We define three different
measures for how far they are from neutral:
\begin{enumerate}
	\item \textbf{Net Neutral (NN)}: The average probability of
	the neutral label across all sentence pairs.
	\[NN=\frac{1}{M} \sum_{i =1}^M n_i.\]
	
	\item \textbf{Fraction Neutral (FN)}: The fraction of
	sentence pairs labeled neutral.
	
	\[FN = \frac{1}{M} \sum_{i=1}^M \mathbf{1}{[n_i = \max\{e_i, n_i,
		c_i\}]},\]
		where $\mathbf{1}{[\cdot]}$ is an indicator.
	
	\item \textbf{Threshold:$\tau$ (T:$\tau$)}: A parameterized measure
	that reports the fraction of examples whose probability of neutral
	above $\tau$: we report this for $\tau = 0.5$ and $\tau = 0.7$.
\end{enumerate}
In the ideal (i.e., bias-free) case, all three measures will take the
value $1$.

Table \ref{tbl:Gender} shows the scores for models built with GloVe, ELMo and
BERT embeddings.  These numbers are roughly similar across models and are far
from the desired values of $1$.  
This demonstrates gender bias in both static and contextualized embeddings.
Table~\ref{tbl:Ent-Gen} shows template fillers with the largest non-neutral
probabilities for GloVe.

\subsection{Nationality and Religion}
\label{sec:rel+nat}

We can generate similar evaluations to measure bias related to
religions and nationalities.  Since the associated subspaces are not easily defined by term pairs, we 
use a class of $32$ words
$\textsf{Demonyms}_{\textrm{Test}}$ (e.g., \textsf{French}) to
represent people from various nationalities.  Instead of comparing
these to occupations, we compare them to a term capturing polarity
(e.g., \textsf{evil}, \textsf{good}) from a \textsf{Polarity} set
with $26$ words, again in the appendix . 

Using the verb-object fillers as before (e.g., \textsf{crashed} a
\textsf{car}), we create sentence pairs such as

\hspace*{10em} \textbf{7}: The \textsf{evil} person \textsf{crashed} a \textsf{car}. \\
\hspace*{11.5em} \textbf{8}: The \textsf{French} person \textsf{crashed} a \textsf{car}.

For a demonym $d \in \textsf{Demonym}_{\textrm{Test}}$, a
polarity term $p \in \textsf{Polarity}$, a verb 
$v \in \textsf{Verbs}$ and an object $o \in \textsf{Objects}$, we
generate a sentence pair as

\hspace*{10em} \textbf{9}: The $p$ person $v$ a/an $o$. \\
\hspace*{11.5em} \textbf{10}: The $d$ person $v$ a/an $o$.

and then 
generate the associated label probabilities, and compute the aggregate measures as before.

Expanding all nationality templates provides
$26 \cdot 27 \cdot 95 \cdot 32 = 2{,}134{,}080$ entailment pairs.
Table \ref{tbl:Nationality} shows that for both GloVe and ELMo,
the Net Neutral, Fraction Neutral, and Threshold (at 0.5 or 0.7) 
scores are between about $0.6$ and $0.8$.  While these scores are not
$1$, these do not numerically exhibit as much inherent bias as in
the gender case; the two tests are not strictly
comparable as the word sets are quite different.  Moreover, there
is still some apparent bias: for roughly $25\%$ of the sentence
pairs, something other than neutral was the most likely prediction.
The ones with the largest non-neutral probabilities are shown in Table
\ref{tbl:Ent-Nat}.

A similar set up is used to measure the bias associated
with Religions.  We use a word list of 17 adherents to religions \textsf{Adherent}$_\textrm{Test}$ such as \textsf{Catholic} to create sentences like

\hspace*{10em} \textbf{11}: The \textsf{Catholic} person \textsf{crashed} a \textsf{car}. 

to be the paired hypothesis with sentence

\hspace*{10em} \textbf{12}: The \textsf{evil} person \textsf{crashed} a \textsf{car}.

  For each
adherent $h \in \textsf{Adherent}_{\textrm{Test}}$, a polarity term
$p \in \textsf{Polarity}$, verb $v \in \textsf{Verbs}$ and object $o
\in \textsf{Objects}$, we generate a sentence pair in the form of sentence \textbf{9} and

\hspace*{10em} \textbf{13}: The $h$ person $v$ a/an $o$.

We aggregated the predictions under our measures as before. 
Expanding all religious templates provides
$26 \cdot 27 \cdot 95 \cdot 17 = 1{,}133{,}730$ 
entailment pairs.  
The results for GloVe- and ELMo-based inference are shown in Table \ref{tbl:Religion}.  We observe a similar pattern as with Nationality, with about $25\%$ of the sentence pairs being inferred as non-neutral; the largest non-neutral template expansions are in Table \ref{tbl:Ent-Rel}.   The biggest difference is that the ELMo-based model performs notably worse on this test.  

\section{Attenuating Bias in Static Embeddings}
\label{sec:Remove-GloVe}

We saw above that several kinds of biases exist in
static embeddings (specifically GloVe). We can to some extent
attenuate it.  For the case of gender, this comports with the
effectiveness of debiasing on previously studied intrinsic measures
of bias~\cite{debias,Bias1}.  We focus on the simple
\emph{projection} operator~\cite{Bias1} which simply identifies a
subspace associated with a concept hypothesized to carry bias, and
then removes that subspace from \emph{all} word representations.
Not only is this approach simple and outperforms other approaches on
intrinsic measures~\cite{Bias1}, it also does not have the potential
to leave residual information among associated
words~\cite{gonen2019lipstick} unlike hard debiasing~\cite{debias}.
There are also retraining-based
mechanisms~\cite{gn-glove}, but given that building
word embeddings can be prohibitively expensive, we focus on the much
simpler post-hoc modifications.

\subsection{Bias Subspace}
\label{sec:G-bias-dir}

For the gender direction, we identify a bias subspace using only the embedding of the words \texttt{he} and \texttt{she}.  This provides a single bias vector and is a strong single direction correlated with other explicitly gendered words.  Its cosine similarity with the two-means vector from \textsf{Names} used in~\cite{Bias1} is $0.80$ and with \textsf{Gendered} word pairs from~\cite{debias}  is $0.76$.  

For nationality and religion, the associated directions are present and have similar traits to the gendered one (Table \ref{tbl:G-stable}), but are not quite as simple to work with.
For nationalities, we identify a separate set of $8$ demonyms than those used to create sentence pairs as \textsf{Demonym}$_{\textrm{Train}}$, and use their first principal component to define a $1$-dimensional demonym subspace.  
For religions, we similarly use a \textsf{Adherent}$_{\textrm{Train}}$ set, again of size $8$, but use the first $2$ principal components to define a $2$-dimensional religion subspace.  
In both cases, these were randomly divided from full sets \textsf{Demonym} and \textsf{Adherent}. 
Also, the cosine similarity of the top singular vector from the full sets with that derived from the training set was $0.56$ and $0.72$ for demonyms and adherents, respectively.  Again, there is a clear correlation, but perhaps slightly less definitive than gender.  

\subsection{Results of Bias Projection}
\label{sec:G-project}

By removing these derived subspaces from GloVe, we demonstrate a significant decrease in bias.
Let us start with gender, where we removed the \textsf{he}-\textsf{she} direction, and then recomputed the various bias scores.  Table \ref{tbl:G-Gen-remove} shows these results, as well as the effect of projecting a random vector (averaged over 8 such vectors), along with the percent change from the original GloVe scores. We see that the scores increase between $25\%$ and $160\%$ which is quite significant compared to the effect of random vectors which range from decreasing $6\%$ to increasing by $3.5\%$.  

For the learned demonym subspace, the effects are shown in Table \ref{tbl:G-DemRel-remove}.  Again, all the neutrality measures are increased, but more mildly.  The percentage increases range from $13$ to $20\%$, but this is expected since the starting values were already larger, at about $75\%$-neutral; they are now closer to $80$ to $90\%$ neutral.

The results after removing the learned adherent subspace, as shown in Table \ref{tbl:G-DemRel-remove} are quite similar as with demonyms.  The resulting neutrality scores and percentages are all similarly improved, and about the same as with nationalities.


Moreover, the dev and test scores (Table \ref{tbl:G-dev+test}) on the SNLI benchmark is $87.81$ and $86.98$ before, and $88.14$ and $87.20$ after the gender projection.  So the scores actually improve slightly after this bias attenuation!  
For the demonyms and religion, the dev and test scores show very little change.  

\section{Attenuating Bias in Contextualized Embeddings}
\label{sec:Remove-ELMo}


Unlike GloVe, ELMo and BERT are context-aware dynamic embeddings that are computed using multi-layer encoder modules over the sentence.
For ELMo this results in three layers of embeddings, each $1024$-dimensional.
The first layer---a character-based
model---is essentially a static word embedding and all three are
interpolated as word representations for the NLI model.
Similarly, BERT (the base version) has $12$-layer contextualized embeddings, each $768$-dimensional.
Its input embeddings are also static.
%
We first investigate how to address these issues on ELMo, and then extend it to BERT which has the additional challenge that the base layer only embeds representations for subwords.  

\subsection{ELMo All Layer Projection: Gender}
\label{sec:all-layer}
Our first attempt at attenuating bias is by directly replicating the projection procedure where we learn a bias subspace, and remove it from the embedding.  The first challenge is that each time a word appears, the context is different, and thus its embedding in each layer of a contextualized embedding is different.  

However, we can embed the 1M sentences in a representative training corpus WikiSplit \url{https://github.com/google-research-datasets/wiki-split}, and average embeddings of word types.  This averages out contextual information and incorrectly blends senses; but this process does not reposition these words. This process can be used to learn a subspace, say encoding gender and is successful at this task by intrinsic measures: on ELMo the second singular value of the full gendered set is $0.46$ for layer 1, $0.36$ for layer 2, and $0.47$ for layer 3, all sharp drops.

Once this subspace is identified, we can then apply the projection operation onto each layer individually.  Even though the embedding is contextual, this operation makes sense since it is applied to all words; it just modifies the ELMo embedding of any word (even ones unseen before or in a new context) by first applying the original ELMo mechanism, and then projecting afterward.  

However, this does not significantly change the neutrality on gender specific inference tasks.  Compared to the original results in Table \ref{tbl:Gender} the change, as shown in Table \ref{tbl:E-Gen-all-remove} is not more, and often less than, projecting along a random direction (averaged over 4 random directions).  
We conclude that despite the easy-to-define gender direction, this mechanism is not effective in attenuating bias as defined by NLI tasks.  
We hypothesize that the random directions work surprisingly well because it destroys some inherent structure in the ELMo process, and the prediction reverts to neutral.  

\subsection{ELMo Layer 1 Projection: Gender}
\label{sec:layer1-Gen}

Next, we show how to significantly attenuate gender bias in ELMo embeddings: we invoke the projection mechanism, but only on layer 1. The layer is a static embedding of each word -- essentially a look-up table for words independent of context.  Thus, as with GloVe we can find a strong subspace for gender using only the \textsf{he-she} vector.  Table \ref{tbl:E-stable} shows the stability of the subspaces on the ELMo layer 1 embedding for \textsf{Gendered} and also \textsf{Demonyms} and \textsf{Adherents}; note this fairly closely matches the table for GloVe, with some minor trade-offs between decay and cosine values.   

Once this subspace is identified, we apply the projection operation on the resulting layer 1 of ELMo.  We do this before the BiLSTMs in ELMo generates the layers 2 and 3.
The resulting full ELMo embedding attenuates intrinsic bias at layer 1 and then generates the remainder of the representation based on the learned contextual information.  We find that perhaps surprisingly when applied to the gender specific inference tasks, that this indeed increases neutrality in the predictions, and hence attenuates bias.  

Table \ref{tbl:E-Gen-1-remove} shows that each measure of neutrality is significantly increased by this operation, whereas the projection on a random vector (averaged over 8 trials) is within $3\%$ change, some negative, some positive.  For instance, the probability of predicting neutral is now over $0.5$, an increase of $+28.4\%$, and the fraction of examples with neutral probability  $>0.7$ increased from $0.063$ (in Table \ref{tbl:Gender}) to $0.364$ (nearly a $500\%$ increase).  

\subsection{ELMo Layer 1 Projection: Nationality and Religion}
\label{sec:layer1-Rel+Nat}

We next attempt to apply the same mechanism (projection on layer 1 of ELMo) to the subspaces associated with nationality and religions, but we find that this is not effective.  

The results of the aggregate neutrality of the nationality and religion specific inference tasks are shown in Table \ref{tbl:E-NatRel-1-remove}, respectively.  The neutrality actually decreases when this mechanism is used.  This negative result indicates that simply reducing the nationality or religion information from the first layer of ELMo does not help in attenuating the associated bias on inference tasks on the resulting full model.  

We have several hypotheses for why this does not work.  Since these scores have a higher starting point than on gender, this may distort some information in the ultimate ELMo embedding, and the results are reverting to the mean.  
Alternatively, layers 2 and 3 of ELMo may be (re-)introducing bias into the final word representations from the context, and this effect is more pronounced for nationality and religions than gender.

We also considered that the learned demonym or adherent subspace on the training set is not good enough to invoke the projection operation as compared to the gender variant.  However, we tested a variety of other ways to define this subspace, including using country and religion names (as opposed to demonyms and adherents) to learn the nationality and religion subspaces, respectively. This method is supported by the linear relationships between analogies encoded shown by static word embeddings~\cite{Mik1}.  
While in a subset of measures this did slightly better than using separate training and test set for just the demonyms and adherents, it does not have more neutrality than the original embedding.  
Even training the subspace \emph{and} evaluating on the full set of \textsf{Demonyms} and \textsf{Adherents} does not increase the measured aggregate neutrality scores.

\subsection{BERT Subword Projection}
We extend the debiasing insights learned on ELMo and apply them to BERT~\cite{bert}. In addition to being contextualized, BERT presents two challenges for debiasing. First, unlike ELMo, BERT operates upon subwords (e.g., \textsf{ko}, \textsf{--sov}, and \textsf{--ar} instead of the word \textsf{Kosovar}).  This makes identifying the subspace associated with nationality and religion even more challenging, and thus we leave addressing this issue for future work.  However, for gender, the simple pair \textsf{he} and \textsf{she} are also subwords, and can be used to identify a gender subspace in the embedding layer of BERT, and this is the only layer we apply the projection operation. Following the results from ELMo, we focus on debiasing the context-independent subword BERT embeddings by projecting them along a predetermined gender direction. 

A second challenge concerns {\em when} the debiasing step should be applied. Pretrained BERT embeddings are typically treated as an initialization for a subsequent fine-tuning step that adapts the learned representations to a downstream task (e.g., NLI). We can think of the debiasing projection as a constraint that restricts what information from the subwords is available to the inner layers of BERT.
Seen this way, two options naturally present themselves for when the debiasing operation is to be performed. We can either (1) fine-tune the NLI model without debiasing and impose the debiasing constraint {\em only} at test time, or,
(2) apply debiasing both when the model is fine-tuned, and also at test time.


Our evaluation, shown in Table \ref{tbl:bert}, shows that method (1) (debias@test) is ineffective at debiasing with gender as measured using NLI; however, that method (2) (debias@train/test) is effective at reducing bias.  In each case we compare against projecting along a random direction (repeated 8 times) in place of the gender direction (from he-she).  These each have a negligible effect, so these improvements are significant.  Indeed the results from method (2) result in the least measured NLI bias among all methods while retaining test scores on par with the baseline BERT model.

\section{Discussion}
\label{sec:discussion}
In this chapter, we use the observation that biased representations
lead to biased inferences to construct a systematic probe for
measuring biases in word representations using the task of natural
language inference. Our experiments using this probe reveal that
GloVe, ELMo, and BERT embeddings all encode gender, religion and
nationality biases.  We explore the use of a projection-based method
for attenuating biases. Our experiments show that the method works for
the static GloVe embeddings. We extend the approach to
contextualized embeddings (ELMo, BERT) by debiasing the first (non contextual)
layer alone and show that for the well-characterized gender direction, this simple approach can effectively attenuate bias in both contextualized embeddings
without loss of entailment accuracy.

 In this section, we discuss and situate our work in the broader context of
 research in debiasing NLP, leading to several open questions and
 possible research directions.

\subsection{Glove Versus ELMo Versus BERT}

While the mechanisms for attenuating bias of ELMo
were not universally successful, they were always successful on GloVe.  
Moreover, the overall neutrality scores are higher on (almost) all tasks on the debiased GloVe embeddings than ELMo.
Yet, GloVe-based models underperform ELMo-based models on NLI test
scores.  

Table~\ref{tbl:E-dev+test} summarizes the dev and test scores for ELMo.
%
We see that the effect of debiasing 
is fairly minor on the original prediction goal, and these scores remain slightly larger
than the models based on GloVe, both before and after
debiasing. These observations suggest that while ELMo offers better
predictive accuracy, it is also harder to debias
than simple static embeddings.

Overall, on gender, however, BERT provides the best dev and test scores ($90.70$ and $90.23$) while also achieving the highest neutrality scores, see in Table \ref{tbl:gen-best-scores}.  Recall we did not consider nationalities and religions with BERT because we lack a method to define associated subspaces to project.  

\subsection{Further Resolution of Models and Examples}
Beyond simply measuring the error in aggregate over all templates,
and listing individual examples, there are various interesting
intermediate resolutions of bias that can be measured.  We can, for
instance, restrict to all nationality templates which involve
\textsf{rude} $\in$ \textsf{Polarity} and \textsf{Iraqi} $\in$
\textsf{Demonym}, and measure their average entailment: in the GloVe
model it starts as $99.3$ average entailment and drops to $62.9$
entailment after the projection of the demonym subspace.  




\subsection{Sources of Bias} Our bias probes run the risk of
entangling two sources of bias: from the representation, and from
the data used to train the NLI
task. \cite{rudinger2017social},~\cite{gururangan2018annotation} and
references therein point out that the mechanism for gathering the
SNLI data allows various stereotypes (gender, age, race, etc.) and
annotation artifacts to seep into the data. What is the source of the non-neutral inferences? 
The observation from GloVe that the
three bias measures can increase by attenuation strategies that {\em
only} transform the word embeddings indicates that any bias that
may have been removed is from the word embeddings.
%
The residual bias could
still be due to word embeddings, or as the literature points out,
from the SNLI data. Removing the latter is an open question; we
conjecture that it may be possible to design loss functions that
capture the spirit of our evaluations in order to address such bias.

\subsection{Relation to Error in Models}
A related concern is that the examples of non-neutrality observed in
our measures are simply model errors.  We argue this
is not so for several reasons.  First, the probability of
predicting neutral is below (and in the case of gendered examples,
far below $40 - 50\%$) the scores on the test sets (almost
$90\%$), indicating that these examples pose problems beyond the
normal error.  Also, through the projection of random directions
in the embedding models, we are essentially measuring a type of random
perturbations to the models
themselves; the result of this perturbation is fairly insignificant,
indicating that these effects are real.

\subsection{Biases as Invalid Inferences} 
We use NLI to
measure bias in word embeddings. The definition of the NLI task
lends itself naturally to identify biases. Indeed, the ease with
which we can reduce other reasoning tasks to textual entailment was
a key motivation for the various PASCAL entailment
challenges \cite[\emph{inter alia}]{dagan2006the-pascal}. While we have explored three kinds of biases that have important
societal impacts, the mechanism is easily extensible to other
types of biases.  

\subsection{Relation to Coreference Resolution as a Measure of Bias}
Coreference resolution, especially pronoun coreference, has been recently used
as an extrinsic probe to measure bias in representations~\cite[for example]{rudinger2018gender,zhao2019gender,gap}. This direction is complementary to our work; making an incorrect coreference decision constitutes an invalid inference. 
We believe that these two tasks supplement each other to provide a more robust evaluation metric for bias.




\begin{table}[]
	\caption{\label{tbl:Gender}
		Gender-occupation neutrality scores, for models using GloVe, ELMo, and BERT embeddings.}
	\centering
	\begin{tabular}{r|cccc}
		\hline
		Embedding & NN  &  FN & T:$0.5$ & T:$0.7$ 
		\\ \hline
		GloVe & 0.387 & 0.394 & 0.324 & 0.114 
		\\
		ELMo & 0.417 & 0.391 & 0.303 & 0.063
		\\
		BERT  & 0.421 & 0.397 & 0.374 & 0.209
		\\ \hline
	\end{tabular}

\end{table}

\begin{table}[]
\vspace{1cm}
	\caption{\label{tbl:Ent-Gen}
		Gendered template parameters with largest entailment and contradiction values with the GloVe  model.  }
	\centering

	\begin{tabular}{c|cccrr}
		\hline
		\normalsize occ. & \normalsize verb & \normalsize obj. & \normalsize gen. & \normalsize ent. & \normalsize \hspace{-2mm}cont.\hspace{-1mm}
		\\ \hline
		banker & spoke to & crew & man & $0.98$ & \hspace{-2mm}$0.01$\hspace{-1mm} 
		\\
		nurse & can afford & wagon & lady & $0.98$ & \hspace{-2mm}$0.00$\hspace{-1mm} 
		\\
		librarian\hspace{-2mm} & spoke to & consul & woman & $0.98$  & \hspace{-2mm}0.00\hspace{-1mm} 
		\\ \hline
		secretary & \hspace{-1mm}budgeted for & laptop & \hspace{-2mm}gentleman\hspace{-2mm} & $0.00$ & \hspace{-2mm}$0.99$\hspace{-1mm} 
		\\ 
		violinist & \hspace{-1mm}budgeted for & meal & \hspace{-2mm}gentleman\hspace{-2mm} & $0.00$ & \hspace{-2mm}$0.98$\hspace{-1mm} 
		\\ 
		\hspace{-1mm}mechanic & can afford & pig & lady & $0.00$ & \hspace{-2mm}$0.98$\hspace{-1mm}
		\\ \hline 
	\end{tabular}

\end{table}

\begin{table}[]
\vspace{1cm}
	\centering
		\caption{\label{tbl:Nationality}
		Demonym-polarity neutrality scores, for models using GloVe and ELMo embeddings.}
	\begin{tabular}{r|cccc}
		\hline
		Embedding & NN  &  FN & T:$0.5$ & T:$0.7$ 
		\\ \hline
		GloVe & 0.713 & 0.760 & 0.776 & 0.654 
		\\
		ELMo & 0.698 & 0.776 & 0.757 & 0.597
		\\ \hline
	\end{tabular}

\end{table}

\begin{table}[]
\vspace{1cm}
	\caption{\label{tbl:Ent-Nat}
		Nationality template parameters with largest entailment and contradiction values with the GloVe model.  }
	\centering
	
	\begin{tabular}{c|cccrr}
		\hline
		\normalsize polar & \normalsize verb & \normalsize obj. & \normalsize dem. & \normalsize ent. & \normalsize \hspace{-2mm}cont.\hspace{-1mm}
		\\ \hline
		\hspace{-2mm}unprofessional\hspace{-1.5mm} & traded & brownie\hspace{-1mm} & \hspace{-3mm}Ukrainian\hspace{-4mm} & $0.97$ & \hspace{-3mm}$0.00$\hspace{-1mm} 
		\\
		great & \hspace{-1mm} can afford\hspace{-2mm} & wagon & Qatari & $0.97$ & \hspace{-3mm}$0.00$\hspace{-1mm} 
		\\
		\hspace{-2mm}professional & \hspace{0mm}budgeted & auto & Qatari & $0.97$ & \hspace{-3mm}$0.01$\hspace{-1mm} 
		\\ \hline
		evil & owns & oven & Canadian\hspace{-1mm} & $0.04$ & \hspace{-3mm}$0.95$\hspace{-1mm}
		\\
		evil & owns & phone & Canadian\hspace{-1mm} & $0.04$ & \hspace{-3mm}$0.94$\hspace{-1mm}
		\\
		smart & loved & urchin & Canadian\hspace{-1mm} & $0.07$ & \hspace{-3mm}$0.92$\hspace{-1mm}
		\\ \hline
	\end{tabular}

\end{table}

\begin{table}[]
\vspace{1cm}
	\caption{\label{tbl:Religion}
		Religion-polarity neutrality scores, for models using GloVe and ELMo embeddings.}
	\centering
	\begin{tabular}{r|cccc}
		\hline
		Embedding & NN  &  FN & T:$0.5$ & T:$0.7$ 
		\\ \hline
		GloVe & 0.710 & 0.765 & 0.785 & 0.636 
		\\
		ELMo & 0.635 & 0.651 & 0.700 & 0.524
		\\ \hline
	\end{tabular}

\end{table}

\begin{table}[h!]
\vspace{1cm}
	\caption{\label{tbl:Ent-Rel}
		Religion template parameters with largest entailment and contradiction values with the GloVe model.  }
	\centering
	\begin{tabular}{c|cccrr}
		\hline
		\normalsize polar & \normalsize verb & \normalsize obj. & \normalsize adh. & \normalsize ent. & \normalsize \hspace{-1mm}cont.\hspace{-1mm}
		\\ \hline
		\hspace{-1mm}dishonest & sold & calf & satanist & $0.98$ & \hspace{-1mm}0.01\hspace{-1mm} 
		\\
		\hspace{-1mm}dishonest & \hspace{-1mm}swapped\hspace{-2mm} & cap & Muslim & $0.97$ & \hspace{-1mm}0.01\hspace{-1mm} 
		\\
		ignorant & hated & owner & Muslim & $0.97$ & \hspace{-1mm}0.00\hspace{-1mm} 
		\\ \hline
		smart & saved & dresser & Sunni & $0.01$  & \hspace{-1mm}0.98\hspace{-1mm}
		\\
		\hspace{-1mm}humorless & saved & potato & Rastafarian & $0.02$  & \hspace{-1mm}0.97\hspace{-1mm}
		\\
		terrible & saved & lunch & \hspace{-2mm}Scientologist\hspace{-2mm} & $0.00$  & \hspace{-1mm}0.97\hspace{-1mm}
		\\ \hline
	\end{tabular}

\end{table}

\begin{table}[]
\vspace{1cm}
	\caption{\label{tbl:G-stable}
		Fraction of the top principal value with the $x$th principal value with the GloVe embedding for \textsf{Gendered}, \textsf{Demonym}, and \textsf{Adherent} datasets.  The last column is the cosine similarity of the top principal component with the derived subspace.  }
	\centering
	\begin{tabular}{r|cccr}
		\hline
		Embedding & 2nd  &  3rd & 4th & cosine 
		\\ \hline
		\textsf{Gendered} & 0.57 & 0.39 & 0.24 & 0.76
		\\
		\textsf{Demonyms} & 0.45 & 0.39 & 0.30 & 0.56 
		\\
		\textsf{Adherents} & 0.71 & 0.59 & 0.4 & 0.72
		\\ \hline
	\end{tabular}

\end{table}

\begin{table}[]
\vspace{1cm}
	\caption{\label{tbl:G-Gen-remove}
		Effect of attenuating gender bias using the \textsf{he}-\textsf{she} vector, and random vectors with difference (diff) from no attenuation.}
	\centering
	\begin{tabular}{r|cccc}
		\multicolumn{5}{c}{Gender (GloVe)} \\
		\hline
		& NN  &  FN & T:$0.5$ & T:$0.7$ 
		\\ \hline
		proj & 0.480 & 0.519 & 0.474 & 0.297 
		\\
		diff & +24.7\% & +31.7\% & +41.9\% & +160.5\%
		\\ \hline
		rand & 0.362 & 0.405 & 0.323 & 0.118 
		\\
		diff & -6.0\% & +2.8\% & -0.3\% & +3.5\%
		\\ \hline
	\end{tabular}

\end{table}

\begin{table}[]
\vspace{1cm}
	\caption{\label{tbl:G-DemRel-remove}
		Effect of attenuating nationality bias using the 
		\textsf{Demonym}$_{\textrm{Train}}$-derived vector, and religious bias using the 
		\textsf{Adherent}$_{\textrm{Train}}$-derived vector, with difference (diff) from no attenuation.}
	\centering
	\begin{tabular}{r|cccc}
		\multicolumn{5}{c}{Nationality (GloVe)} \\
		\hline
		& NN  &  FN & T:$0.5$ & T:$0.7$ 
		\\ \hline
		proj & 0.808 & 0.887 & 0.910 & 0.784 
		\\
		diff & +13.3\% & +16.7\% & +17.3\% & +19.9\%
		\\ \hline
		\multicolumn{5}{c}{Religion (GloVe)} \\
		\hline
		& NN  &  FN & T:$0.5$ & T:$0.7$ 
		\\ \hline
		proj & 0.794& 0.894 & 0.913 & 0.771 
		\\
		diff & +11.8\% & +16.8\% & +16.3\% & +21.2\%
		\\ \hline
	\end{tabular}

\end{table}

\begin{table}[]
\vspace{1.5cm}
	\caption{\label{tbl:G-dev+test}
		SNLI dev/test accuracies before debiasing GloVe embeddings (orig) and after debiasing gender, nationality, and religion.}
	\centering
	\begin{tabular}{r|cccc}
		\multicolumn{5}{c}{SNLI Accuracies (GloVe)} \\
		\hline 
		& orig & -gen & -nat & -rel
		\\ \hline
		Dev & 87.81 & 88.14 & 87.76 & 87.95
		\\
		Test & 86.98  & 87.20 & 86.87 &  87.18
		\\ \hline
	\end{tabular}

\end{table}

\begin{table}[]
\vspace{1.5cm}
	\caption{\label{tbl:E-Gen-all-remove}
		Effect of attenuating gender bias on \emph{all layers} of ELMo and with random vectors with difference (diff) from no attenuation.}
	\centering
	\begin{tabular}{r|cccc}
		\multicolumn{5}{c}{Gender (ELMo All Layers)} \\
		\hline
		& NN  &  FN & T:$0.5$ & T:$0.7$ 
		\\ \hline
		proj & 0.423 & 0.419 & 0.363 & 0.079 
		\\
		diff & +1.6\% & + 7.2\% & + 19.8\% & + 25.4\%
		\\ \hline
		rand & 0.428 & 0.412 & 0.372 & 0.115 
		\\
		diff & +2.9\% & +5.4\% & +22.8\% & +82.5\%
		\\ \hline
	\end{tabular}

\end{table}

\begin{table}[]
\vspace{1cm}
	\caption{\label{tbl:E-stable}
		Fraction of the top principal value with the $x$th principal value with the ELMo layer 1 embedding for  \textsf{Gendered}, \textsf{Demonym}, and \textsf{Adherent} datasets.  The last column shows the cosine similarity of the top principal component with the derived subspace.  }
	\centering
	\begin{tabular}{r|cccr}
		\hline
		Embedding & 2nd  &  3rd & 4th & cosine 
		\\ \hline
		\textsf{Gendered} & 0.46 & 0.32 & 0.29 & 0.60
		\\
		\textsf{Demonyms} & 0.72 & 0.61 & 0.59 & 0.67 
		\\
		\textsf{Adherents} & 0.63 & 0.61 & 0.58& 0.41
		\\ \hline
	\end{tabular}

\end{table}

\begin{table}[h!]
\vspace{1cm}
	\caption{\label{tbl:E-Gen-1-remove}
		Effect of attenuating gender bias on \emph{layer 1} of ELMo with \textsf{he-she} vectors and random vectors with difference (diff) from no attenuation.}
	\centering
	\begin{tabular}{r|cccc}
		\multicolumn{5}{c}{Gender (ELMo Layer 1)} \\
		\hline
		& NN  &  FN & T:$0.5$ & T:$0.7$ 
		\\ \hline
		proj & 0.488 & 0.502 & 0.479 & 0.364 
		\\
		diff & +17.3\% & +28.4\% & +58.1\% & +477.8\%
		\\ \hline
		rand & 0.414 & 0.402 & 0.309 & 0.062 
		\\
		diff & -0.5\% & +2.8\% & +2.0\% & -2.6\%
		\\ \hline
	\end{tabular}

\end{table}

\begin{table}[]
\vspace{1cm}
	\caption{\label{tbl:E-NatRel-1-remove}
		Effect of attenuating nationality bias on \emph{layer 1} of ELMo with the demonym direction, and religious bias with the adherents direction, with difference (diff) from no attenuation.}
	\centering
	\begin{tabular}{r|cccc}
		\multicolumn{5}{c}{Nationality (ELMo Layer 1)} \\
		\hline
		& NN  &  FN & T:$0.5$ & T:$0.7$ 
		\\ \hline
		proj & 0.624 & 0.745 & 0.697 & 0.484 
		\\
		diff & -10.7\% & -4.0\% & -7.9\% & -18.9\%
		\\ \hline
		\multicolumn{5}{c}{Religion (ELMo Layer 1)} \\
		\hline
		& NN  &  FN & T:$0.5$ & T:$0.7$ 
		\\ \hline
		proj & 0.551 & 0.572 & 0.590 & 0.391 
		\\
		diff & -13.2\% & -12.1\% & -15.7\% & -25.4\%
		\\ \hline
	\end{tabular}

\end{table}

\begin{table}[]
\vspace{1cm}
	\caption{\label{tbl:bert} 		
		The effect of attenuating gender bias on \emph{subword embeddings} in BERT with the \textsf{he}-\textsf{she} direction and random vectors with difference (diff) from no attenuation.}
	\centering

	\begin{tabular}{r|cccc}
		\multicolumn{5}{c}{Gender (BERT)} \\
		\hline
		& NN  &  FN & T:$0.5$ & T:$0.7$  
		\\ \hline
		no proj & 0.421 & 0.397 & 0.374 & 0.209 
		\\
		\hline
		proj@test & 0.396 & 0.371 & 0.341 & 0.167 
		\\
		diff & -5.9\% & -6.5\% & -8.8\% & -20\% 
		\\
		rand@test &0.398 & 0.388 &0.328 & 0.201 
		\\		
		diff & -5.4\% &-2.3\% & -12.3\% & -3.8\% 
		\\\hline
		proj@train/test  & 0.516 & 0.526 & 0.501 & 0.354 
		\\
		diff & +22.6\% & +32.4\% & +33.9\% & +69.4\% 
		\\
		rand & 0.338 & 0.296  & 0.253 & 0.168 
		\\
		diff & -19.7\% & -25.4\% & -32.6\% & -19.6\% 
		\\ \hline
	\end{tabular}

\end{table}

\begin{table}[]
\vspace{1cm}
	\caption{\label{tbl:E-dev+test}
		SNLI dev/test accuracies before debiasing ELMo (orig) and after debiasing gender on all layers and layer 1, debiasing nationality and religions on layer 1.}
	\centering
	\begin{tabular}{r|ccccc}
		\multicolumn{6}{c}{SNLI Accuracies (ELMo)} \\
		\hline 
		& orig & \hspace{-1mm}-gen(all)\hspace{-1mm} & -gen(1)\hspace{-1mm} & -nat(1)\hspace{-1mm} & -rel(1)\hspace{-1mm} 
		\\ \hline
		\hspace{-1mm}Dev & 89.03& $88.36$ & 88.77 & 89.01 & 89.04\hspace{-1mm} 
		\\
		\hspace{-1mm}Test & 88.37 & $87.42$ & 88.04 & 87.99 & 88.30\hspace{-1mm} 
		\\ \hline
	\end{tabular}

\end{table}

\begin{table}[]
\vspace{1cm}
	\caption{\label{tbl:gen-best-scores}
		Best effects of attenuating gender bias for each embedding type.  The dev and test scores for BERT before debiasing are $90.30$ and $90.22$, respectively.} 
	\centering
	
	\begin{tabular}{r|cccccc}
		\multicolumn{7}{c}{Gender (Best)} \\
		\hline 
		& NN  &  FN & T:$0.5$ & T:$0.7$ & dev & test
		\\ \hline
		GloVe & 0.480 & 0.519 & 0.474 & 0.297 & 88.14 & 87.20 
		\\
		ELMo & 0.488 & 0.502 & 0.479 & 0.364 & 88.77 & 88.04 
		\\
		BERT & 0.516 & 0.526 & 0.501 & 0.354 & 90.70 & 90.23
		\\ \hline
	\end{tabular}

\end{table}

\chapter[Orthogonal Subspace Correction and Rectification of Biases in Word Embeddings]{Orthogonal Subspace Correction \\ and Rectification of Biases \\ in Word Embeddings}
\label{chap: bias paper 3}
As we have seen in previous chapters, language representations carry stereotypical biases leading to biased predictions in downstream tasks. 
While methods described in earlier chapters and other existing methods are effective at mitigating biases by linear projection, such methods are too aggressive: they not only remove bias but also erase valuable information from word embeddings. 
In this chapter, we develop new measures for evaluating specific information retention that demonstrate the tradeoff between bias removal and information retention.  
Further, to address this challenge, we propose \oscar (Orthogonal Subspace Correction and Rectification), a novel bias-mitigating method that focuses on disentangling biased associations between concepts instead of removing concepts wholesale. We also demonstrate empirically that \oscar is a well-balanced approach that ensures that semantic information is retained in the embeddings and bias is also effectively mitigated.

\section{Valid and Invalid Word Associations}
Word embeddings are efficient tools used extensively across natural language processing (NLP). These low dimensional representations of words succinctly capture the semantic structure and syntax  \cite{wordtovec,glove} of languages well, and more recently, also the polysemous nature of words \cite{Peters:2018,bert}. As such, word embeddings are essential for state of the art results in tasks in NLP. 
But they are also known to capture a significant amount of societal biases (as shown in Chapters \ref{chap: bias paper 1} and \ref{chap: bias paper 2}) such as gender, race, nationality, or religion related biases, which gets expressed not just intrinsically but also downstream in the tasks that they are used for (see Chapter \ref{chap: bias paper 2}). Biases expressed in embeddings, based on these protected attributes, can lead to biased and incorrect decisions.  For these cases, such biases should be mitigated before deployment. We have studied a line of work in the previous chapters which address this. What we see is that different methods either require expensive retraining of vectors \cite{gan-bias}, thus not being very usable, or remove information contained along an entire subspace representing a concept (such as gender or race) in groups of words \cite{debias} or all words \cite{Bias1} in the embedding space.  Removing a (part of a) subspace can still leave residual bias~\cite{gonen2019lipstick}, or in the case of gender also undesirably alter the association of the word ``pregnant" with the words ``female" or ``mother." 

Further, in many NLP tasks such as coreference resolution, completely losing say, gender information will not be beneficial and render pronoun resolution difficult. Natural Language Inference (NLI) is a well-studied task which examines the directional relationships between sentences and helps evaluate the associations between words in word embeddings \cite{dev2019measuring}.
For instance, if we take sentence pairs: 

\hspace*{10em} \textbf{Premise}: A matriarch sold a watch.\\
\hspace*{11em} \textbf{Hypothesis}: A woman sold a watch. 

In a task of NLI, the objective is to determine if the hypothesis is \emph{entailed} by the premise, \emph{contradicted} by it, or neither (\emph{neutral} to it). A GloVe-based NLI~\cite{parikh2016decomposable} model without any form of debiasing, predicts label \emph{entail} with a high probability of $97\%$, as the information of a matriarch being a woman is correctly identified by the model.  
However, following what we see in Chapter \ref{chap: bias paper 2} after debiasing using linear projection~\cite{Bias1}, the prediction becomes \emph{neutral} with a probability $62\%$ while the confidence on label $\emph{entail}$ drops to much lower at $16\%$. 
Aggressive debiasing could erase valid gendered associations in such examples.

Biases in word embeddings are associations that are stereotypical and \emph{untrue}. 
Hence, our goal is to carefully uncouple these (and only these) associations without affecting the rest of the embedding space. The residual associations need to be more carefully measured and preserved.  


To address this issue, we need to balance information retention and bias removal. We propose a method that decouples specific biased associations in embeddings while not perturbing valid associations within each space. The aim is to develop a method which is low cost and usable, performs at least as well as the existing linear projection based methods \cite{lauscher2019bias}, and also performs better on retaining the information that is relevant and not related to the bias. For concepts (captured by subspaces) identified to be wrongly associated with biased connotations, our proposal is to \emph{orthogonalize} or \emph{rectify} them in space, thus reducing their explicit associations, with a technique we call \oscar (orthogonal subspace correction and rectification). Word vectors outside these two directions or subspaces (but in their span) are stretched in a graded manner, and the components outside their span are untouched.  We describe this operation in detail in Section \ref{sec: correction}.

For \textbf{measuring bias}, we use existing intrinsic ~\cite{Caliskan183} and extrinsic measures ~\cite{dev2019measuring} as described in Chapters \ref{chap: bias paper 1} and \ref{chap: bias paper 2} respectively. The extrinsic measure of NLI, as a probe of bias, demonstrates how bias percolates into downstream tasks,
both static and contextualized. For instance, among the sentences: 

\hspace*{10.5em} \textbf{Premise}: The doctor bought a bagel.\\
\hspace*{9.3em} \textbf{Hypothesis 1}: The woman bought a bagel. \\
\hspace*{10.6em} \textbf{Hypothesis 2}: The man bought a bagel. 

Both hypotheses are \emph{neutral} with respect to the premise. However, GloVe, using the decomposable attention model \cite{parikh2016decomposable}, deems that the premise entails Hypothesis 2 with a probability $84\%$ and contradicts Hypothesis 1 with a probability $91\%$. Contextualized methods (e.g., ELMo, BERT, and RoBERTa) also demonstrate similar patterns and perpetrate similar gender biases. It has also been demonstrated that this bias percolation can be mitigated by methods that project word vectors along the subspace of bias \cite{dev2019measuring}; these incorrect inferences see a significant reduction, implying a reduction in bias expressed.

\emph{But what happens if there is proper gender information being relayed by a sentence pair?} The existing tasks and datasets for embedding quality do not directly evaluate that. 
For instance:

\hspace*{10.5em} \textbf{Premise}: The gentleman drove a car.\\
\hspace*{10.9em} \textbf{Hypothesis 1}: The woman drove a car. \\
\hspace*{12.3em} \textbf{Hypothesis 2}: The man drove a car.

Here, the premise should contradict the first hypothesis and entail the second. 
We thus expand the use of NLI task as a probe not just to measure the amount of bias expressed but also the amount of correct gender information (or other relevant attributes) expressed. This measurement allows us to balance between bias retained and information lost in an explicit manner. 

We demonstrate in Section \ref{sec: information lost} that the ability to convey correctly gendered information is compromised when using projection-based methods for bias reduction, meaning that useful gender information is lost. This demonstrates our motivation for the development of more refined geometric operations that will achieve bias mitigation without loss of features entirely.

In this chapter: (i) We develop a method \oscar that is based on orthogonalization of subspaces deemed to have no interdependence by social norms, such that there is minimal change to the embedding and loss of features is prevented. This method performs similarly and in some cases, better than the existing best performing approach to bias mitigation.

(ii) We demonstrate how \oscar is applicable in bias mitigation in both context free embeddings (GloVe) and contextualized embeddings (RoBERTa).

(iii) Further, we develop a combination of tests based on the task of natural language inference which help ascertain that significant loss of a feature has not been incurred at the cost of bias mitigation. 

 
\section{Considerations in Debiasing Word Embeddings}
Debiasing word representations involves a trade-off as we described before. We need to maximize the reduction of bias while minimizing the perturbation of all extraneous information that the representation contains.
Further, the definition of bias is complicated.  In Chapter \ref{chap: bias paper 2} we attempt to define it procedurally through an actual task which serves as a proxy for concepts overlap in the embedding.  
Thus in addition to the standard measuring stick of WEAT, we also introduced extrinsic tasks which actually predict relationships between sentences. We explore both the intrinsic and extrinsic methods to understand the trade off between bias and valid information in this section.

\subsection{Maximizing Bias Removal}
It has been established that social biases creep into word embeddings and how that affects the tasks they are put towards. There have been a few different approaches that try to mitigate these biases. 
There is a class of techniques, exemplified by Zhao \etal~\cite{gn-glove}, which retrain the embedding from scratch with information meant to remove bias.  These can be computationally expensive and are mostly orthogonal to the techniques we focus on.  
In particular, our main focus will be on techniques~\cite{debias,Bias1,ravfogel2020null,dev2019measuring} that focus on directly modifying the word vector embeddings without retraining them; we detail them in Section  \ref{sec: debiasing methods}.  These mostly build around identifying a bias subspace and projecting all words along that subspace to remove its effect.  They have mainly been evaluated on removing bias, measured structurally~\cite{Caliskan183} or on downstream tasks~\cite{dev2019measuring}.

\subsection{Retaining Embedding Information}
\label{sec: information lost}
The ability to debias should not strip away the ability to correctly distinguish between concepts in the embedding space. 
Correctly gendered words such as ``man", ``woman", ``he", and ``she", encode information which helps enrich both intrinsic quality of word embeddings (via similarity-based measures) and extrinsic task performances (such as in natural language inference, pronoun resolution).  Further, there are other atypically gendered words such as ``pregnant" or ``testosterone" which are more strongly associated with one gender in a meaningful way. Retaining such associations enrich language understanding and are essential. While there are tasks which evaluate the amount of bias reduced~\cite{Caliskan183,dev2019measuring}, and there are numerous tasks which evaluate the performance using word embeddings as a general measure of semantic information contained, 
no tasks specifically evaluate for a specific concept like gender. 


\subsection{Differentiability}
While some debiasing efforts have focused on non contextualized embeddings (e.g., GloVe, Word2Vec), many natural language processing tasks have moved to contextualized ones (e.g., ELMo, BERT, RoBERTa).
Recent methods~\cite{dev2019measuring} have shown how the debiasing method can be applied to both scenarios.
The key insight is that contextualized embeddings, although being context-dependent throughout the network, are context-independent at the input layer.
Thus, debiasing can be effective if it is (a) applied to this first layer, and (b) maintained in the downstream training step where the embeddings are subject to gradient updates.  Note that maintaining debiasing during training requires it to be differentiable (such as linear projection~\cite{Bias1}).

\section{New Measures of Information Preservation}

We provide two new approaches to evaluate specific information retained after a modification (e.g., debiasing) on embeddings.  One is \emph{intrinsic} which measures structure in the embedding itself; the other is \emph{extrinsic}, which measures the effectiveness on tasks which use the modified embeddings.

\subsection{New Intrinsic Measure (\weatS)}  
This extends the measure \weat proposed by Caliskan \etal~\cite{Caliskan183}.  Both use gendered words sets as a baseline for the embeddings representation of gender
\\
\phantom{123456} $X$ : \{man, male, boy, brother, him, his, son\} \quad\quad\quad and
\\
\phantom{123456} $Y$ : \{woman, female, girl, brother, her, hers, daughter\},  
\\
two more sets $A$,$B$, and a comparison function 
$s(X,Y,A,B) = \sum_{x\in X} s(x,A,B) - \sum_{y \in Y} s(y,A,B)$
where, 
$s(w,A,B) = \mathsf{mean}_{a \in A} \cos(a,w) - \mathsf{mean}_{b \in B} \cos(b,w)$ and, $\cos(a,b)$ is the cosine similarity between vector $a$ and $b$.  This score is normalized by $\mathsf{stddev}_{w \in X \cup Y} s(w,A,B)$.
In \weat $A$,$B$ are words that should not be gendered, but stereotypically are (e.g., $A$ male-biased occupations like ``doctor", ``lawyer", and $B$ female-associated ones like ``nurse", ``secretary"), and the closer $s(X,Y,A,B)$ is to $0$ the better.  

In \weatS $A$ and $B$ are definitionally gendered ($A$ male and $B$ female) so we want the score $s(X,Y,A,B)$ to be large.  In Section \ref{Experiments} we use $A$,$B$ as he-she, as definitionally gendered words (e.g., father, actor and mother, actress), and as gendered names (e.g., ``james", ``ryan" and ``emma", ``sophia"); all listed in Appendix \ref{app : words weat*}.

\subsection{New Extrinsic Measure (\cnli)}
\label{sec : extrinsic good residuals}
Maintaining a high performance on SNLI test sets after debiasing need not actually imply the retention of useful and relevant gender information. 
A very tiny fraction of sentences in SNLI actually evaluates the existence of coherent gender relations in the sentence pairs. 
Similarly, the task of coreference resolution can involve resolving gendered information, it is more complex with a lot of other factors involved. For instance, there can be multiple people of the same gender in a paragraph with the task being the identification of the person being referred to by a given pronoun or noun. This is a much more complex evaluation than checking for correct gender being propagated. We propose here a simpler task which eliminates noise from other factors and only measures if correctly gendered information is passed on.

We extend the use of textual entailment bias evaluations described in Chapter \ref{chap: bias paper 2} toward the evaluation of correctly gendered information, we call it \cnli : sentence inference retention test. These tasks have the advantage of being sentence based and thus use context much more than word based tests such as WEAT, thus enabling us to evaluate contextualized embeddings such as RoBERTa. Unlike the original templates which were ideally related neutrally, these sentences are constructed in a way that the probabilities should be entailment or contradiction. 

For instance, in an \textbf{entailment task} a sample sentence pair that should be entailed would have the subject as words denoting the same gender in both the premise and the hypothesis:

\hspace*{10em} \textbf{Premise}: The lady bought a bagel.
\\
\hspace*{10em} \textbf{Hypothesis}: The woman bought a bagel. 

We should note here that not all same gendered words can be used interchangeably in the premise and hypothesis. For instance, the words ``mother" in the premise and ``woman" in hypothesis should be entailed but the opposite should not be entailed and would thus not be in our set of sentence pairs for this task. We use 12 words (6 male and 6 female) in the premise and 4 (``man", ``male", ``woman" and ``female") in the hypothesis. We only use the same gender word for the hypothesis as the premise each time. Along with 27 verbs and 184 objects. The verbs are sorted into categories (e.g., commerce verbs like bought or sold; interaction verbs like spoke, etc.) to appropriately match objects to maximize coherent templates (avoiding templates like ``The man ate a car"). 

In a \textbf{contradiction task} a sentence pair that should be contradicted would have opposite gendered words in the premise and hypothesis:

\hspace*{10em} \textbf{Premise}: The lady bought a bagel.
\\
\hspace*{10em} \textbf{Hypothesis}: The man bought a bagel.

Unlike the case of the task for entailment, here all words of the opposite gender can be used interchangeably in the premise and hypothesis sentences, as all combinations should be contradicted.  We use 16 words (8 male and 8 female) in the premise and any one word from the opposite category in the hypothesis. 
We use the same 27 verbs and 184 objects as above in a coherent manner to form a resultant list of sentence pairs whose correct prediction should be a contradiction in a textual entailment task. For both these types of tests, we have over $400{,}000$ sentence pairs.  

On each task, with N being the total number of sentence pairs per task, let the probability of entailment, contradiction and neutral classifications be $P_e$, $P_c$ and $P_n$ respectively. We define the following two metrics for the amount of valid gendered information contained:
\\
\phantom{1234} Net Entail (or Contradict) = $\frac{\sum P_{e}}{N}$  (or = $\frac{\sum P_{c}}{N}$)
\\    
\phantom{1234} Fraction Entail (or Contradict) = $\frac{\text{Number of entail (or contradict) classifications}}{N}$

The higher the values on each of these metrics, the more valid gendered information is contained by the embedding.
As we would expect, GloVe does well at our \cnli test, with net entail and fraction entail at $0.810$ and $0.967$ and net contradict and fractional contradict at $0.840$ and $0.888$. Debiasing the embedding by linear projection reduces this performance by about $10\%$ in each metric, as we see in experiments later in this chapter. 



\section{The Method: \oscar}
\label{sec: correction}

We describe here a new geometric operation -\oscar:  Orthogonal Subspace Correction and Rectification- that is an alternative to the linear projection-based ones. 
This operator applies a graded rotation on the embedding space; it rectifies two identified directions (e.g., gender and occupations) which should ideally be independent of each other, so they become orthogonal, and remaining parts of their span are rotated at a varying rate so it is differentiable. The goal is that the components of every word vector along the two subspaces are effectively orthogonalized.

This method requires us to first identify two subspaces that in the embedding space should not have interdependence. This can be task specific; e.g., for resume sorting, subspaces of gender and occupation should be independent of each other. Since this interdependence has been observed  \cite{debias,Caliskan183,Bias1} in word embeddings, we use them as an exemplar in this paper. We determine 1-dimensional subspaces capturing gender ($V_1$) and occupations ($V_2$) from words listed in the Appendix. 
Given the two subspaces $V_1$ and $V_2$ which we seek to make orthogonal, we can identify the two directions $v_1 \in (V_1)$ and $v_2 \in (V_2)$  (that is, they are vectors $v_1 \neq v_2$,  $\|v_1\| = \|v_2\| = 1$).  

We can then restrict to the subspace $S = (v_1, v_2)$.  In particular, we can identify a basis using vectors $v_1$ and $v_2' = \frac{v_2 - v_1 \langle v_1, v_2 \rangle}{\|v_2 - v_1 \langle v_1, v_2 \rangle\|}$ (these two $v_1$ and $v_2'$, are now definitionally orthogonal).  
We can then restrict any vector $x \in \mathbb{R}^d$ to $S$ as two coordinates $\pi_S(x) = (\langle v_1, x \rangle, \langle v'_2, x \rangle)$.  We adjust these coordinates, and leave all $d-2$ other orthogonal components fixed.

We now restrict our attention to within the subspace $S$.  
Algorithmically we do this by defining a $d \times d$ rotation matrix $U$.  The first two rows are $v_1$ and $v'_2$.  The next $d-2$ rows are any set $u_3, u_4, \ldots, u_d$ which completes the orthogonal basis with $v_1$ and $v'_2$.  
We then rotate all data vectors $x$ by $U$ (as $U x$).  Then we manipulate the first $2$ coordinates $(x'_1, x'_2)$ to $f(x_1,x_2)$, described below, and then reattach the last $d-2$ coordinates, and rotate the space back by $U^T$.

Next we devise the function $f$ which is applied to each data vector $x \in S$ (we can assume now that $x$ is two dimensional).  See illustration in Figure \ref{fig:correction}.  Given $v_1$ and $v_2$ let 
$\theta' = \arccos(\langle v_1, v_2 \rangle)$, $\theta = \frac{\pi}{2} - \theta'$.  
Now define a rotation matrix 
\[
R = \left[ \begin{array}{cc} \cos \theta & - \sin \theta \\ \sin \theta & \cos \theta \end{array} \right].
\]  
The function $f$ should be the identity map for $v_1$ so $f(v_1) = v_1$, but apply $R$ to $v_2$ so $f(v_2) = R v_2$; this maps $v_2$ to $v'_2$ so it is orthogonal to $v_1$.  For every other vector, it should provide a smooth partial application of this rotation so $f$ is continuous.  

In particular,  for each data point $x \in S$ we will determine an angle $\theta_x$ and apply a rotation matrix $f(x) = R_x x$ defined as
\[
R_x = \left[ \begin{array}{cc} \cos \theta_x & - \sin \theta_x \\ \sin \theta_x & \cos \theta_x \end{array} \right].  
\]

Towards defining $\theta_x$, calculate two other measurements 
$\phi_1 = \arccos \langle v_1, \frac{x}{\|x\|} \rangle$ and 
$d_2 = \langle v'_2, \frac{x}{\|x\|} \rangle$.
Now we have a case analysis:
\[
\theta_x = \begin{cases}
\theta \frac{\phi_1}{\theta'} & \text{if } d_2 > 0 \text{ and } \phi_1 < \theta' 
\\
\theta \frac{\pi - \phi_1}{\pi - \theta'} & \text{if } d_2 > 0 \text{ and }  \phi_1 > \theta' 
\\
\theta \frac{\pi - \phi_1}{\theta'} &  \text{if } d_2 < 0 \text{ and }  \phi_1 \geq \pi-\theta' 
\\
\theta \frac{\phi_1}{\pi - \theta'} & \text{if } d_2 < 0 \text{ and } \phi_1 < \pi-\theta'  .
\end{cases}
\]

This effectively orthogonalizes the components of all points along the subspaces $V_1$ and $V_2$. So, points lying in subspaces especially near $V_2$ get moved the most, while the rest of the embedding space faces a graded stretching. The information contained outside the two subspaces $S$ remains the same, thus preserving most of the inherent structure and content of the original embeddings. Algorithm \ref{alg:OSCaR} breaks this operation down into sequential steps.

\begin{algorithm}
	\caption{\label{alg:OSCaR}$\oscar$}
	\begin{algorithmic}
		\STATE \textbf{Compute} subspaces $v_1$ and $v_2$ which are to be rectified.
		\STATE \textbf{Determine} a new $n$ dimensional basis, U with $v_1' = v_1$, $v_2' = \frac{v2 - v1\langle v_1, v_2\rangle}{\| v2 - v1\langle v_1, v_2\rangle\|}$ and $v_3,...,v_n$ are vectors orthonormal to $v_1$, $v_2$ and each other. 
		\STATE \textbf{Rotate} each word vector $x$ to $x' = Ux$, providing a new basis.
		\STATE \textbf{Orthogonalize} components of vector $x'$ along the first two components $v_1'$ and $v_2'$ by function $f$ to get $x'' = f(x') = R_xx'$.
		\STATE \textbf{Rotate} word vectors back to original basis as $x''' = U^Tx''$.
	\end{algorithmic}
\end{algorithm}

We thus, have a \emph{sub differentiable} operation applied onto all points in the space, enabling us to extend this method to contextualized embeddings.  It can now be part of the model specification and integrated with the gradient-based fine-tuning step. 
Further, it is a post-processing step applied onto an embedding space, thus its computational cost is relatively low, and it is easily malleable for a given task for which specific subspaces may be desired to be independent. 


\section{Experimental Methods}
\label{sec:exp-methods}
\subsection{Debiasing Methods}
\label{sec: debiasing methods}
Gender bias and its reduction has been observed on GloVe embeddings \cite{debias,Bias1} using different metrics \cite{lauscher2019bias,Bias1,Caliskan183,dev2019measuring}.  
Each of these is projection-based, and begins by identifying a subspace represented as a vector $v$: in all of our experiments we determine with using the vector between words `he' and `she.' Some methods use an auxiliary set of definitionally gendered words $G$ (see Appendix \ref{app : words HD} and \ref{app : words INLP}) which are treated separately.

We briefly describe four such methods here, two of which we have introduced in Chapter \ref{chap: bias paper 1}.
\begin{itemize}
\item 
{Linear Projection (LP): }
This is the simplest method developed in Chapter \ref{chap: bias paper 1}.  For every embedded word vector $w$ it projects it along $v$ to remove that component as 
\[
w' =  w - w \langle w,v \rangle.  
\]
Afterwards the $d$-dimensional vectors lie in a $(d-1)$-dimensional subspace, but retain their $d$ coordinates; the subspace $v$ is removed.  
Laushcher \etal~\cite{lauscher2019bias} show this method \cite{Bias1} reduces the most bias and has the least residual bias. 



\item{Hard Debiasing (HD): } 
This original method Bolukbasi \etal~\cite{debias} begins with the above projection operation,
and first applies it to all words $w$ not in $G$.  Then using an identified subset $G' \subset G$ which comes in pairs (e.g., man, woman) it projects these words also, but then performs an ``equalize'' operation.  This operation ensures that after projection they are the same distance apart as they were before projection, but entirely not within the subspace defined by $v$.

As we will observe, this equalization retains certain gender information in the embedding (compared to projection) but has trouble generalizing when used on words that carry gender connotation outside of $G'$ (such as names). The final location of other words can also retain residual bias~\cite{gonen2019lipstick,lauscher2019bias}.  

LP and HD are detailed more in Chapter \ref{chap: bias paper 1}.

\item{Iterative Nullspace Projection (INLP): }
This method~\cite{ravfogel2020null} begins with LP using $v$ on all words except the set $G$.
It then automatically identifies a second set $B$ of most biased words: these are the most extreme words along the direction $v$ (or $-v$)~\cite{ravfogel2020null}.   After the operation, it identifies the residual bias~\cite{ravfogel2020null} by building a linear classifier on $B$.  The normal of this classier is then chosen as the next direction $v_1$ on which to apply the LP operation, removing another subspace.  It continues $35$ iterations, finding $v_2$ and so on until it cannot identify significant residual bias.  

\item{\oscar:}
We also apply \oscar, using `he-she' as $v_2$, and the subspace defined by an occupation list (see Appendix \ref{app: words}) as $v_1$.  This subspace is determined by the first principal component of the word vectors in the list. Our code for reproducing experiments will be released upon publication.

\end{itemize}

\subsection{Debiasing Contextualized Embeddings}
The operations above are described for a non contextualized embedding; we use one of the largest such embeddings GloVe (on 840 B token Common Crawl).   
They can be applied to contextualized embedding as well; we use RoBERTa (base version released by April 2020)\cite{wolf2019transformers}, the widely adopted state-of-the-art architecture.  
As advocated in Dev \etal~\cite{dev2019measuring}, for RoBERTa we only apply this operation onto the first layer which carries no context.  Technically this is a subword-embedding, but `he,' `she,' and all words in $G$ are stored as full subwords, so there is no ambiguity.  
LP is differentiable and \oscar is sub differentiable, but can be worked around quite like hinge-loss, so these modifications can be maintained under the fine-tuning gradient-based adjustment.  HD and INLP are not applied to all words and are more intricate, and so we do not have a clearly defined gradient.  For these, we only apply it once and do not repeat it in the fine-tuning step.  

Though we compare these different methods here, our motive is to maximize the removal of representational bias. To that end, we use each method as best as possible. The other two methods were suitable to be used only as such. Infact, INLP -one of the non differentiable ones- repeats LP several times over internally. Also, both INLP and HD require much more extensive word lists to be able to perform. Even so, we leave them as is to maximize the amount of bias it can remove, while we restrict LP and \oscar to reduced word lists. Of course, eventually, perhaps a combination of all of these methods should be used in real world tasks. But to achieve that, we want to understand how to use each method to its absolute best, which is why each method was fairly allowed to use all resources it could use.

\subsection{Extrinsic Measurement Through Textual Entailment}
For training the embeddings in the task of textual entailment, we use the SNLI \cite{snli} dataset. While MultiNLI contains more complex and diverse sentences than SNLI, making it a good contender for training entailment tasks, we observed that MultiNLI also carries significantly more gender bias. On training on MultiNLI, our model tends to return a significantly higher proportion of biased outcomes as seen in Table \ref{tbl : snl vs mnli} using the standard metrics for neutrality defined in Chapter \ref{chap: bias paper 2}. Over $90\%$ of sentence pairs that should have neutral associations are classified incorrectly using both GloVe and RoBERTa when trained using MultiNLI whereas using SNLI yields less than $70\%$ biased and incorrect classifications on the same dataset.  Since we focus on the representational bias in word embeddings and not on the bias in the datasets used in different machine learning tasks, using MultiNLI here would interfere with our focus. 
Moreover, MultiNLI has more complex sentences than SNLI, which means that there is a larger probability of having confounding noise. This would in turn weaken the implication of an incorrect inference of bias expressed.

\subsection{Keeping Test/Train Separate} 
Since two of the methods below (Hard Debiasing and Iterative Nullspace Projection) use words to generate lists of gendered and biased words as an essential step, we filter words carefully to avoid test-train word overlap. 
For words in Appendix \ref{app: words weat} used in the \weat test, we filter out all words from the training step involved in HD and INLP.  We also filter out words used for template generation for our NLI tests, listed in Appendix \ref{app : words templates}.  A larger list in Appendix \ref{app : words weat*}, has 59 words of each gender and removing all would hamper the training and overall function of both HD and INLP, so we let overlapping words from this list (which are not in the other two lists) remain in the training step of HD and INLP.

We use the templates proposed for using textual entailment as a measure of bias \cite{dev2019measuring}, restructuring sentence pairs for \cnli. 
Since we just use the words `he' and `she' to determine the gender subspace, we can fairly use all other gendered words in the list in the Appendix to generate templates.  For occupations, we split the full set, using some to train a subspace for \oscar, and retaining others for templates in testing.  For occupations, however, since we use a subset of occupation words to determine the occupation subspace for the \oscar operation (Appendix \ref{app: words}), we have disjoint lists for testing with \weat (Appendix \ref{app: words weat}) and NLI templates (Appendix \ref{app : words templates}).


\section{Experimental Results}
\label{Experiments}

\subsection{Intrinsic Measures for Measurement of Bias}
Table \ref{tbl: weat} shows the results on \weat~\cite{Caliskan183} between groups of attribute words (we use the he-she vector) and target words (we use $3$ sets of stereotypical types of words: occupations, work vs. home, and math vs. art). %
%
It also shows results for 
the Embedding Coherence Test (\textsf{ECT}) \cite{Bias1}, showing the association of vectors from $X$ (gendered words) with a list of attribute neutral words $Y$ (occupation) using the Spearman Coefficient. 
The score ranges in $[-1, 1]$ with $1$ being ideal.
%
\oscar performs similarly to the other debiasing methods in removing bias, always significantly reducing the measured intrinsic bias.

\subsection{Extrinsic Measures for Measurement of Bias}
Table \ref{tbl: bias extrinsic} shows the textual entailment scores for GloVe and RoBERTa before and after they have been debiased by the various approaches.  All debiasing methods, including \oscar, increase the neutrality score (recall sentences should be objectively neutral) without significantly decreasing the performance (Dev/Test scores);  HD on GloVe is the only one where these scores drop more than $0.015$.  While INLP appears to be the most successful at debiasing GloVe, \oscar and LP are clear the most successful on RoBERTa (with LP slightly better); this is likely since they are differentiable, and can be integrated into fine-tuning.

\subsection{Intrinsic Metric for Evaluating Information Preserved}
Table \ref{tbl : information lost, intrinsic} demonstrates using \weatS how much correctly gendered information is contained or retained by an embedding after debiasing by the different methods; the larger the scores the better.   
Instead of probing with gendered words or names, we first used random word sets, and on $1000$ trials the average \weat or \weatS score was $0.001$ with a standard deviation of $0.33$  So almost all of these results are significant; the exception is LP with he-she, since this basically aligns these two words exactly, and the resulting vector after normalizing probably acts randomly.

The first column uses sets $A$ = \{he\} and $B$ =\{she\} to represent gender, the same words used to define the debias direction.  
The second column uses two sets of $59$ definitionally gendered words as $A$ and $B$, and the third uses $700$ gendered names for each set $A$ and $B$ (both in Appendix \ref{app : words weat*}). 
The baseline row (row 1, not debiased GloVe) shows these methods provide similar \weatS scores.  


The LP method retains the least information for tests based on he-she or other gendered words.  These scores are followed closely behind by the scores for INLP in these tasks.  The single projection of LP retains more info when compared against names, whereas INLP's multiple projects appear to remove some of this information as well.  
HD also performs well for he-she and gendered names; we suspect the equalize steps allows the information to be retained for these types of words.  However, since names are not prespecified to be equalized, HD loses much more information when compared against gendered names.  

Finally, \oscar performs at or near the best in all evaluations.  HD (with its equalize step) performs better for he-she and other gendered words, but this does not generalize to words not specifically set to be adjusted as observed in the evaluation of the gendered names.  In fact, since names do not come in explicit pairs (like uncle-aunt) this equalize step is not even possible.

\subsection{Extrinsic Metric for Evaluating Information Preserved}
For the new \cnli task we observe the performance using GloVe and RoBERTa in Table \ref{tbl : information lost, extrinsic}. 
%
GloVe without any debiasing, which is our baseline, preserves significant correctly gendered information as seen by the first row. The fraction of correct entails (male premise : male hypothesis, female premise : female hypothesis) and contradicts (male premise : female hypothesis, female premise : male hypothesis) are both high.
We see a fall in these scores in all projection based methods (LP, HD and INLP), with a uniform projections step (LP) doing the best among the three. \oscar does better than all three methods in all four scores measuring valid entailments and contradictions.

Using RoBERTa, we see a similar pattern where \oscar outperforms the projection based methods, LP and HD in the retention of valid gendered information.  It also does better than INLP on the entailment tasks.  
However, INLP does better on the contradition tasks than \oscar.  But, recall INLP showed almost no effect on debiasing with RoBERTa (Table \ref{tbl: bias extrinsic}); we suspect this is because most of the effect of the projection are erased in the fine-tuning step (recall INLP is not obviously differentiable, so it is just applied once before fine-tuning), and overall INLP has a minimal effect when used with contextualized embeddings.

\subsection{TPR-Gap-RMS}

In Table~\ref{tbl:rms_diff}, we show the TPR-Gap-RMS metric as used in~\cite{ravfogel2020null}, which is an aggregated measurement of gender bias score over professions. Lower scores imply a lesser gendered bias in professions.
We refer readers to~\cite{ravfogel2020null} for detailed definition.
We follow the same experiment steps, except that we apply different debiasing algorithms to the input word embeddings (instead of the CLS token). This allows us to compare debiasing methods with static use of contextualized embeddings (i.e., without fine-tuning).
We see that RoBERTa and RoBERTa HD perform on par while both linear projection and iterative projection methods and our method \oscar perform close to each other.

\subsection{Standard Tests for Word Embedding  Quality}

Non contextual word embeddings like GloVe or word2vec are characterized by their ability to capture semantic information reflected by valid similarities between word pairs and the ability to complete analogies. These properties reflect the quality of the word embedding obtained and should not be diminished post debiasing. We ascertain that in Table \ref{tbl: intrinsic quality tests}. We use a standard word similarity test \cite{wsim} and an analogy test \cite{wordtovec} which measure these properties in word embeddings across our baseline GloVe model and all the debiased GloVe models. All of them perform similarly to the baseline GloVe model, thus indicating that the structure of the embedding has been preserved. While this helps in the preliminary examination of the retention of information in the embeddings, these tests do not contain a large number of sensitive gender comparisons or word pairs. It is thus, not sufficient to claim using this that bias has been removed or that valid gender associations have been retained, leading to the requirement of methods described in the paper and methods in other work \cite{Bias1,ravfogel2020null,debias,Caliskan183} for the same. 

\subsection{Gendered Names and Debiasing}
All these methods primarily use the words ``he" and ``she"  to determine the gender subspace, though both hard debiasing and INLP use other predetermined sets of gendered words to guide the process of both debiasing and retention of some gendered information.

Gendered names too have been seen to be good at helping capture the gender subspace \cite{Bias1}. We compare in Table \ref{tbl : information lost, intrinsic, names} the correctly gendered information retention when debiasing is done using projection or correction. We represent simple projection, hard debiasing and INLP by simple projection since it is the core debiasing step used by all three. Both rows have been debiaised using the gender subspace determined using most common gendered names in Wikipedia (listed in Appendix) and correction uses in addition, the same occupation subspace as used in Table \ref{tbl : information lost, intrinsic}. Each value is again a WEAT* calculation where the two sets of words (X and Y) being compared against are kept constant as the same in Table \ref{tbl : information lost, intrinsic}. The first column of this table thus represents the association with the subspace determined by 'he - she' and correction results in a higher association, thus implying that more correctly gendered information retained. We see a similar pattern in columns 2 and 3 which represent other gendered words and gendered names sans the names used to determine the subspace used for debiasing. That correction fares significantly better even among other gendered names reflects the higher degree of precision of information removal and retention.

\section{Broader Impact}
\label{sec:broader-impacts}

Biases in language representation are pernicious with wide ranging detrimental effects in machine learning applications. The propagation of undesirable and stereotypical associations learned from data into decisions made by language models maintains the vicious cycle of biases. Combating biases before deploying representations is thus extremely vital. But this poses its own challenges. Word embeddings capture a lot of information implicitly in relatively few dimensions. These implicit associations are what make them state-of-the-art at tackling different language modeling tasks. Breaking down of these associations for bias rectification, thus, has to be done in a very calculated manner so as to not break down the structure of the embeddings. Our method, \oscar's surgical manner of rectifying associations helps integrate both these aspects, allowing it to have a more holistic approach at making word embeddings more usable.  Moreover, it is computationally lightweight and is differentiable, making it simple to apply adaptively at the time of analysis without extensive retraining.  

We envision that this method can be used for all different types of biased associations (age - ability, religion - virtue perception, etc.). Since it only decouples specific associations, not a lot of other components of each of these features are changed or lost.  

Moreover, we believe these techniques should extend in a straight-forward way to other distributed, vectorized representations that can exhibit biases, such as in images, graphs, or spatial analysis.  

\section{Discussion}
Debiasing word embeddings is a very nuanced requirement for making embeddings suitable to be used in different tasks. It is not possible to have exactly one operation that works on all different embeddings and identifies and subsequently reduces all different biases it has significantly.
What is more important to note is that this ability is not beneficial in every task uniformly, i.e., not all tasks require bias mitigation to the same degree. Further, the debiasing need not apply to all word groups as generally. Disassociating only biased associations in a continuous manner is what needs to be achieved. Hence, having the ability to debias embeddings specifically for a given scenario or task with respect to specific biases is extremely advantageous. 

Our method of orthogonal correction is easy to adapt to different types of biased associations, such as the good-bad notions attached to different races~\cite{Caliskan183,Crenshaw} or religions~\cite{dev2019measuring}, etc. Creating metrics is harder with not as many words to create templates out of, making a comprehensive evaluation of bias reduction or information retention harder in these types of biases. Though we see the correction method being widely applicable to other associations and embedding types, 
we leave that for future exploration.


\begin{figure}
\centering 
\includegraphics[width=0.6\textwidth]{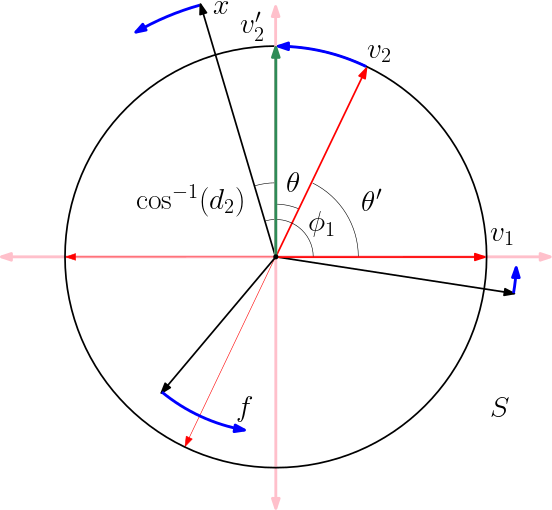}
\caption{\label{fig:correction} Illustration of the correction operation $f$ in the subspace $S$.} 
\vspace{-3mm}
\end{figure}

\begin{table}[]
\caption{Comparison of bias expressed when using two different commonly used datasets - SNLI and MNLI - for natural language inference.}	
	\centering
\small
\begin{tabular}{c||cccc}
	\hline
	Embedding & Net Neutral & Fraction Neutral & Threshold (0.5) &Threshold (0.7)   \\
	\hline
	GloVe (SNLI) & 0.321 & 0.296 & 0.186 & 0.027 \\
	GloVe (MNLI) & 0.072 & 0.004 & 0.0 & 0.0\\
	\hline
	RoBERTa (SNLI) & 0.338 & 0.329 & 0.316 & 0.139 \\
	RoBERTa (MNLI) & 0.014 & 0.002 & 0.0 & 0.0 \\
	\hline
\end{tabular}

\label{tbl : snl vs mnli}
\end{table}

\begin{table}[]
	\caption{Intrinsic Bias Measurement: Gender bias contained by embeddings. There are 3 WEAT tests (a score closer to 0 is better) results in this table : 
	     (1) with stereotypically gendered occupation words, 
	     (2) with work versus home related words, and 
	     (3) math versus art related words. 
	     ECT also measures bias and a higher score implies less bias \cite{Bias1}.}
	\centering

	\begin{tabular}{l||cccc}
		\hline
		Embedding & WEAT (1)& WEAT (2) & WEAT (3) & ECT \\
		\hline
		GloVe & 1.768 & 0.535 & 0.788 & 0.778 \\
		GloVe LP & 0.618 & 0.168 & 0.282 & 0.982\\
		GloVe HD & 0.241 & 0.157 & 0.273 & 0.942 \\
		GloVe INLP & 0.495 & 0.117 & 0.192 & 0.844 \\
		GloVe \oscar & 0.235 & 0.170 & 0.307 & 0.980\\
		\hline
	\end{tabular}

	\label{tbl: weat}
\end{table}

\begin{table}[]
	\caption{Extrinsic Bias Measurement: Gender bias expressed downstream by GloVE and RoBERTa embeddings as observed using NLI as a probe for bias measurement.}
	\centering

	\begin{tabular}{l||cccc}
		\hline
		Embedding & Net Neutral & Fraction Neutral & Dev & Test \\
		\hline
		GloVe & 0.321 & 0.296  & 0.879 &	0.873	\\
		GloVe LP  & 0.382 & 0.397  & 0.879	& 0.871\\
		Glove HD & 0.347 & 0.327  & 0.834 & 0.833\\
		Glove INLP & 0.499 & 0.539 & 0.864 & 0.859\\
		GloVe \oscar  & 0.400 & 0.414 & 0.872 & 0.869\\
		\hline
	
		

		RoBERTa  & 0.364 & 0.329  & 0.917 & 0.913\\
		RoBERTa LP & 0.481 & 0.502 & 0.916 & 0.911\\
		RoBERTa HD & 0.403 & 0.393 &  0.916 & 0.908\\
		RoBERTa INLP & 0.366 & 0.334 &  0.915 & 0.910\\
		RoBERTa  \oscar & 0.474 & 0.498 & 0.915 & 0.910\\
		\hline
	\end{tabular}

	\label{tbl: bias extrinsic}
\end{table}

\begin{table}[]
\vspace{1cm}
	\caption{Intrinsic Measurement of Information Retained: using \weatS; Larger scores better, as they imply more correctly gendered information expressed.}
	\centering

	\resizebox{\columnwidth}{!}{\begin{tabular}{l||ccc}
		\hline
		Embedding & WEAT*(he - she) & WEAT*(gendered words) & WEAT*(gendered names) \\
		\hline
		GloVe & 1.845  & 1.856  & 1.874\\
		GloVe LP & 0.385  & 1.207  & 1.389\\
		GloVe HD & 1.667 & 1.554  & 0.822\\
		GloVe INLP & 0.789 &  1.368 & 0.873\\
		GloVe \oscar & 1.361  & 1.396 & 1.543\\
		\hline
	\end{tabular}}

	\label{tbl : information lost, intrinsic}
\end{table}

\begin{table}[]
\vspace{1cm}
	\caption{Extrinsic Measurement of Information Retained: using \cnli. Higher scores are better.}
	\centering

	\resizebox{\columnwidth}{!}{\begin{tabular}{l||cccccc}
		\hline
		Embedding &Net Entail & Fraction Entail & Net Contradict & Fraction Contradict & Dev & Test\\
		\hline
		GloVe & 0.895 & 0.967 & 0.840 & 0.888 & 0.879 & 0.873\\
		GloVe LP & 0.810 & 0.865  & 0.715 & 0.713 & 0.879 & 0.871\\

		GloVe HD & 0.668 & 0.891  &0.544 & 0.752 & 0.834 & 0.833 \\
		GloVe INLP & 0.748 & 0.793 & 0.624 & 0.634 & 0.864 & 0.859\\
		GloVe \oscar & 0.845 & 0.911 & 0.747 & 0.759 & 0.872 & 0.869\\
		\hline
		RoBERTa & 0.966 & 0.991 & 0.972 & 0.977 & 0.917 & 0.913   \\
		RoBERTa LP & 0.921 & 0.967 & 0.902 & 0.905 & 0.916 & 0.911 \\
		RoBERTa HD & 0.846 & 0.871 & 0.824 & 0.828 & 0.916 & 0.908\\
		RoBERTa INLP & 0.943 & 0.969 & 0.970 & 0.955 & 0.915 & 0.910\\
		RoBERTa \oscar & 0.963 & 0.979 & 0.923 & 0.935 & 0.915 & 0.910\\
	
		\hline
	\end{tabular}}

	\label{tbl : information lost, extrinsic}
\end{table}

\begin{table}[]
\vspace{1cm}
	\caption{TPR-Gap-RMS metric of different debiasing method when applied on word embeddings.}
	\centering
	\small
	\resizebox{\columnwidth}{!}{\begin{tabular}{l|ccccc}
		\hline
		\small{Method} & \small{RoBERTa} & \small{RoBERTa LP} & \small{RoBERTa HD} & \small{RoBERTa INLP} & \small{RoBERTa OSCaR}\\
		\hline
		\small{TPR-Gap-RMS} & \small{0.19} & \small{0.15} & \small{0.18} & \small{0.13} & \small{0.14} \\
		\hline
	\end{tabular}}

	\label{tbl:rms_diff}
\end{table}

\begin{table}[]
\vspace{1cm}
	\caption{Intrinsic Standard Information Tests: These standard tests evaluate the amount of overall coherent associations in word embeddings. WSim is a word similarity test and Google Analogy is a set of analogy tests. }
	\centering

	\begin{tabular}{c||ccccc}
		\hline
		Embedding & GloVe & GloVe LP & GloVe \oscar & GloVe INLP  & GloVe HD\\
		\hline
		
		WSim & 0.697 & 0.693 & 0.693 & 0.686 & 0.695\\
		GAnalogy & 0.674 & 0.668 & 0.670 & 0.663 & 0.672\\
		\hline
	\end{tabular}

	\label{tbl: intrinsic quality tests}
\end{table}

\begin{table}[]
	\caption{Correctly gendered information contained by embeddings; Larger scores better as they imply more correctly gendered information expressed.}
	\centering
	\resizebox{\columnwidth}{!}{\begin{tabular}{c||c||cc}
		\hline
		Embedding & WEAT*(he - she) & WEAT*(gendered words) & WEAT*(gendered names) \\
		\hline
		GloVe Proj Names & 1.778  & 1.705 & 1.701\\
		GloVe Corr Names & 1.847 & 1.857 & 1.875\\
		\hline
	\end{tabular}}

	\label{tbl : information lost, intrinsic, names}
\end{table}

\chapter{Conclusion}

Distributed representations such as word vectors bring down the costs of computation as well as storage. The trade off however is the interpretability of the representation, which to a large extent is lost due to a lack of feature to dimension mapping. A conscious effort needs to be made to understand and interpret these representations and decouple any pernicious associations learnt from the underlying data before these representations are deployed in applications and tasks downstream. 

In this dissertation, we have explored ways to interpret the structure of distributed representations. We have further demonstrated how using a family of geometrically constrained, rigid operations we can untangle structured distributed embeddings and isolate concepts contained within. Utilizing the geometry of the embeddings alone, we achieve a multitude of tasks such as boosting of performance of these embeddings on relevant standardized tasks and bias identification, quantification and subsequent reduction in word representations. We have also seen how most of these functions are applicable to non language, noisy embeddings as well.

We also note the many challenges, from understanding what constraints and trade-offs are apt when untangling concept subspaces to questions about residual information. There are also intersectional subspaces (such as race and gender in language representations) that need to be specifically addressed. There is also a need to develop probes and metrics to quantify and understand the scope of each of these problems. For instance, while there are tasks in NLP which measure the overall coherence of information contained by word representations, tasks to measure valid and invalid information with respect to specific concepts are not extensively developed or used. We hope that this work can help build towards answering some of these questions.

\numberofappendices = 1
\appendix       


\chapter{Word Lists}
We list here the different lists of words used throughout the dissertation. Each list has been used for different tasks, including subspace determination, template generation, or for evaluation of different metrics. The case of words used is important as word lists used in datasets have distinct words based on capitalization. All words used in our methods are in lower case unless otherwise specified, as reflected in the lists below.
\label{app: words}
\section{Gendered Words for Subspace Determination}

The gender subspace, unless otherwise specified in that specific chapter or section, has been determined using:
he, she

\section{Occupations for Subspace Determination }

\noindent scientist,
doctor,
nurse,
secretary,
maid,
dancer,
cleaner,
advocate,
player,
banker

\section{Names Used for Gender Bias Determination}
\noindent Male : john, william, george, liam, andrew, michael, louis, tony, scott, jackson  \\
Female :  mary, victoria, carolina, maria, anne, kelly, marie, anna, sarah, jane  \\

\section{Names Used for Racial Bias Determination}
\label{app: names race}
\noindent European American :  brad, brendan, geoffrey, greg, brett, matthew, neil, todd, nancy, amanda, emily, rachel  \\
African American : darnell, hakim, jermaine, kareem, jamal, leroy, tyrone, rasheed, yvette, malika, latonya, jasmine  \\
Hispanic :  alejandro, pancho, bernardo, pedro, octavio, rodrigo, ricardo, augusto, carmen, katia, marcella , sofia  \\

\vspace{-0.75cm}
\section{Names Used for Age-Related Bias Determination}
\label{app: age}
\noindent Aged : ruth, william, horace, mary, susie, amy, john, henry, edward, elizabeth  \\
Youth :  taylor, jamie, daniel, aubrey, alison, miranda, jacob, arthur, aaron, ethan\\

\section{Demonyms for Subspace Determination}
  american, 
  chinese, 
  egyptian, 
  french, 
  german, 
  korean, 
  pakistani, 
  spanish

\section{Adherents for Subspace Determination}
  atheist, 
  baptist, 
  catholic, 
  christian, 
  hindu, 
  methodist, 
  protestant, 
  shia

\section{Word Pairs Used for Flipping}
\label{app : flipping}
\noindent actor actress\\
author authoress\\
bachelor spinster\\
boy girl\\
brave squaw\\
bridegroom bride\\
brother sister\\
conductor conductress\\
count countess\\
czar czarina\\
dad mum\\
daddy mummy\\
duke duchess\\
emperor empress\\
father mother\\
father-in-law mother-in-law\\
fiance fiancee\\
gentleman lady\\
giant giantess\\
god goddess\\
governor matron\\
grandfather grandmother\\
grandson granddaughter\\
he she\\
headmaster headmistress\\
heir heiress\\
hero heroine\\
him her\\
himself herself\\
host hostess\\
hunter huntress\\
husband wife\\
king queen\\
lad lass\\
landlord landlady\\
lord lady\\
male female\\
man woman\\
manager manageress\\
manservant maidservant\\
masseur masseuse\\
master mistress\\
mayor mayoress\\
milkman milkmaid\\
millionaire millionairess\\
monitor monitress\\
monk nun\\
mr mrs\\
murderer murderess\\
nephew niece\\
papa mama\\
poet poetess\\
policeman policewoman\\
postman postwoman\\
postmaster postmistress\\
priest priestess\\
prince princess\\
prophet prophetess\\
proprietor proprietress\\\
shepherd shepherdess\\
sir madam\\
son daughter\\
son-in-law daughter-in-law\\
step-father step-mother\\
step-son step-daughter\\
steward stewardess\\
sultan sultana\\
tailor tailoress\\
uncle aunt\\
usher usherette\\
waiter waitress\\
washerman washerwoman\\
widower widow\\
wizard witch\\

\section{Occupation Words for EQT and ECT}
\label{sec : occupation words}
detective,
ambassador,
coach,
officer,
epidemiologist,
rabbi,
ballplayer,
secretary,
actress,
manager,
scientist,
cardiologist,
actor,
industrialist,
welder,
biologist,
undersecretary,
captain,
economist,
politician,
baron,
pollster,
environmentalist,
photographer,
mediator,
character,
housewife,
jeweler,
physicist,
hitman,
geologist,
painter,
employee,
stockbroker,
footballer,
tycoon,
dad,
patrolman,
chancellor,
advocate,
bureaucrat,
strategist,
pathologist,
psychologist,
campaigner,
magistrate,
judge,
illustrator,
surgeon,
nurse,
missionary,
stylist,
solicitor,
scholar,
naturalist,
artist,
mathematician,
businesswoman,
investigator,
curator,
soloist,
servant,
broadcaster,
fisherman,
landlord,
housekeeper,
crooner,
archaeologist,
teenager,
councilman,
attorney,
choreographer,
principal,
parishioner,
therapist,
administrator,
skipper,
aide,
chef,
gangster,
astronomer,
educator,
lawyer,
midfielder,
evangelist,
novelist,
senator,
collector,
goalkeeper,
singer,
acquaintance,
preacher,
trumpeter,
colonel,
trooper,
understudy,
paralegal,
philosopher,
councilor,
violinist,
priest,
cellist,
hooker,
jurist,
commentator,
gardener,
journalist,
warrior,
cameraman,
wrestler,
hairdresser,
lawmaker,
psychiatrist,
clerk,
writer,
handyman,
broker,
boss,
lieutenant,
neurosurgeon,
protagonist,
sculptor,
nanny,
teacher,
homemaker,
cop,
planner,
laborer,
programmer,
philanthropist,
waiter,
barrister,
trader,
swimmer,
adventurer,
monk,
bookkeeper,
radiologist,
columnist,
banker,
neurologist,
barber,
policeman,
assassin,
marshal,
waitress,
artiste,
playwright,
electrician,
student,
deputy,
researcher,
caretaker,
ranger,
lyricist,
entrepreneur,
sailor,
dancer,
composer,
president,
dean,
comic,
medic,
legislator,
salesman,
observer,
pundit,
maid,
archbishop,
firefighter,
vocalist,
tutor,
proprietor,
restaurateur,
editor,
saint,
butler,
prosecutor,
sergeant,
realtor,
commissioner,
narrator,
conductor,
historian,
citizen,
worker,
pastor,
serviceman,
filmmaker,
sportswriter,
poet,
dentist,
statesman,
minister,
dermatologist,
technician,
nun,
instructor,
alderman,
analyst,
chaplain,
inventor,
lifeguard,
bodyguard,
bartender,
surveyor,
consultant,
athlete,
cartoonist,
negotiator,
promoter,
socialite,
architect,
mechanic,
entertainer,
counselor,
janitor,
firebrand,
sportsman,
anthropologist,
performer,
crusader,
envoy,
trucker,
publicist,
commander,
professor,
critic,
comedian,
receptionist,
financier,
valedictorian,
inspector,
steward,
confesses,
bishop,
shopkeeper,
ballerina,
diplomat,
parliamentarian,
author,
sociologist,
photojournalist,
guitarist,
butcher,
mobster,
drummer,
astronaut,
protester,
custodian,
maestro,
pianist,
pharmacist,
chemist,
pediatrician,
lecturer,
foreman,
cleric,
musician,
cabbie,
fireman,
farmer,
headmaster,
soldier,
carpenter,
substitute,
director,
cinematographer,
warden,
marksman,
congressman,
prisoner,
librarian,
magician,
screenwriter,
provost,
saxophonist,
plumber,
correspondent,
organist,
baker,
doctor,
constable,
treasurer,
superintendent,
boxer,
physician,
infielder,
businessman,
protege\

\section{Words Lists for Template Generation}
\label{app : words templates}

We use Natural Language Inference (NLI) as a probe to measure societal biases expressed by text representations. For the task, we generate templates methodically.
We keep most word lists the same for template generation in both Chapters \ref{chap: bias paper 2} and \ref{chap: bias paper 3}.
For occupations, we change the list to remove those words that we use for subspace determination of occupations. This creates a test/train split for our experiments. 
For \cnli templates, we also modify the gendered word lists to create the word lists used in premise/hypodissertation for the entail and contradict templates.

\noindent \textbf{Gendered words for Neutral Templates }

\noindent Male: guy, gentleman, man

\noindent Female: girl,  lady, woman

\noindent \textbf{Gendered words for Entail and Contradict Templates}

\noindent Male Premise: guy, father, grandfather, patriarch, king, gentleman

\noindent Male Hypodissertation: man, male

\noindent Female Premise: girl, mother, grandmother, matriarch, queen, lady

\noindent Female Hypodissertation: woman, female

\noindent \textbf{Occupations for Templates }
accountant,
actor,
actuary,
administrator,
advisor,
aide,
ambassador,
architect,
artist,
astronaut,
astronomer,
athlete,
attendant,
attorney,
author,
babysitter,
baker,
biologist,
broker,
builder,
butcher,
butler,
captain,
cardiologist,
caregiver,
carpenter,
cashier,
caterer,
chauffeur,
chef,
chemist,
clerk,
coach,
contractor,
cook,
cop,
cryptographer,
dentist,
detective,
dictator,
director,
driver,
ecologist,
economist,
editor,
educator,
electrician,
engineer,
entrepreneur,
executive,
farmer,
financier,
firefighter,
gardener,
general,
geneticist,
geologist,
golfer,
governor,
grocer,
guard,
hairdresser,
housekeeper,
hunter,
inspector,
instructor,
intern,
interpreter,
inventor,
investigator,
janitor,
jester,
journalist,
judge,
laborer,
landlord,
lawyer,
lecturer,
librarian,
lifeguard,
linguist,
lobbyist,
magician,
manager,
manufacturer,
marine,
marketer,
mason,
mathematician,
mayor,
mechanic,
messenger,
miner,
model,
musician,
novelist,
official,
operator,
optician,
painter,
paralegal,
pathologist,
pediatrician,
pharmacist,
philosopher,
photographer,
physician,
physicist,
pianist,
pilot,
plumber,
poet,
politician,
postmaster,
president,
principal,
producer,
professor,
programmer,
psychiatrist,
psychologist,
publisher,
radiologist,
receptionist,
reporter,
representative,
researcher,
retailer,
sailor,
salesperson,
scholar,
senator,
sheriff,
singer,
soldier,
spy,
statistician,
stockbroker,
supervisor,
surgeon,
surveyor,
tailor,
teacher,
technician,
trader,
translator,
tutor,
undertaker,
valet,
veterinarian,
violinist,
waiter,
warden,
warrior,
watchmaker,
writer,
zookeeper,
zoologist

\noindent \textbf{Adherents$_{\textrm{Test}}$}
  adventist, 
  anabaptist, 
  anglican, 
  buddhist, 
  confucian, 
  jain, 
  jew, 
  lutheran, 
  mormon, 
  muslim, 
  rastafarian, 
  satanist, 
  scientologist, 
  shinto, 
  sikh, 
  sunni,
  taoist

\noindent \textbf{Demonyms$_{\textrm{Test}}$}
  belarusian, 
  brazilian, 
  british, 
  canadian, 
  danish, 
  dutch, 
  emirati, 
  georgian, 
  greek, 
  indian, 
  iranian, 
  iraqi, 
  irish, 
  italian, 
  japanese, 
  libyan, 
  moroccan, 
  nigerian, 
  peruvian, 
  qatari, 
  russian, 
  saudi, 
  scottish, 
  swiss, 
  thai, 
  turkish, 
  ukrainian, 
  uzbekistani, 
  vietnamese, 
  welsh, 
  yemeni, 
  zambian

\noindent \textbf{Countries}
	america, 
	belarus, 
	brazil, 
	britain, 
	canada, 
	china, 
	denmark, 
	egypt, 
	emirates, 
	france, 
	georgia, 
	germany, 
	greece, 
	india, 
	iran, 
	iraq, 
	ireland, 
	italy, 
	japan, 
	korea, 
	libya, 
	morocco, 
	netherlands, 
	nigeria, 
	pakistan, 
	peru, 
	qatar, 
	russia, 
	scotland, 
	spain, 
	switzerland, 
	thailand, 
	turkey, 
	ukraine, 
	uzbekistan, 
	vietnam, 
	wales, 
	yemen, 
	zambia

\noindent \textbf{Polarity}
  awful, 
  dishonest, 
  dumb, 
  evil, 
  great, 
  greedy, 
  hateful, 
  honest, 
  humorless, 
  ignorant, 
  intelligent, 
  intolerant, 
  neat, 
  nice, 
  professional, 
  rude, 
  smart, 
  strong, 
  stupid, 
  terrible, 
  terrible, 
  ugly, 
  unclean, 
  unprofessional, 
  weak, 
  wise

\noindent \textbf{Religions}
 	adventism, 
 	anabaptism, 
 	anglicism, 
 	atheism, 
 	baptism, 
 	buddhism, 
 	catholicism, 
 	christianity, 
 	confucianism, 
 	hinduism, 
 	islam, 
 	jainism, 
 	judaism, 
 	lutheranism, 
 	methodism, 
 	mormonism, 
 	protestantism, 
 	rastafarianism, 
 	satanism, 
 	scientology, 
 	sikhism, 
 	sunnism, 
 	taoism

\noindent \textbf{Objects}
	apple, 
	apron, 
	armchair, 
	auto, 
	bagel, 
	banana, 
	bed, 
	bench, 
	beret, 
	blender, 
	blouse, 
	bookshelf, 
	breakfast, 
	brownie, 
	buffalo, 
	burger, 
	bus, 
	cabinet, 
	cake, 
	calculator, 
	calf, 
	camera, 
	cap, 
	cape, 
	car, 
	cart, 
	cat, 
	chair, 
	chicken, 
	clock, 
	coat, 
	computer, 
	costume, 
	cot, 
	couch, 
	cow, 
	cupboard, 
	dinner, 
	dog, 
	donkey, 
	donut, 
	dress, 
	dresser, 
	duck, 
	goat, 
	headphones, 
	heater, 
	helmet, 
	hen, 
	horse, 
	jacket, 
	jeep, 
	lamb, 
	lamp, 
	lantern, 
	laptop, 
	lunch, 
	mango, 
	meal, 
	muffin, 
	mule, 
	oven, 
	ox, 
	pancake, 
	peach, 
	phone, 
	pig, 
	pizza, 
	potato, 
	printer, 
	pudding, 
	rabbit, 
	radio, 
	recliner, 
	refrigerator, 
	ring, 
	roll, 
	rug, 
	salad, 
	sandwich, 
	shirt, 
	shoe, 
	sofa, 
	soup, 
	stapler, 
	SUV, 
	table, 
	television, 
	toaster, 
	train, 
	tux, 
	TV, 
	van, 
	wagon, 
	watch
	
\noindent \textbf{Verbs}
	ate, 
	befriended, 
	bought, 
	budgeted for, 
	called, 
	can afford, 
	consumed, 
	cooked, 
	crashed, 
	donated, 
	drove, 
	finished, 
	hated, 
	identified, 
	interrupted, 
	liked, 
	loved, 
	met, 
	owns, 
	paid for, 
	prepared, 
	saved, 
	sold, 
	spoke to, 
	swapped, 
	traded, 
	visited
	
\section{Word Lists for WEAT Tests}
WEAT tests use sets of words that we list here~\cite{Caliskan183}.
\label{app: words weat}

\noindent \textbf{Gendered Words }

\noindent Male Gendered: male, man, boy, brother, him, his, son

\noindent Female Gendered: female, woman, girl, sister, her, hers, daughter

\noindent \textbf{WEAT 1: Occupations}

\noindent Stereotypically male: engineer, lawyer, mathematician

\noindent Stereotypically female: receptionist, homemaker, nurse

\noindent \textbf{WEAT 2: Work versus Home}

\noindent Work terms: executive, management, professional, corporation, salary, office, business, career

\noindent Home terms: home, parents, children, family, cousins, marriage, wedding, relatives

\noindent \textbf{WEAT 3: Math versus Art}

\noindent Math terms: math, algebra, geometry, calculus, equations, computation, numbers, addition

\noindent Art terms: poetry, art, dance, literature, novel, symphony, drama, sculpture

\section{Word lists for WEAT* Tests}
For the WEAT* test developed in Chapter \ref{chap: bias paper 3}, we list the sets of words used.
\label{app : words weat*}

\noindent \textbf{Definitionally Gendered Words for WEAT*}

\noindent Male: 
actor,
bachelor,
bridegroom,
brother,
count,
czar,
dad,
daddy,
duke,
emperor,
father,
fiance,
gentleman,
giant,
god,
governor,
grandfather,
grandson,
headmaster,
heir,
hero,
host,
hunter,
husband,
king,
lad,
landlord,
lord,
male,
manager,
manservant,
masseur,
master,
mayor,
milkman,
millionaire,
monitor,
monk,
mr,
murderer,
nephew,
papa,
poet,
policeman,
postman,
postmaster,
priest,
prince,
shepherd,
sir,
stepfather,
stepson,
steward,
sultan,
uncle,
waiter,
washerman,
widower,
wizard,

\noindent Female:
actress,
spinster,
bride,
sister,
countess,
czarina,
mum,
mummy,
duchess,
empress,
mother,
fiancee,
lady,
giantess,
goddess,
matron,
grandmother,
granddaughter,
headmistress,
heiress,
heroine,
hostess,
huntress,
wife,
queen,
lass,
landlady,
lady,
female,
manageress,
maidservant,
masseuse,
mistress,
mayoress,
milkmaid,
millionairess,
monitress,
nun,
mrs,
murderess,
niece,
mama,
poetess,
policewoman,
postwoman,
postmistress,
priestess,
princess,
shepherdess,
madam,
stepmother,
stepdaughter,
stewardess,
sultana,
aunt,
waitress,
washerwoman,
widow,
witch







\noindent \textbf{Gendered Names for WEAT* }

\noindent There are generated using data curated by social security in USA (\url{https://www.ssa.gov/oact/babynames/}).  We take the top 1000 gendered names in each gender which are also among the top 100K most frequent words in Wikipedia (to ensure robustly embedded words). 

\noindent Male:
liam,
noah,
william,
james,
logan,
benjamin,
mason,
elijah,
oliver,
jacob,
lucas,
michael,
alexander,
ethan,
daniel,
matthew,
aiden,
henry,
joseph,
jackson,
samuel,
sebastian,
david,
carter,
wyatt,
jayden,
john,
owen,
dylan,
luke,
gabriel,
anthony,
isaac,
grayson,
jack,
julian,
levi,
christopher,
joshua,
andrew,
lincoln,
mateo,
ryan,
nathan,
aaron,
isaiah,
thomas,
charles,
caleb,
josiah,
christian,
hunter,
eli,
jonathan,
connor,
landon,
adrian,
asher,
cameron,
leo,
theodore,
jeremiah,
hudson,
robert,
easton,
nolan,
nicholas,
ezra,
colton,
angel,
jordan,
dominic,
austin,
ian,
adam,
elias,
jose,
ezekiel,
carson,
evan,
maverick,
bryson,
jace,
cooper,
xavier,
parker,
roman,
jason,
santiago,
chase,
sawyer,
gavin,
leonardo,
jameson,
kevin,
bentley,
zachary,
everett,
axel,
tyler,
micah,
vincent,
weston,
miles,
wesley,
nathaniel,
harrison,
brandon,
cole,
declan,
luis,
braxton,
damian,
silas,
tristan,
ryder,
bennett,
george,
emmett,
justin,
kai,
max,
diego,
luca,
carlos,
maxwell,
kingston,
ivan,
maddox,
juan,
ashton,
rowan,
giovanni,
eric,
jesus,
calvin,
abel,
king,
camden,
amir,
blake,
alex,
brody,
malachi,
emmanuel,
jonah,
beau,
jude,
antonio,
alan,
elliott,
elliot,
waylon,
xander,
timothy,
victor,
bryce,
finn,
brantley,
edward,
abraham,
patrick,
grant,
hayden,
richard,
miguel,
joel,
gael,
tucker,
rhett,
avery,
steven,
graham,
jasper,
jesse,
matteo,
dean,
preston,
august,
oscar,
jeremy,
alejandro,
marcus,
dawson,
lorenzo,
messiah,
zion,
maximus,
river,
zane,
mark,
brooks,
nicolas,
paxton,
judah,
emiliano,
bryan,
kyle,
myles,
peter,
charlie,
kyrie,
thiago,
brian,
kenneth,
andres,
lukas,
aidan,
jax,
caden,
milo,
paul,
beckett,
brady,
colin,
omar,
bradley,
javier,
knox,
jaden,
barrett,
israel,
matias,
jorge,
zander,
derek,
holden,
griffin,
arthur,
leon,
felix,
remington,
jake,
killian,
clayton,
sean,
riley,
archer,
legend,
erick,
enzo,
corbin,
francisco,
dallas,
emilio,
gunner,
simon,
andre,
walter,
damien,
chance,
phoenix,
colt,
tanner,
stephen,
tobias,
manuel,
amari,
emerson,
louis,
cody,
finley,
martin,
rafael,
nash,
beckham,
cash,
reid,
theo,
ace,
eduardo,
spencer,
raymond,
maximiliano,
anderson,
ronan,
lane,
cristian,
titus,
travis,
jett,
ricardo,
bodhi,
gideon,
fernando,
mario,
conor,
keegan,
ali,
cesar,
ellis,
walker,
cohen,
arlo,
hector,
dante,
garrett,
donovan,
seth,
jeffrey,
tyson,
jase,
desmond,
gage,
atlas,
major,
devin,
edwin,
angelo,
orion,
conner,
julius,
marco,
jensen,
peyton,
zayn,
collin,
dakota,
prince,
johnny,
cruz,
hendrix,
atticus,
troy,
kane,
edgar,
sergio,
kash,
marshall,
johnathan,
romeo,
shane,
warren,
joaquin,
wade,
leonel,
trevor,
dominick,
muhammad,
erik,
odin,
quinn,
dalton,
nehemiah,
frank,
grady,
gregory,
andy,
solomon,
malik,
rory,
clark,
reed,
harvey,
jay,
jared,
noel,
shawn,
fabian,
ibrahim,
adonis,
ismael,
pedro,
leland,
malcolm,
alexis,
porter,
sullivan,
raiden,
allen,
ari,
russell,
princeton,
winston,
kendrick,
roberto,
lennox,
hayes,
finnegan,
nasir,
kade,
nico,
emanuel,
landen,
moises,
ruben,
hugo,
abram,
adan,
khalil,
augustus,
marcos,
philip,
phillip,
cyrus,
esteban,
albert,
bruce,
lawson,
jamison,
sterling,
damon,
gunnar,
luka,
franklin,
ezequiel,
pablo,
derrick,
zachariah,
cade,
jonas,
dexter,
remy,
hank,
tate,
trenton,
kian,
drew,
mohamed,
dax,
rocco,
bowen,
mathias,
ronald,
francis,
matthias,
milan,
maximilian,
royce,
skyler,
corey,
drake,
gerardo,
jayson,
sage,
benson,
moses,
rhys,
otto,
oakley,
armando,
jaime,
nixon,
saul,
scott,
ariel,
enrique,
donald,
chandler,
asa,
eden,
davis,
keith,
frederick,
lawrence,
leonidas,
aden,
julio,
darius,
johan,
deacon,
cason,
danny,
nikolai,
taylor,
alec,
royal,
armani,
kieran,
luciano,
omari,
rodrigo,
arjun,
ahmed,
brendan,
cullen,
raul,
raphael,
ronin,
brock,
pierce,
alonzo,
casey,
dillon,
uriel,
dustin,
gianni,
roland,
kobe,
dorian,
emmitt,
ryland,
apollo,
roy,
duke,
quentin,
sam,
lewis,
tony,
uriah,
dennis,
moshe,
braden,
quinton,
cannon,
mathew,
niko,
edison,
jerry,
gustavo,
marvin,
mauricio,
ahmad,
mohammad,
justice,
trey,
mohammed,
sincere,
yusuf,
arturo,
callen,
keaton,
wilder,
memphis,
conrad,
soren,
colby,
bryant,
lucian,
alfredo,
cassius,
marcelo,
nikolas,
brennan,
darren,
jimmy,
lionel,
reece,
ty,
chris,
forrest,
tatum,
jalen,
santino,
case,
leonard,
alvin,
issac,
bo,
quincy,
mack,
samson,
rex,
alberto,
callum,
curtis,
hezekiah,
briggs,
zeke,
neil,
titan,
julien,
kellen,
devon,
roger,
axton,
carl,
douglas,
larry,
crosby,
fletcher,
makai,
nelson,
hamza,
lance,
alden,
gary,
wilson,
alessandro,
ares,
bruno,
jakob,
stetson,
zain,
cairo,
nathanael,
byron,
harry,
harley,
mitchell,
maurice,
orlando,
kingsley,
trent,
ramon,
boston,
lucca,
noe,
jagger,
randy,
thaddeus,
lennon,
kannon,
kohen,
valentino,
salvador,
langston,
rohan,
kristopher,
yosef,
lee,
callan,
tripp,
deandre,
joe,
morgan,
reese,
ricky,
bronson,
terry,
eddie,
jefferson,
lachlan,
layne,
clay,
madden,
tomas,
kareem,
stanley,
amos,
kase,
kristian,
clyde,
ernesto,
tommy,
ford,
crew,
hassan,
axl,
boone,
leandro,
samir,
magnus,
abdullah,
yousef,
branson,
layton,
franco,
ben,
grey,
kelvin,
chaim,
demetrius,
blaine,
ridge,
colson,
melvin,
anakin,
aryan,
jon,
canaan,
dash,
zechariah,
alonso,
otis,
zaire,
marcel,
brett,
stefan,
aldo,
jeffery,
baylor,
talon,
dominik,
flynn,
carmelo,
dane,
jamal,
kole,
enoch,
kye,
vicente,
fisher,
ray,
fox,
jamie,
rey,
zaid,
allan,
emery,
gannon,
rodney,
sonny,
terrance,
augustine,
cory,
felipe,
aron,
jacoby,
harlan

\noindent Female:
emma,
olivia,
ava,
isabella,
sophia,
mia,
charlotte,
amelia,
evelyn,
abigail,
harper,
emily,
elizabeth,
avery,
sofia,
ella,
madison,
scarlett,
victoria,
aria,
grace,
chloe,
camila,
penelope,
riley,
layla,
lillian,
nora,
zoey,
mila,
aubrey,
hannah,
lily,
addison,
eleanor,
natalie,
luna,
savannah,
brooklyn,
leah,
zoe,
stella,
hazel,
ellie,
paisley,
audrey,
skylar,
violet,
claire,
bella,
aurora,
lucy,
anna,
samantha,
caroline,
genesis,
aaliyah,
kennedy,
kinsley,
allison,
maya,
sarah,
adeline,
alexa,
ariana,
elena,
gabriella,
naomi,
alice,
sadie,
hailey,
eva,
emilia,
autumn,
quinn,
piper,
ruby,
serenity,
willow,
everly,
cora,
lydia,
arianna,
eliana,
peyton,
melanie,
gianna,
isabelle,
julia,
valentina,
nova,
clara,
vivian,
reagan,
mackenzie,
madeline,
delilah,
isla,
katherine,
sophie,
josephine,
ivy,
liliana,
jade,
maria,
taylor,
hadley,
kylie,
emery,
natalia,
annabelle,
faith,
alexandra,
ximena,
ashley,
brianna,
bailey,
mary,
athena,
andrea,
leilani,
jasmine,
lyla,
margaret,
alyssa,
arya,
norah,
kayla,
eden,
eliza,
rose,
ariel,
melody,
alexis,
isabel,
sydney,
juliana,
lauren,
iris,
emerson,
london,
morgan,
lilly,
charlie,
aliyah,
valeria,
arabella,
sara,
finley,
trinity,
jocelyn,
kimberly,
esther,
molly,
valerie,
cecilia,
anastasia,
daisy,
reese,
laila,
mya,
amy,
amaya,
elise,
harmony,
paige,
fiona,
alaina,
nicole,
genevieve,
lucia,
alina,
mckenzie,
callie,
payton,
eloise,
brooke,
mariah,
julianna,
rachel,
daniela,
gracie,
catherine,
angelina,
presley,
josie,
harley,
vanessa,
parker,
juliette,
amara,
marley,
lila,
ana,
rowan,
alana,
michelle,
malia,
rebecca,
summer,
sloane,
leila,
sienna,
adriana,
sawyer,
kendall,
juliet,
destiny,
diana,
hayden,
ayla,
dakota,
angela,
noelle,
rosalie,
joanna,
lola,
georgia,
selena,
june,
tessa,
maggie,
jessica,
remi,
delaney,
camille,
vivienne,
hope,
mckenna,
gemma,
olive,
alexandria,
blakely,
catalina,
gabrielle,
lucille,
ruth,
evangeline,
blake,
thea,
amina,
giselle,
melissa,
river,
kate,
adelaide,
vera,
leia,
gabriela,
zara,
jane,
journey,
miriam,
stephanie,
cali,
ember,
logan,
annie,
mariana,
kali,
haven,
elsie,
paris,
lena,
freya,
lyric,
camilla,
sage,
jennifer,
talia,
alessandra,
juniper,
fatima,
amira,
arielle,
phoebe,
ada,
nina,
samara,
cassidy,
aspen,
allie,
keira,
kaia,
amanda,
heaven,
joy,
lia,
laura,
lexi,
haley,
miranda,
kaitlyn,
daniella,
felicity,
jacqueline,
evie,
angel,
danielle,
ainsley,
dylan,
kiara,
millie,
jordan,
maddison,
alicia,
maeve,
margot,
phoenix,
heidi,
alondra,
lana,
madeleine,
kenzie,
miracle,
shelby,
elle,
adrianna,
bianca,
kira,
veronica,
gwendolyn,
esmeralda,
chelsea,
alison,
skyler,
magnolia,
daphne,
jenna,
kyla,
harlow,
annalise,
dahlia,
scarlet,
luciana,
kelsey,
nadia,
amber,
gia,
carmen,
jimena,
erin,
christina,
katie,
ryan,
viviana,
alexia,
anaya,
serena,
ophelia,
regina,
helen,
remington,
cadence,
royalty,
amari,
kathryn,
skye,
jada,
saylor,
kendra,
cheyenne,
fernanda,
sabrina,
francesca,
eve,
mckinley,
frances,
sarai,
carolina,
tatum,
lennon,
raven,
leslie,
winter,
abby,
mabel,
sierra,
april,
willa,
carly,
jolene,
rosemary,
selah,
renata,
lorelei,
briana,
celeste,
wren,
leighton,
annabella,
mira,
oakley,
malaysia,
edith,
maryam,
hattie,
bristol,
demi,
maia,
sylvia,
allyson,
lilith,
holly,
meredith,
nia,
liana,
megan,
justice,
bethany,
alejandra,
janelle,
elisa,
adelina,
myra,
blair,
charley,
virginia,
kara,
helena,
sasha,
julie,
michaela,
carter,
matilda,
henley,
maisie,
hallie,
priscilla,
marilyn,
cecelia,
danna,
colette,
elliott,
cameron,
celine,
hanna,
imani,
angelica,
kalani,
alanna,
lorelai,
macy,
karina,
aisha,
johanna,
mallory,
leona,
mariam,
karen,
karla,
beatrice,
gloria,
milani,
savanna,
rory,
giuliana,
lauryn,
liberty,
charli,
jillian,
anne,
dallas,
azalea,
tiffany,
shiloh,
jazmine,
esme,
elaine,
lilian,
kyra,
kora,
octavia,
irene,
kelly,
lacey,
laurel,
anika,
dorothy,
sutton,
julieta,
kimber,
remy,
cassandra,
rebekah,
collins,
elliot,
emmy,
sloan,
hayley,
amalia,
jemma,
jamie,
melina,
leyla,
wynter,
alessia,
monica,
anya,
antonella,
ivory,
greta,
maren,
alena,
emory,
cynthia,
alia,
angie,
alma,
crystal,
aileen,
siena,
zelda,
marie,
pearl,
reyna,
mae,
zahra,
jessie,
tiana,
armani,
lennox,
lillie,
jolie,
laney,
mara,
joelle,
rosa,
bridget,
liv,
aurelia,
clarissa,
elyse,
marissa,
monroe,
kori,
elsa,
rosie,
amelie,
eileen,
poppy,
royal,
chaya,
frida,
bonnie,
amora,
stevie,
tatiana,
malaya,
mina,
reign,
annika,
linda,
kenna,
faye,
reina,
brittany,
marina,
astrid,
briar,
teresa,
hadassah,
guadalupe,
rayna,
chanel,
lyra,
noa,
laylah,
livia,
ellen,
meadow,
ellis,
milan,
hunter,
princess,
nathalie,
clementine,
nola,
simone,
lina,
marianna,
martha,
louisa,
emmeline,
kenley,
belen,
erika,
lara,
amani,
ansley,
salma,
dulce,
nala,
natasha,
mercy,
penny,
ariadne,
deborah,
elisabeth,
zaria,
hana,
raina,
lexie,
thalia,
annabel,
christine,
estella,
adele,
aya,
estelle,
landry,
tori,
perla,
miah,
angelique,
romina,
ari,
jaycee,
kai,
louise,
mavis,
belle,
lea,
rivka,
calliope,
sky,
jewel,
paola,
giovanna,
isabela,
azariah,
dream,
claudia,
corinne,
erica,
milena,
alyson,
joyce,
tinsley,
whitney,
carolyn,
frankie,
andi,
judith,
paula,
amia,
hadlee,
rayne,
cara,
celia,
opal,
clare,
gwen,
veda,
alisha,
davina,
rhea,
noor,
danica,
kathleen,
lindsey,
maxine,
paulina,
nancy,
raquel,
zainab,
chana,
lisa,
heavenly,
patricia,
india,
paloma,
ramona,
sandra,
abril,
vienna,
rosalyn,
hadleigh,
barbara,
jana,
brenda,
casey,
selene,
adrienne,
aliya,
miley,
bexley,
joslyn,
zion,
breanna,
melania,
estrella,
ingrid,
jayden,
kaya,
dana,
legacy,
marjorie,
courtney,
holland

\section{Word Lists for HD}
\label{app : words HD}

\textbf{Gendered Words.} These are the dictionary defined gendered words used by the operation to determine what words are correctly gendered and which should be neutral in the embedding space. Here is the filtered version of the list used in our experiments as per our description in Section \ref{sec: debiasing methods} about the test/train word list split.

\noindent actress, actresses, aunt, aunts, bachelor, ballerina, barbershop, baritone, beard, beards, beau, bloke, blokes, boy, boyfriend, boyfriends, boyhood, boys, brethren, bride, brides, brother, brotherhood, brothers, bull, bulls, businessman, businessmen, businesswoman, chairman, chairwoman, chap, colt, colts, congressman, congresswoman, convent, councilman, councilmen, councilwoman, countryman, countrymen, czar, dad, daddy, dads, daughter, daughters, deer, diva, dowry, dude, dudes,  estrogen,  fathered, fatherhood, fathers, fella, fellas, females, feminism, fiance, fiancee, fillies, filly, fraternal, fraternities, fraternity, gal, gals, gelding, gentlemen, girlfriend, girlfriends, girls, goddess, godfather, granddaughter, granddaughters, grandma,grandmothers, grandpa, grandson, grandsons,  handyman, heiress, hen, hens, her, heroine, hers, herself, him, himself, his, horsemen, hostess, housewife, housewives, hubby, husband, husbands,  kings, lad, ladies, lads, lesbian, lesbians, lion, lions, ma, macho, maid, maiden, maids, males, mama,  mare, maternal, maternity, men, menopause, mistress, mom, mommy, moms, monastery, monk, monks,  motherhood, mothers, nephew, nephews, niece, nieces, nun, nuns, obstetrics,  pa, paternity, penis, prince, princes, princess, prostate, queens, salesman, salesmen, schoolboy, schoolgirl, semen, she, sir, sister, sisters, son, sons, sorority, sperm, spokesman, spokesmen, spokeswoman, stallion, statesman, stepdaughter, stepfather, stepmother, stepson, strongman, stud, studs, suitor, suitors, testosterone, uncle, uncles, uterus, vagina, viagra, waitress, widow, widower, widows, wife, witch, witches, wives, womb, women

\textbf{Equalized Words.} These gendered words are paired (one male, one female) and are equalized by the operation. Here is the filtered version of the list used in our experiments as per our description in Section \ref{sec: debiasing methods} about the test/train word list split. Each pair in this list is ``equalized''.

\noindent monastery convent\\ spokesman spokeswoman\\ priest nun\\ Dad Mom\\ Men Women\\ councilman councilwoman\\ grandpa grandma\\ grandsons granddaughters\\  testosterone estrogen\\ uncle aunt\\ wives husbands\\ Father Mother\\ Grandpa Grandma\\ He She\\ boys girls\\ brother sister\\ brothers sisters\\ businessman businesswoman\\ chairman chairwoman\\ colt filly\\ congressman congresswoman\\ dad mom\\ dads moms\\ dudes gals\\   fatherhood motherhood\\ fathers mothers\\ fella granny\\ fraternity sorority\\ gelding mare\\ gentlemen ladies\\  grandson granddaughter\\ himself herself\\ his her\\ king queen\\ kings queens\\  males females\\  men women\\ nephew niece\\ prince princess\\ schoolboy schoolgirl\\ son daughter\\ sons daughters

More details are available at: \url{//github.com/tolga-b/debiaswe}

\section{Word Lists for INLP}
\label{app : words INLP}

The Gendered Word List ($G$) for INLP consists of 1425 words found under \url{\texttt{https://github.com/Shaul1321/nullspace\_projection/blob/master/data/lists/}} as the list \texttt{gender\_specific\_full.json}. This list has been filtered of words used in generating templates (Supplement
\ref{app : words templates} and for WEAT (Supplement \ref{app: words weat}.

More details about their word lists and code is available at: \url{\texttt{https://github.com/Shaul1321/nullspace\_projection}}.
\nocite{*}
\bibliographystyle{abbrv}
\bibliography{icpbib}
\end {document}